\documentclass[11pt]{article} %, twocolumn]{article}

\usepackage[dvips]{graphics}
\DeclareGraphicsExtensions{.eps.gz,.eps,.epsi.gz,.epsi,.ps,.ps.gz}
\DeclareGraphicsRule{*}{ps}{*}{}
\usepackage{epsfig}
\usepackage{color}
\graphicspath{{figures/}} \textwidth=6.5in \textheight=9in

\usepackage{latexsym}
\usepackage{graphicx}
\usepackage{amsfonts}
\usepackage{amssymb}
\usepackage{mathrsfs}
\usepackage{verbatim}

%%%%% packages
%\pagestyle{empty}
%\pagestyle{headings} #to put numbers on top
%\usepackage{times}
%%\usepackage[dvips]{graphics}
%\def\BibTeX{{\rm B\kern-.05em{\sc i\kern-.025em b}\kern-.08em
%    T\kern-.1667em\lower.7ex\hbox{E}\kern-.125emX}}
%%\DeclareGraphicsExtensions{.eps.gz,.eps,.epsi.gz,.epsi,.ps,.ps.gz}
%%\DeclareGraphicsRule{*}{ps}{*}{}
%\usepackage[american]{babel}
%%\usepackage{psfig}
%\usepackage{psfrag}
%%\usepackage{color}
%%\graphicspath{{fig/}} %\textwidth=6.5in \textheight=9in
%\parindent=0.0cm
%\parskip=0.5cm
%\oddsidemargin=0.0in \evensidemargin=0.0in \topmargin=-0.4in
%\renewcommand{\baselinestretch}{1.1}
%\marginparwidth .3in
%\oddsidemargin  -0.3in
%\evensidemargin  -0.2in
%\marginparsep .3in
% above from rebecka
%%\usepackage{graphicx, amsmath, amssymb, latexsym}
\usepackage{amsmath}
\usepackage{latexsym}
\usepackage{mathrsfs} % for math script
\usepackage{amsfonts}
\usepackage{amssymb}

\usepackage{graphicx}
\usepackage{verbatim}
\usepackage{geometry}

%%%%% margins used on stat
%%\sloppy \hyphenpenalty=10000
%\pagestyle{myheadings} %\markright{}
%\renewcommand{\baselinestretch}{1.5}
%%\hfuzz=20truept

%%%% special math stuff %%%%
\newcommand{\bel}{\begin{eqnarray}\label}
\newcommand{\eel}{\end{eqnarray}}
%%% \cr is the same as \nonumber\\ at the end of line
\newcommand{\bes}{\begin{eqnarray*}}
\newcommand{\ees}{\end{eqnarray*}}

%%% steph stuff
\def\benu{\begin{enumerate}}
\def\eenu{\end{enumerate}}

\def\argmin{\mathop{\rm arg\, min}}

\def\real{{\mathbb{R}}}

\def\complex{\mathop{{\rm I}\kern-.58em\hbox{\rm C}}\nolimits}
\def\pa{\partial}

\def\rank{\hbox{rank}}

\def\sgn{\hbox{sgn}}\def\sgn{\hbox{\rm sgn}}

\def\Var{\hbox{Var}}

\def\supp{\hbox{supp}}

%%% letters with "\def" are sorted
\def\mathbold{\boldsymbol} %\def\mathbold{\mathbf}
%%% English, example
%\def\bc{\mathbold{c}}\def\chat{\widehat{c}}\def\ctil{\widetilde{c}}\def\cbar{{\overline c}}
%\def\hbc{{\widehat{\bc}}}\def\tbc{{\widetilde{\bc}}}
%\def\bC{\mathbold{C}}\def\hbC{{\widehat{\bC}}}\def\tbC{{\widetilde{\bC}}}
%\def\calC{{\cal C}}\def\scrC{{\mathscr C}}\def\Chat{\widehat{C}}\def\Ctil{{\widetilde C}}\def\Cbar{{\overline C}}

%\def\bfc{\mathbold{c}}\def\chat{\widehat{c}}\def\ctil{\widetilde{c}}\def\cbar{{\overline c}}
%\def\hbc{{\widehat{\bfc}}}\def\tbc{{\widetilde{\bfc}}}
%\def\bC{\mathbold{C}}\def\hbC{{\widehat{\bC}}}\def\tbC{{\widetilde{\bC}}}
%\def\calC{{\cal C}}\def\scrC{{\mathscr C}}\def\Chat{\widehat{C}}\def\Ctil{{\widetilde C}}\def\Cbar{{\overline C}}

\def\ba{\mathbold{a}}

\def\bb{\mathbold{b}}

\def\bD{\mathbold{D}}

\def\bI{\mathbold{I}}

\def\mtil{\widetilde{m}}

\def\scrM{{\mathscr M}}

\def\bN{\mathbold{N}}

\def\bP{\mathbold{P}}

\def\Shat{\widehat{S}}\def\Stil{{\widetilde S}}

\def\bu{\mathbold{u}}

\def\bU{\mathbold{U}}

\def\bv{\mathbold{v}}

\def\bV{\mathbold{V}}

\def\bW{\mathbold{W}}

\def\bx{\mathbold{x}}

\def\bX{\mathbold{X}}
\def\scrX{{\mathscr X}}

\def\by{\mathbold{y}}

%%% Greek

\def\bbeta{\mathbold{\beta}}\def\hbeta{\widehat{\beta}}
\def\tbeta{\widetilde{\beta}}
\def\hbbeta{{\widehat{\bbeta}}}\def\tbbeta{{\widetilde{\bbeta}}}

\def\bDelta{\mathbold{\Delta}}

\def\ep{\varepsilon}\def\eps{\epsilon}
\def\bep{\mathbold{\ep}}

\def\lam{\lambda}

\def\tnu{\widetilde{\nu}}

\def\hsigma{\widehat{\sigma}}

\def\bSigma{\mathbold{\Sigma}}

\usepackage{amsmath,amsthm,amssymb}
\usepackage{latexsym}
\usepackage{mathrsfs}
\usepackage{amsfonts}

\usepackage{fullpage}
\usepackage{color}

\newtheorem{theorem}{Theorem}
\newtheorem{lemma}{Lemma}
\newtheorem{proposition}{Proposition}
\newtheorem{corollary}{Corollary}
\newtheorem{remark}{Remark}
\newtheorem{definition}{Definition}
\newtheorem{assumption}{Assumption}

\def\argmin{\mathop{\rm arg\, min}}
\def\real{\mathop{{\rm I}\kern-.2em\hbox{\rm R}}\nolimits}

\newcommand{\R}{{\mathbb{R}}}
\newcommand{\obeta}{{\hbbeta^{o}}}
\newcommand{\drho}{{\dot\rho}}

\def\RE{\hbox{\rm RE}}
\def\CIF{\hbox{\rm CIF}}
\def\RIF{\hbox{\rm RIF}}

\title{A General Theory of Concave Regularization for High Dimensional Sparse Estimation Problems}
\author{Cun-Hui Zhang\thanks{Research partially supported by the
NSF Grants DMS 0906420, DMS-11-06753
and NSA Grant H98230-11-1-0205} \\
Department of Statistics\\ and Biostatistics\\
Rutgers University, NJ \\
{\it czhang@stat.rutgers.edu} \\
\and
Tong Zhang\thanks{Research partially supported by the following grants: AFOSR-10097389, NSA
-AMS 081024, NSF DMS-1007527, and NSF IIS-1016061} \\
Department of Statistics\\ and Biostatistics\\
%Statistics Department \\
Rutgers University, NJ\\
{\it tzhang@stat.rutgers.edu}}

\begin{document}
\date{}
\maketitle{}

\begin{abstract}
  Concave regularization methods provide natural procedures for sparse recovery. 
  However, they are difficult to analyze in the high dimensional setting. Only recently a  
  few sparse recovery results have been established for some specific local solutions obtained 
  via specialized numerical procedures. 
  Still, the fundamental relationship between these solutions such as whether they are 
  identical or their relationship to the global minimizer of the underlying nonconvex 
  formulation is unknown. 
  The current paper fills this conceptual gap by presenting a general theoretical framework showing that under appropriate conditions, the
  global solution of nonconvex regularization leads to desirable recovery performance; moreover, 
  under suitable conditions, the global solution corresponds to the unique sparse local solution, which can
  be obtained via different numerical procedures. 
  Under this unified framework, we present an overview of existing results and discuss their connections.
  The unified view of this  work leads to a more satisfactory 
  treatment of concave high dimensional sparse estimation procedures, and serves as guideline for developing
 further numerical procedures for concave regularization.
\end{abstract}

\section{Introduction}

Let $\bX$ be an $n \times p$ design matrix and $\by \in \R^n$ a response vector satisfying 
\begin{equation}
\by = \bX \bbeta  + \bep ,
\label{eq:noise-def}
\end{equation}
where $\bbeta \in \R^p$ is a target vector of regression coefficients and 
$\bep \in \R^n$ is a noise vector.  
This paper concerns the estimation of the value of $\bX\bbeta$, that of $\bbeta$, or its support set 
$\supp(\bbeta)$, where $\supp(\bb) := \{j: b_j \neq 0\}$ for any vector 
$\bb = (b_1,\ldots,b_p)^\top \in \R^p$.

We are interested in the high-dimensional case where $n$ and $p$ are both allowed to 
diverge, including the case of $p\gg n$. We assume that the target vector $\bbeta$ is sparse
in some sense; such as the $\ell_0$ sparsity $|\supp(\bbeta)|\le c_0n/\ln p$, or the capped-$\ell_1$ sparsity 
$\sum_{j=1}^p\min(1,|\beta_j/\sigma|\sqrt{n/\ln p}) \le c_0 n/\ln p$,  
where $\sigma$ is a certain noise level and $c_0$ is a fixed small constant. 
While we are mainly interested in the Gaussian noise $\bep\sim N(0, \sigma^2\bI_{n\times n})$ 
or zero-mean sub-Gaussian noise, the specific noise properties required in our analysis will be provided later.

We consider the following class of penalized least squares estimators 
\begin{equation}
\hbbeta := \argmin_{\bb \in \R^p}L_\lam(\bb), \qquad 
L_\lam(\bb) := \frac{1}{2n}\|\by -\bX\bb\|_2^2 + \sum_{j=1}^p \rho(b_j;\lam), 
\label{eq:hbbeta}
\end{equation}
where $\bb = (b_1,\ldots,b_p)^\top$ and $\rho(t;\lam)$
is a scalar regularization function with a certain regularization parameter $\lam>0$. 
%We focus on the situation that each scalar function $\rho(t;\lam)$ is concave and non-decreasing in $t \geq 0$, although
%some results are obtained under a weaker condition of subadditivity.
As an example, we may let $\rho(t; \lambda)=\lambda^2 I(t \neq 0)/2$, which corresponds to the
$\ell_0$ regularization problem. Here $I(\cdot)$ denotes $\{0,1\}$ valued indicator function.
Since $I(t \neq 0)$ is a discontinuous function at $t=0$, the corresponding $\ell_0$
optimization problem may be difficult to solve. In practice, one also looks at continuous
regularizers that approximate $\ell_0$ regularization, such as $\rho(t;\lam) = \min(\lam^2/2,\lam |t|)$. 
As we will show in the paper, sparse local solutions of such regularizers can be obtained using 
standard numerical procedures 
(such as gradient descent), and they are closely related to the global solution of (\ref{eq:hbbeta}).

\section{Survey of Existing Concave Regularization Results}
\label{sec:survey}
While this survey is not intended to be comprehensive, it presents a high-level view
of some important contributions to the area of concave regularization. 
We will discuss both methodological and analytical contributions. 

\subsection{Terminologies}
\label{sec:terminology}
The following notation is used throughout the paper. 
For any dimension $d$, bold face letters denote vectors and normal face their elements, 
e.g. $\bv=(v_1,\ldots,v_d)^\top$, with $\supp(\bv)$ being its support $\{j:v_j\neq 0\}\cap\{0,\ldots,d\}$. 
Capital bold face letters denote 
matrices, e.g. $\bX$ and $\bSigma$. The $\ell_q$ ``norm'' of $\bv$ is 
$\|\bv\|_q := \big(\sum_{j=1}^d |v_j|^q\big)^{1/q}$ for $0<q<\infty$, with the usual extension 
$\|\bv\|_0 := |\supp(\bv)|$ and $\|\bv\|_\infty := \max_{j\le d}|v_j|$.  
Design vectors, or columns of $\bX$, are denoted by $\bx_j$. 
For simplicity, we assume throughout the paper that 
the columns $\bX$ are normalized to 
\[
\|\bx_j\|_2=\sqrt{n} .
\]
This condition is not essential but it simplifies some notations.
For variable sets $A\subseteq\{1,\ldots,p\}$, 
$\bX_A=(\bx_j, j\in A)$ denotes the restriction of columns of $\bX$ to $A$,
and $\bb_A=(b_j, j\in A)^\top$ the restriction of vector $\bb\in\R^p$ to $A$. 
The maximum and minimum eigenvalues of matrix $\bSigma$ are denoted by 
$\lambda_{\max}(\bSigma)$ and $\lambda_{\min}(\bSigma)$. 
%The support set of the target vector $\bbeta$ in (\ref{eq:noise-def}) is denoted by $S=\supp(\bbeta)$. 

%We introduce the following terminologies so that results can be more easily compared.

\begin{definition} \label{def:terminology}
The following terminologies will be used to simplify discussion. 
\begin{itemize}
\item [(a)] The $\ell_0$ sparsity of $\bbeta$ means $\|\bbeta\|_0\le s^*$. 
To allow $\bbeta$ with many more components near zero, a weaker notion of 
capped-$\ell_1$ sparsity is $\sum_j\min(1,|\beta_j|/\lam_{univ})\le s^*$, 
where $\lam_{univ}=\sigma\sqrt{(2/n)\ln p}$ is the universal threshold level 
for a certain noise level $\sigma$. 
\item [(b)] A regularity condition on $\bX$ is a class $\scrX$ of (column-normalized) matrices that match a 
sparsity condition on $\bbeta$ to guarantee a desired result. Such a regularity condition can be stated as 
$\bX\in \scrX_{s^*}^{n\times p}$, with matrix classes 
$\scrX_{s^*}^{n\times p}\subseteq\R^{n\times p}$ indexed by $(n,p,s^*)$, 
where $s^*$ is the sparsity level of the matching regularity condition on $\bbeta$.  
Such a condition on $\bX$ is called an $\ell_2$ regularity condition 
(or simply $\ell_2$ regular) if the matrix classes $\scrX_{s^*}^{n\times p}$ are sufficiently large to satisfy 
the following condition:
\begin{itemize}
%\item For $p\le n$ and all sparsity levels $s^*$, 
%$\scrX_{s^*}^{n\times p}$ contains all $n \times p$ matrices of rank $p$. 
%\item There exists a certain constant $u^*>0$ such that for all $(n,p,s^*)$, $\scrX_{s^*}^{n\times p}$ 
%contains all $n \times p$ matrices 
%$\bX$ satisfying $\max_{j\le p}\|\bx_j\|^2/\lam_{\min}(\bX^\top \bX)\le u^*$. 
\item Given any $u_0\ge 1$, there exists a constant $c_0>0$ such that for all $0<\delta \le 1/e$ 
\bes
\inf_{\mu,n,p,s^*}\Big\{ \mu(Q^{-1}(\scrX^{n\times p}_{s^*})): \mu\in\scrM^{n\times p}_{u_0}, 
(s^*/n)\ln(p/\delta) \le c_0, \min(n,p,s^*)\ge 1\Big\} \ge 1-\delta, 
\ees 
where $\scrM^{n\times p}_{u_0}$ is the set of probability measures in $\R^{n\times p}$ 
under which the rows of $\R^{n\times p}$ are iid $N(0,\bSigma)$ for some $\bSigma$ 
with $\lambda_{\max}(\bSigma)/\lambda_{\min}(\bSigma) \leq u_0$ and 
identical diagonal elements, and 
$Q$ is the column normalization mapping given by 
$Q(\bX) = (\bx_jn^{1/2}/\|\bx_j\|_2, j\le p)$. 
\end{itemize}
\item [(c)] An estimator $\hbbeta$ is selection consistent if $\supp(\hbbeta)=\supp(\bbeta)$, and 
sign-consistent if $\sgn(\hbbeta)=\sgn(\bbeta)$, with the convention $\sgn(0)=0$ for the sign function. 
%Given a set regularity conditions in the regression model (\ref{eq:noise-def}),  
%an estimator $\hbbeta$ is selection consistent if 
%$P\{\supp(\hbbeta)=\supp(\bbeta)\}\to 1$, and sign-consistent if $P\{\sgn(\hbbeta)=\sgn(\bbeta)\}\to 1$, 
%with the convention $\sgn(0)=0$ for the sign function. Here the limit is taken as $n\to\infty$ 
%uniformly in $p$ allowed by the regularity conditions. 
%Regularity conditions in theorems in this paper always allow a range of $p$ up to 
%$p \le e^{a_0n}$ for a small $a_0>0$. 
\item [(d)] An estimator has the oracle property if
\bel{oracle-property}
\hbbeta= \obeta , \quad \obeta_S=(\bX_S^\top\bX_S)^{-1}\bX_S^\top\by , \;
\supp(\obeta) \subseteq S , 
\eel 
where $S=\supp(\bbeta)$. The estimator $\obeta$ is called the oracle LSE.
\end{itemize}
\end{definition}

\begin{remark}
The standard regularity condition for the classical low-dimensional statistical scenario of 
$p \le n$ is that the rank of $\bX$ is $p$. 
Definition~\ref{def:terminology} (b) generalizes this classical regularity condition to allow $p \gg n$. 
We may explicitly include the classical situation into the definition of $\ell_2$ regularity
(that is, require $\scrX_{s^*}^{n \times p}$ to contain all column-normalized $n \times p$ matrices 
of rank $p$) if we confine our discussion to fixed sample conditions. 
See %Remark \ref{remark-CIF-sparse-eigen} and 
the last paragraph of this subsection for more discussion. 
%with $p$-th singular value bounded away from zero); 
%and in such case all of our results still hold with minor modifications of theorem statements 
%to accommodate the scenario.
\end{remark}

\begin{remark}
  If we consider a sequence of models in (\ref{eq:noise-def}) with $n \to \infty$, then asymptotically
  an estimator has the oracle property (allowing statistical inference for all linear functionals of $\bbeta$) if 
  \[
  \hbox{$\sup_{\ba}$}\ P\big\{|\ba^\top(\hbbeta-\obeta)|^2 > \eps \Var(\ba^T\obeta)\big\}
  =o(1)\ \forall\eps>0,
  \]
  and this is a weaker requirement than (\ref{oracle-property}) because it allows
  $\hbbeta$ to converge only asymptotically to $\obeta$.
  While this work focuses on the stronger requirement (\ref{oracle-property}) that is easier to
  interpret in the finite sample situation, the weaker definition has been
  used in some previous asymptotic analysis.
\end{remark}

For $0<r\le 1$, the capped-$\ell_1$ sparsity condition holds for all vectors with 
$\|\bbeta\|_r\le R$ as long as $(R/\lam_{univ})^r\le s^*$. 

In the classical statistical scenario of $p <n$, a standard regularity condition on the 
design matrix $\bX$ is that the rank of $\bX$ is $p$.
Definition~\ref{def:terminology} (b) generalizes this classical regularity condition to $p \gg n$. 
For example, $\inf_{|A|\le 3s^*}\{\rank(\bX_A)/|A|\} = 1$ is $\ell_2$ regular. 
The $\ell_2$ notion allows an assessment of the strength of assumptions 
on $\bX$ by random matrix theory without repeating technical statements of more specialized 
conditions. Moreover, since the $\ell_2$ criterion is required to hold for 
$(s^*/n)\ln(p/\delta)\le c_0$, results based on $\ell_2$ regularity condition on $\bX$ 
and matching sparsity condition of $\bbeta$ must apply to the case of large $p$, 
including $p\gg n$. Since regularity conditions on $\bbeta$ and $\bX$ must work together 
to guarantee their consequences, for simplicity the sparsity level $s^*$ for $\ell_2$ regularity 
is always understood in the sequel as the $\ell_0$ or capped-$\ell_1$ sparsity level of $\bbeta$ 
given in Definition \ref{def:terminology} (a). 

Throughout the paper, $\bX$ and $\bbeta$ in (\ref{eq:noise-def}) are treated as deterministic. 
Since the $\ell_2$ criterion is about the size of $\scrX^{n\times p}_{s^*}$, it does not imply 
randomness of $\bX$. In fact, since the $\ell_2$ criterion is required to hold simultaneously for 
all $\mu\in \scrM^{n\times p}_{u_0}$ with the same $\scrX^{n\times p}_{s^*}$ in $\R^{n\times p}$, 
an $\ell_2$ regularity condition is weaker than the condition of a random $\bX$ 
with distribution $\mu(Q^{-1}(\cdot))$ for a fixed $\mu\in \scrM^{n\times p}_{u_0}$ 
and typically requires a more explicit specification of the matrix class 
$\scrX^{n\times p}_{s^*}$. We call the criterion $\ell_2$, since it depends only on the 
range of the spectrum (the smallest and largest eigenvalues) of $\bSigma$. 

%The $\ell_2$-regularity condition is defined on all deterministic design
%matrices $\bX$; it further requires that with large probability the condition holds for any 
%$\bX$ that is constructed by drawing $n$ random $p$ dimensional vectors from $N(0,\bSigma)$.

The rest of the subsection discusses different forms of $\ell_2$ conditions. 
Since the meaning of sparsity level is always clear in its proper context, 
for simplicity we will discuss design matrix conditions without 
explicitly referring to their sparsity levels.
%Different forms of $\ell_2$ conditions will be discussed at the end of this subsection.

%Since $\supp(\ba)\cap\supp(\bbeta)=\emptyset$ is allowed in (\ref{oracle-property}), 
%The oracle property implies selection consistency. On the other hand, (\ref{oracle-property}) holds 
%for the LSE $(\bX_{\Shat}^\top\bX_{\Shat})^{-1}\bX_{\Shat}^\top\by$ with $\Shat=\supp(\hbbeta)$ 
%when $\hbbeta$ is selection consistent. 

In what follows, we will briefly explain some $\ell_2$-regularity conditions appeared in the literature. 
Related conditions have been introduced first in the compressive sensing literature to analyze 
$\ell_1$-regularized recovery of a sparse $\bbeta$ from its random projection $\bX\bbeta$ 
with iid $N(0,1)$ entries in $\bX$. 
The most well-known of such conditions is the restricted isometry condition (RIP) introduced in \cite{CandesTao05}. 
In order to explain RIP,
we first define the lower and upper sparse eigenvalues as
\bel{sparse-eigenvalues}
\kappa_-(m) := \min_{\|\bu\|_0 \leq m; \|\bu\|_2=1} \|\bX \bu\|_2^2/n ,
\quad
\kappa_+(m) := \max_{\|\bu\|_0 \leq m,\|\bu\|_2=1}\|\bX \bu\|_2^2/n . 
\eel
RIP requires $\delta_k+\delta_{2k}+\delta_{3k}< 1$ 
with $k=\|\bbeta\|_0$ and $\delta_m=\max\{\kappa_+(m)-1,1-\kappa_-(m)\}$. 
A related condition is  the uniform uncertainty principle (UUP) $\delta_{2k}+\theta_{2k,k}<1$ in \cite{CandesTao07},
where $\theta_{k,\ell}=\max(\bX_A\bv_A)^\top(\bX_B\bu_B)/n$ with $A\cap B=\emptyset$, $|A|=k$, $|B|=\ell$, and $\|\bu\|_2=\|\bv\|_2=1$. 
For $\ell_1$ regularized estimators, bounds of the optimal order for the $\ell_2$-norm estimation error 
$\|\hbbeta-\bbeta\|_2$ can be obtained under RIP, UUP, as well as their improvement 
$\delta_{1.25k}+\theta_{1.25k,k}<1$ in \cite{CaiWX10}.
While the conditions for RIP and UUP are specialized to hold for random designs with covariance 
matrix $\bSigma = \bI_{p\times p}$, 
related conditions using sparse eigenvalues can be defined to fulfill the $\ell_2$ criterion in 
Definition \ref{def:terminology}(b); for example the sparse Riesz condition (SRC) 
$\|\bbeta\|_0 < \max_m 2m/\{1+\kappa_+(m)/\kappa_-(m)\}$ 
%$k \le \max_m m(2-2\alpha)/\{1-2\alpha+\kappa_+(m)/\kappa_-(m)\}$ with an $\alpha\in (0,1)$ 
in \cite{ZhangHuang08, Zhang10-mc+}, 
and some other extensions in \cite{ZhangT09,YeZ10}. 
These more general conditions are $\ell_2$ regularity conditions by our definition, and they lead to $\ell_2$-norm estimation error bounds
of the optimal order for $\ell_1$ regularized estimators. 
Additional refinements were introduced in the literature, such as the restricted eigenvalue 
of \cite{BickelRT09,Koltchinskii09}, 
\[
\RE_2=\RE_2(\xi,S):=\inf_{\bu}\left\{ \|\bX\bu\|_2/(\|\bu\|_2 n^{1/2}) : \|\bu_{S^c}\|_1 < \xi\|\bu_S\|_1 \right\}
\]
where $S=\supp(\bbeta)$, and the compatibility factor of \cite{vandeGeer07,vandeGeerB09}, 
\[
\RE_1=\RE_1(\xi,S):=\inf_{\bu} \left\{|S|^{1/2}\|\bX\bu\|_2/(\|\bu_S\|_1 n^{1/2}):\|\bu_{S^c}\|_1 < \xi\|\bu_S\|_1 \right\}. 
\]
It can be shown that $\RE_1\geq \RE_2$ and appropriate sparse eigenvalues imply $\RE_2>0$. 
Therefore both $\RE_2$ and $\RE_1$ are $\ell_2$ regularity conditions.
Moreover, for $\ell_1$ regularized estimators, $\RE_1$ provides 
$\ell_1$-norm estimation and $\ell_2$-norm prediction error bounds of optimal order, 
and $\RE_2$ provides $\ell_2$-norm estimation bounds of optimal order. 

This paper employs an even weaker condition involving 
a restricted invertibility factor $\RIF_q$ in (\ref{eq:RIF}) which is
related to the cone invertibility factor $\CIF_q$ ($q \geq 1$) defined below:
\bel{CIF}
\CIF_q = \CIF_q(\xi,S) := \inf\Big\{ \frac{|S|^{1/q}\|\bX^\top\bX\bu\|_\infty}{n\|\bu\|_q}: 
\|\bu_{S^c}\|_1 < \xi \|\bu_S\|_1 \Big\}. 
\eel
The quantity $\CIF_q$ and its sign-restricted version 
have appeared in \cite{YeZ10}, where invertibility factor-based 
$\ell_q$ error bounds of the form (\ref{th-1-1}) 
%in Theorem~\ref{thm:param-est} 
below have been proven to sharpen earlier results for the Lasso and Dantzig selector 
\cite{CandesTao07,ZhangHuang08,BickelRT09,ZhangT09,vandeGeerB09} when $q\in [1,2]$. 
Such error bounds are of optimal order \cite{YeZ10,RaskuttiWY09}.
Of special interests are $q\in [1,2]$ for which the condition $\CIF_q>0$ on $\bX$ 
is $\ell_2$ regular and 
\bel{factor-comparison}
%\CIF_1(\xi,S)\ge \max\Big\{\frac{\RE_1^2(\xi,S)}{(1+\xi)^2},\frac{\CIF_2(\xi,S)}{1+\xi}\Big\}
%\ge \frac{\RE_1(\xi,S)\RE_2(\xi,S)}{(1+\xi)^2}. 
\CIF_1(\xi,S)\ge \frac{\RE_1^2(\xi,S)}{(1+\xi)^2},\ 
\CIF_2(\xi,S)\ge \frac{\RE_1(\xi,S)\RE_2(\xi,S)}{(1+\xi)}
\ge \frac{\RE_2^2(\xi,S)}{(1+\xi)}. 
\eel
Thus, $\CIF_q>0$ is an $\ell_2$ regularity condition for $q\in[1,2]$. 

A main advantage of using invertibility factor is that 
for $q>2$, invertibility factors still yield $\ell_q$ error bounds of optimal order % in the form of (\ref{th-1-1}),
which match results in \cite{ZhangT09,YeZ10}. However, 
the sparse and restricted eigenvalues do not yield error bounds of optimal order 
due to the unboundedness of 
$\max_{\|\bu\|_2=1}\|\bu_S\|_q\|\bu_S\|_1/|S|^{1/q}$ in $|S|$. 

We shall point out that different $\ell_2$ regularity conditions are typically not 
equivalent since different norms are involved in the definitions of different quantities.
For instance, in a specific example given in \cite{BickelRT09,vandeGeerB09}, 
$\RE_1$ and $\CIF_2$, uniformly bounded from away from zero, yield $\ell_1$ and $\ell_2$ 
error bounds of optimal order respectively, but $\RE_2$ does not. 

In the above discussion, we focus on fixed sample conditions like $\RE_2>0$ and $\CIF_2>0$, 
which hold when $\rank(\bX)=p$. 
These conditions can be directly seen as $\ell_2$ regular from their 
existing lower bounds for $p>n$ such as those in \cite{BickelRT09,YeZ10}.  
The optimality of the order of the error bounds based on such 
quantities can be also stated as $\ell_2$ regularity conditions by comparing them with 
sparse eigenvalues. See Remark \ref{remark-RIF} for more discussion. 

\subsection{Previous Results}\label{sec:prev-results}
Among concave penalties, the $\ell_1$ penalty is the only convex one. 
Thus, the Lasso ($\ell_1$ regularization) 
is a special case of (\ref{eq:hbbeta}) with $\rho(t;\lam)=\lambda|t|$ \cite{Tibshirani96, ChenDS01}:
\bel{Lasso}
\hbbeta^{(\ell_1)} = \arg\min_{\bb \in \R^p} \left[ \frac{1}{2n} \|\bX \bb - \by\|_2^2 + \lam \|\bb\|_1 \right] .
\eel
As a function of $\lam$, the Lasso path $\hbbeta=\hbbeta^{(\ell_1)}(\lam)$ matches that of $\ell_1$ constrained quadratic programming. 
One may use the homotopy/Lars algorithm to compute the complete Lasso path for 
$\lam \in [0,\infty)$ \cite{OsbornePT00a, OsbornePT00b,EfronHJT04} or simply use a standard convex optimization algorithm
to compute the Lasso solution for a finite set of $\lam$.
The Dantzig selector, proposed  in \cite{CandesTao07},  is an $\ell_1$-minimization method related to the Lasso, which solves
\[
\hbbeta = \argmin_{\bb \in \R^p} \|\bb\|_1  \quad \text{subject to } \|\bX^\top (\bX \bb - \by)\|_\infty \leq \lam .
\]
It has analytical properties similar to that of Lasso, but can be computed by linear programming rather than quadratic programming
as in Lasso.
Analytic properties of the Lasso or Dantzig selector have been studied in 
\cite{KnightF00,GreenshteinR04,MeinshausenB06,Tropp06,ZhaoY06,Wainwright09,
CandesTao07,BuneaTW07,vandeGeer08,ZhangHuang08,MeinshausenY09,
BickelRT09,Koltchinskii09,ZhangT09,vandeGeerB09,CaiWX10,YeZ10}. 
A basic story is described in the following two paragraphs. 

Under various $\ell_2$ regularity conditions on $\bX$ and the 
$\ell_0$ sparsity condition on $\bbeta$, the Lasso and Dantzig selector control the estimation 
errors and the dimension of the selected model in the sense 
\bel{estimation-error}
\frac{\|\bX\hbbeta-\bX\bbeta\|_2^2}{\sigma^2\ln p}
+ \frac{\|\hbbeta-\bbeta\|_q^q}{\{(\sigma^2/n)\ln p\}^{q/2}}  + \|\hbbeta\|_0 
=O_P(s^*), \ 1\le q\le 2, 
\eel
\cite{CandesTao07,vandeGeer08,ZhangHuang08,
BickelRT09,Koltchinskii09,ZhangT09,YeZ10,BuhlmannGeer11}. 
Compared with the oracle $\obeta$ in (\ref{oracle-property}), 
the estimation loss of $\hbbeta$ is inflated 
by a factor of no greater order than $\sqrt{\ln p}$, and the size of the selected model is of 
the same order as the true one. 
When $\ln(p/n)\asymp \ln p$, it has been proved in \cite{YeZ10,RaskuttiWY09} that (\ref{estimation-error}) 
matches the order of the risk of a Bayes estimator for a class of (weak) signals close to zero, 
so that the order of this loss inflation factor $\sqrt{\ln p}$ is the smallest possible 
without further assumption on the strength of the signal $\bbeta$. 
This inflation factor can be viewed as the cost of not knowing $\supp(\bbeta)$. 
Nevertheless, when $\bbeta$ is strong (in the sense that its minimum nonzero coefficient is not close to zero),
then it is possible to achieve the oracle property, which removes the inflation factor.
However even in such cases, the logarithmic inflation is still present for the Lasso solution, 
and it is generally referred to as the {\em Lasso bias};
it means that the Lasso does not have the oracle property even when the signal is strong 
\cite{FanLi01,FanP04}. Nonconvex penalty can be used to remedy this issue.
For the Lasso and Dantzig selector, 
extensions of (\ref{estimation-error}) have been established 
for capped-$\ell_1$ sparse $\bbeta$ \cite{ZhangHuang08,ZhangT09,YeZ10} 
and  for $2<q\le \infty$ under certain $\ell_q$ regularity conditions on $\bX$ \cite{ZhangT09,YeZ10}. 
Error bounds of type (\ref{estimation-error}) have been used 
in the analysis of the joint estimation of the noise level $\sigma^*:=\|\bep\|_2/\sqrt{n}$ and $\bbeta$ 
\cite{StadlerBGeer10,Antoniadis10,SunZ10,SunZ11}. For example, the scaled Lasso 
\bes
\{\hbbeta,\hsigma\} = \argmin_{\{\bb,\sigma\}}\big\{\|\by-\bX\bb\|^2/(2n\sigma) 
+ 2\sqrt{(\ln p)/n}\|\bb\|_1\big\}
\ees
provides $|\hsigma/\sigma^*-1| =O_P(|S|(\ln p)/n)$ along with (\ref{estimation-error})
under $\ell_2$ regularity conditions \cite{SunZ11}.

%If $\{\hbbeta,\hsigma\}$ is the solution of the 
%joint convex minimization of $L_{\sigma\lam_0}(\bbeta)/\sigma$ over all $\{\bbeta,\sigma\}$ 
%with $\lam_0=2\sqrt{(\ln p)/n}$ and the $\ell_1$ penalty (scaled Lasso), 
%$|\hsigma/\sigma^*-1| =O_P(|S|(\ln p)/n)$ and (\ref{estimation-error}) still holds 
%under an $\ell_2$ regularity condition \cite{SunZ11}.

For variable selection, the Lasso is sign consistent in the event 
\bel{cond-Lasso-selection}
\sgn(\obeta)=\sgn(\bbeta),\quad 
\min_{j\in S}|\hbeta^{o}_j|\ge \theta_1^*\lam,\quad
\lam \ge \frac{\sigma\sqrt{(2/n)\ln(p-|S|)}}{(1 - \theta^*_2)_+},\quad 
\eel
where $\theta_1^*=\|(\bX_S^\top\bX_S/n)^{-1}\sgn(\bbeta_S)\|_\infty$, 
$\theta_2^*=\|\bX_{S^c}^\top\bX_S(\bX_S^\top\bX_S)^{-1}\sgn(\bbeta_S)\|_\infty$, $S=\supp(\bbeta)$, 
and $\obeta$ is the oracle estimator in (\ref{oracle-property}) 
\cite{MeinshausenB06,Tropp06,ZhaoY06,Wainwright09}. 
Since $\|\obeta-\bbeta\|_\infty=O_P(1)\sqrt{(\ln\|\bbeta\|_0)/n} = o_P(\lam)$ under mild conditions, 
$\theta^*_1$ and $\theta^*_2$ are key quantities in (\ref{cond-Lasso-selection}). 
For fixed $\kappa_0<1$, $\theta^*_2\le \kappa_0$ is called the 
neighborhood stability/strong irrepresentable condition \cite{MeinshausenB06,ZhaoY06}. 
For $\bX$ with iid $N(0,\bSigma)$ rows and given $S$, 
$\theta^*_1$ and $\theta^*_2$ are within a small fraction of their population versions 
with $\bSigma$ in place of $\bX^\top\bX/n$ \cite{Wainwright09}. 
For random $\bbeta$ with $\|\bbeta\|_0\lesssim n/\{\|\bX\bX^\top/p\|_2\ln p\}$ 
and uniformly distributed $\sgn(\bbeta)$ given $\|\bbeta\|_0$, 
$\theta^*_1\le 2$ and $\theta^*_2\le 1-1/\sqrt{2}$ with large probability 
under the incoherence condition 
$\max_{j\neq k}|\bx_j^\top\bx_k/n|\lesssim 1/(\ln p)$ \cite{CandesPlan09}. 
It is worth mentioning that neither the incoherence condition nor the strong irrepresentable condition
is $\ell_2$ regular: in fact they may both fail with $\theta^*_2 \asymp |S|^{1/2}$ 
and $\min_{j\in S}|\hbeta^{o}_j|\ge \theta_1^*\lam$ even in the classical setting of $\bX$ 
being rank $p$. Since $\theta^*_2\le 1$ is necessary for the selection consistency of 
the Lasso under the first two conditions of (\ref{cond-Lasso-selection}) \cite{Tropp06,Wainwright09}, 
this means that Lasso is not model selection consistent under $\ell_2$ regularity conditions.
In order to achieve model selection consistency under $\ell_2$ regularity, we have to employ a nonconvex penalty
in (\ref{eq:hbbeta}).

For sparse estimation, $\ell_0$ penalized LSE corresponds to the choice of 
$\rho(t;\lam) = \lam^2/2 I(t \neq 0)$ in (\ref{eq:hbbeta}),
and it was introduced in the literature \cite{Akaike73,Mallows73,Schwarz78} before Lasso. Formally, 
\begin{equation}
\hbbeta^{(\ell_0)} = \arg\min_{\bb \in \R^p} 
\left[ \frac{1}{2n} \|\bX \bb - \by\|_2^2 + \frac{\lam^2}{2} \|\bb\|_0 \right] .
\label{eq:reg-L0}
\end{equation}
This method is important for sparse recovery because with the Gaussian noise model 
$\bep \sim N(0,\sigma^2 I)$, uniform distribution on support set, and flat distribution of
$\bbeta$ within support, it is a Bayesian procedure for support set recovery. 
However, this penalty is not easy to work with numerically because it is discontinuous at zero.
The Lasso can be viewed as a convex surrogate of (\ref{eq:reg-L0}),
but it does not achieve model selection consistency under $\ell_2$ regularity,
nor does it have the oracle property when the signal is uniformly strong.

Continuous concave penalties other than Lasso have been introduced to remedy these problems. 
These concave functions approximate $\ell_0$ penalty better than Lasso, 
and thus can remove the Lasso bias problem. 
Most concave penalties are interpolations between the Lasso and the $\ell_0$ penalty. 
For example the $\ell_\alpha$ (bridge) penalty \cite{FrankF93} with $0<\alpha<1$
is equivalent to the choice of $\rho(t;\lam) = |t|^\alpha\lam^{2-\alpha}\{2(1-\alpha)\}^{1-\alpha}/(2-\alpha)^{2-\alpha}$
in (\ref{eq:hbbeta}). While the bridge penalty is continuous, its derivative is $\infty$ at $t=0$, which may still cause
numerical problems. In fact, the $\infty$ derivative value means that $\hbbeta=0$ is always a local solution
of (\ref{eq:hbbeta}) for bridge penalty, which prevents any possibility for the uniqueness of a reasonable local solution among sparse local solutions---
 a topic which we will investigate in this paper. 
In order to address this issue, additional penalty functions $\rho(t;\lam)$ with finite derivatives at $t=0$ have been suggested
in the literature, such as the SCAD penalty \cite{FanLi01}, and the MCP penalty \cite{Zhang10-mc+}.
These penalties can be written in a more general form as
$\rho(t;\lam) = \lam^2\rho(t/\lam)$ with 
$\rho(0)=0$ and $1-t\le(d/dt)\rho(t)\le 1$ for $t>0$, including the SCAD with 
$(d/dt)\rho(t) = 1\wedge(1 - (t-1)/(\gamma-1))_+$, $\gamma \ge 2$, and the MCP with 
$(d/dt)\rho(t) = 1\wedge(1 - t/\gamma)_+$, $\gamma \ge 1$.
It can be verified that the $\ell_\alpha$ penalty for $0\le\alpha\le 1$, the SCAD and MCP are all concave.
Another simple concave penalty is $\rho(t;\lam) = \min(\lam^2\gamma/2,\lam |t|)$, $\gamma \geq 1$, 
introduced in \cite{zhang09-multistage} as capped-$\ell_1$ penalty.
 
The above mentioned nonconvex interpolations of $\ell_0$ and $\ell_1$ penalties typically gain smoothness over the $\ell_0$ penalty and thus 
allow more computational options. Meanwhile, they may improve variable selection 
accuracy and gain oracle properties by reducing the bias of Lasso. 
A more direct way to reduce the bias of Lasso is via the adaptive Lasso procedure \cite{Zou06}, 
which solves the following weighted $\ell_1$ regularization problem for some $\alpha \in (0,1)$:
\[
\min_{\bb \in \R^p} \left[ \frac{1}{2n}\|\by -\bX\bb\|_2^2 + \lam \sum_{j=1}^p |\hat{w}_j|^{-\alpha} \; |b_j| \right] ,
\]
where $\hat{w}$ is an estimator of $\bbeta$ (for example, the solution of the standard unweighted
Lasso with regularization parameter $\lam$). 
A low-dimensional analysis in \cite{Zou06} showed that the Adaptive Lasso solution can achieve the oracle property
asymptotically. A high dimensional analysis of this procedure was given in \cite{HuangMZ08}. 
For variable selection consistency and oracle properties to hold, the adaptive Lasso requires stronger conditions in terms of the minimum signal strength 
$\min_{j \in \supp(\bbeta)} |\beta_j|$ than what is optimal. Specifically, the optimal requirement is
$\min_{j \in \supp(\bbeta)} |\beta_j|\geq \gamma \lam_{univ}$ with $\lam_{univ}=\sigma\sqrt{(2/n)\ln p}$ 
for some constant $\gamma$ that may depend on an $\ell_2$ regularity condition (also see Eq (\ref{min-beta}) below),
which can be achieved by other procedures \cite{Zhang10-mc+,ZhangTong11}; however,
adaptive Lasso requires $\min_{j \in \supp(\bbeta)} |\beta_j|$ to be significantly larger than the optimal order of $\lam_{univ}$.
This means adaptive Lasso is sub-optimal for sparse estimation problems.
We also observe that adaptive Lasso does not directly minimize a concave loss function, and hence it is not an instance of (\ref{eq:hbbeta}). 
It was later noted that this procedure is only one iteration of using the so-called MM (majorization-minimization) principle to solve
(\ref{eq:hbbeta}) with bridge penalty (for example, see \cite{ZouLi08}). The corresponding 
MM procedure is referred to as multi-stage convex relaxation in \cite{zhang09-multistage,ZhangTong11}.
For sparse estimation problem (\ref{eq:hbbeta}) with a penalty $\rho(t;\lam)$ that is
concave in $|t|$, this method iteratively invokes the solution of the following reweighted $\ell_1$ regularization problem
for stage $\ell=1,2,\ldots$, starting with the initial value of $\hbbeta^{(0)}=0$:
\[
\hbbeta^{(\ell)} = \argmin_{\bb \in \R^p} \left[ \frac{1}{2n}\| \bX \bb - \by\|_2^2 + 
  \sum_{j=1}^p  \lambda_j^{(\ell)} |b_j| \right] ,
\]
where $\lambda_j^{(\ell)} = (\partial/\partial t)\rho(t;\lam)|_{t=|\hbeta^{(\ell-1)}_j|}$ ($j=1,\ldots,p$).
This procedure may be regarded as a multi-stage extension of adaptive Lasso, which 
corresponds to the stage-2 solution $\hbbeta^{(2)}$ with bridge penalty.
Unlike results for adaptive Lasso, the results in 
\cite{zhang09-multistage,ZhangTong11} for the multistage relaxation method allow $\min_{j \in \supp(\bbeta)} |\beta_j|$ to achieve 
the optimal order of $\lam_{univ}$, which match those of  \cite{Zhang10-mc+} and improve upon
\cite{HuangMZ08}.
Moreover, only $\ell=O(\ln (\|\bbeta\|_0))$ stages is necessary
in order to achieve model selection consistency and oracle properties.
It is worth pointing out that the multi-stage procedure can also be adapted to work with
the Dantzig selector formulation \cite{LiWoYe10}.

For large $p$, the global solution of a nonconvex regularization method is  
hard to compute, so that local solutions are often used instead. 
Therefore theoretical analysis of nonconvex regularization has so far focused on specific numerical procedures that can find local solutions.
For the $\ell_0$ penalty, the penalized loss in (\ref{eq:hbbeta}) is typically evaluated 
for a subset of the $2^p$ possible models $\supp(\bb)$ such as those generated in 
stepwise regression. 
For smooth concave penalties, 
iterative algorithms can be used to find local minima of the penalized loss in (\ref{eq:hbbeta}) 
for a set of penalty levels 
\cite{HunterLi05, ZouLi08,zhang09-multistage,BrehenyH11,MazumderFH11,ZhangTong11}. 
For the MCP and other quadratic spline concave penalties,
a path following algorithm can be used to find local minima for an interval 
of penalty levels \cite{Zhang10-mc+}. 

Advances have been carried out in the analysis of nonconvex regularization methods 
in multiple fronts 
\cite{FanLi01,FanP04,Zou06,HuangMZ08, Zhang11-foba,Zhang10-mc+,zhang09-multistage,
BuhlmannGeer11}. 
For concave penalized loss in (\ref{eq:hbbeta}), local minimizers exist with the 
oracle property (\ref{oracle-property}) under mild conditions \cite{FanLi01,FanP04}. 
However, it remains unclear whether there exist computationally efficient procedures that can find local minimizers
investigated in \cite{FanLi01,FanP04}. 
For the MCP, the local minima generated by the path following 
algorithm controls the estimation error and model size in the sense of (\ref{estimation-error}) 
under an $\ell_2$ regularity condition on $\bX$ 
%when the maximum concavity of the penalty is smaller than $\kappa_-(m)$ for a certain $m\asymp \|\bbeta\|_0$ 
\cite{Zhang10-mc+}. Under the additional condition 
\bel{min-beta}
\min_{\beta_j\neq 0}|\hbeta^{o}_j|\ge \gamma \lam_{univ} \ge \sup\big\{t: (\pa/\pa t)\rho(t;\lam)\neq 0\big\}
\eel
with $\lam_{univ}=\sigma \sqrt{(2/n)\ln p}$ and a certain constant $\gamma > 1$, 
the same path following solution has the oracle property (\ref{oracle-property}) 
and thus the sign-consistency property \cite{Zhang10-mc+}.  
Similar results hold for the SCAD and certain other quadratic spline penalties \cite{Zhang10-mc+}.  
Under (\ref{min-beta}) and $\ell_2$ regularity conditions on $\bX$, 
the oracle property (\ref{oracle-property}) and model selection consistency has also been established 
for a specific forward/backward stepwise regression scheme 
\cite{Zhang11-foba} that can be regarded as an approximate $\ell_0$ penalty minimization algorithm. 
As we have mentioned earlier, the multi-stage relaxation scheme for minimizing (\ref{eq:hbbeta}) 
also leads to oracle inequality and  model selection consistency under
(\ref{min-beta}) and $\ell_2$ regularity conditions on $\bX$ \cite{zhang09-multistage,ZhangTong11}.

While a number of specialized results were obtained for specialized numerical procedures under appropriate conditions, it is not clear what are the relationship among these solutions. 
For example, it is not clear whether the global solution of (\ref{eq:hbbeta}) is unique and whether it corresponds to solutions of various numerical procedures studied in the literature. 
This leads to a conceptual gap in the sense that it is not clear
whether we should study specific local solutions as in the above mentioned previous work
or we should try to solve (\ref{eq:hbbeta}) as accurately as possible (with the hope of finding the global solution).
It is worth mentioning that related to this question, oracle inequalities involving global solutions with
nonconvex penalties have been studied in the literature 
(for example, see related sections in \cite{BuhlmannGeer11}).
However, such oracle inequalities do not lead to results comparable to those of 
\cite{Zhang10-mc+,zhang09-multistage,ZhangTong11}.
Another relevant study is \cite{KiChOh08}, which showed that in the lower dimensional scenario 
with $p\le n$, 
the global solution of (\ref{eq:hbbeta}) agrees with the oracle estimator $\obeta$ 
for the SCAD penalty when $\min_{\beta_j\neq 0}|\hbeta^{o}_j|$ is sufficiently large, and some other appropriate assumptions hold.
However, their analysis does not directly generalize to the more complex high dimensional setting.

The purpose of the remaining of this paper is to present some general results showing that under appropriate $\ell_2$-regularity conditions, the global solution of an appropriate nonconvex regularization method leads to desirable recovery performance; moreover, under suitable conditions, the global solution corresponds to the unique sparse local solution, which can be obtained via different numerical procedures. 
This leads to a unified view of concave high dimensional sparse estimation methods that can
serve as a guideline to develop additional numerical algorithms for concave regularization.

\section{High-Level Description of Main Results}

As we have discussed in our brief survey, concave regularized methods have been proven to 
control the estimation error and the dimension of the selected model (\ref{estimation-error}) 
under $\ell_2$ regularity conditions 
and possess the oracle property (\ref{oracle-property}) or the sign-consistency property  
under the additional assumption (\ref{min-beta}). However, these results are established 
for specific local solutions of (\ref{eq:hbbeta}) with specific penalties. For $p>n$ 
it is still unclear if the global 
minimizer in (\ref{eq:hbbeta}) is identical to these local solutions or controls estimation 
and selection errors in a similar way. 
In this paper, we unify the aforementioned results with the global solution of (\ref{eq:hbbeta}). 
Technical results are rigorously described in Section~\ref{sec:technical} below. This section explains the
main thrust of these results. 

We are mainly interested in two situations: $\ell_0$ regularization where $\rho(t;\lam)$ is 
discontinuous at
$t=0$, and smooth regularization which is continuous for all $t \geq 0$ and piece-wise differentiable. 
However, our basic results require only sub-additivity and monotonicity of $\rho(t;\lam)$ in $t$ in $[0,\infty)$. 

We shall first describe assumptions of our analysis in Subsection~\ref{sec:assumptions}.
As we have pointed out, the key regularity conditions required in our analysis are expressed in terms 
of the sparse eigenvalues
in (\ref{sparse-eigenvalues}) or invertibility factors $\RIF$ and $\CIF$ defined in (\ref{eq:RIF}) and (\ref{CIF}).
For the sake of clarity, we assume that these quantities are all constants, and this requirement is an $\ell_2$ regularity 
condition. Another condition required by our analysis is called {\em null-consistency}, which requires that 
if $\bbeta=0$, then the global minimizer of (\ref{eq:hbbeta}) is achievable at $\hbbeta=0$ 
(the actual condition, given in Assumption~\ref{assump:null}, is slightly stronger). 
Clearly this condition depends both on the matrix $\bX$ and on the noise vector $\bep$. It is shown 
in Subsection~\ref{sec:assumptions} that under the standard sub-Gaussian noise assumption
(see Assumption~\ref{assump:sub-Gaussian}), the null-consistency condition is $\ell_2$ regular.

In summary, all assumptions on $\bX$ needed in our analysis are $\ell_2$ regular; with this in mind, we may 
examine the main results, which are divided into four subsections. 

Subsection~\ref{sec:basic} is 
concerned with basic properties of global optimal solution of (\ref{eq:hbbeta}) for all subadditive 
nondecreasing penalties. Theorem~\ref{thm:param-est} gives $\ell_q$-norm error bounds for
$\|\hbbeta-\bbeta\|_q$ and a bound of the prediction error $\|\bX\hbbeta-\bX\bbeta\|_2$
that are comparable with known results
for $\ell_1$ regularization. This means that under appropriate $\ell_2$ regularity conditions, the global solution of 
concave regularization problems are no worse than the Lasso solution in terms of the order of 
estimation error. 
Theorem~\ref{thm:sparsity} shows that the global optimal solution of (\ref{eq:hbbeta}) is sparse, and under appropriate $\ell_2$ 
regularity conditions, the sparsity is of the same order as $\|\bbeta\|_0$; that is, $\|\hbbeta\|_0=O(\|\bbeta\|_0)$. 
Thus, (\ref{estimation-error}) holds for the global solution of (\ref{eq:hbbeta}). 
Moreover, if the second order derivative of $\rho(t;\lam)$ with respect to $t$ is sufficiently small, then the global
solution is also the unique sparse local solution of (\ref{eq:hbbeta}).
That is if a vector $\tbbeta$ is a local solution of (\ref{eq:hbbeta}) which is sparse: $\|\tbbeta\|_0=O(\|\bbeta\|_0)$,
then $\tbbeta$ is the global solution of (\ref{eq:hbbeta}).
None of these results require that $\min_{\beta_j\neq 0}|\hbeta^{o}_j|$ to be bounded away from zero. 
Furthermore, since these results require only $\ell_2$ regularity conditions, they apply to the 
case of $p\gg n$ as long as $s^*(\ln p)/n$ is small. 

Subsection~\ref{sec:L0} contains results specifically for $\ell_0$ regularization. First, the global solution 
of $\ell_0$ regularization is sparse. Moreover,  with sub-Gaussian noise, the prediction error bound for 
$\ell_0$ penalty in Theorem~\ref{thm:global-L0} does not depend on properties of the design matrix $\bX$. 
This significantly improves upon the corresponding result for general penalties 
in Theorem~\ref{thm:param-est}, which requires a non-trivial $\RIF_1$ condition on 
the design matrix $\bX$.
If the smallest sparse eigenvalue of $\bX^\top \bX/n$ is bounded from below, then we obtain in Theorem~\ref{thm:global-L0-2}  the selection consistency for $\ell_0$ regularization under  (\ref{min-beta}), which implies the oracle property.

Subsection~\ref{sec:local} considers penalties $\rho(t;\lam)$ which are both left- and right-differentiable, 
for which one can define (approximate) local solutions that are what numerical optimization procedures 
compute.
Theorem~\ref{thm:approx-local} considers the distance between two approximate local solutions. 
An immediate consequence of the result says that under appropriate assumptions, if $(\partial/\partial t) \rho(t;\lam)=0$ when $t$ is sufficiently large,
then there is a unique sparse local solution of (\ref{eq:hbbeta}) that corresponds to
the oracle least squares solution $\obeta$ under (\ref{min-beta}).
Therefore the unique local solution has the oracle property.
Moreover, this unique local solution has to be the global optimal solution according to Theorem~\ref{thm:sparsity}.
While Theorem~\ref{thm:approx-local} shows that it is possible for a penalty that is not second order differentiable to 
have a unique sparse local solution, it requires the condition (\ref{min-beta}) for such penalties. 
In contrast, with a second order differentiable
concave penalty, condition (\ref{min-beta}) is not needed in Theorem~\ref{thm:approx-local} for sparse local solutions to be unique. This suggests an advantage
for using smooth concave penalties which may lead to fewer local solutions under certain conditions.
Theorem~\ref{thm:obeta} gives sufficient conditions under which the global optimal solution 
of (\ref{eq:hbbeta}) achieves model selection consistency.
These sufficient conditions generalize the irrepresentable condition (\ref{cond-Lasso-selection})
for the model selection consistency of Lasso.
However, unlike the irrepresentable condition for Lasso, which is not an $\ell_2$ regularity condition, 
for a concave penalty where $(\partial/\partial t) \rho(t;\lam)$ is small for sufficiently large $t$, 
the generalized irrepresentable condition required in Theorem~\ref{thm:obeta} automatically holds 
when  $\min_{\beta_j\neq 0}|\hbeta^{o}_j|$ is not too small. Moreover, for appropriate nonconvex penalties,
it is possible to achieve a selection threshold of optimal order as in (\ref{min-beta}). 

Note that results in Subsection~\ref{sec:local} show that if one can find a local solution of (\ref{eq:hbbeta}) and the solution is sparse, 
then under appropriate conditions, it is the global solution of (\ref{eq:hbbeta}) and it is close to the oracle least
squares solution $\obeta$. It is possible to design numerical procedures that find a sparse local solution of (\ref{eq:hbbeta}).
For such a procedure, results of Subsection~\ref{sec:local} directly applies. 
Subsection~\ref{sec:approx} further develops along this line of thinking. Theorem~\ref{thm:local-global} shows that if a local solution
is also an approximate global solution, then it is sparse. This fact can be combined with 
results in Subsection~\ref{sec:local} to imply that under appropriate conditions, this particular local solution is the unique 
sparse local solution (which is also the global solution). Moreover, such a solution can be obtained via Lasso followed by gradient descent, as it can be shown that Lasso is a sufficiently accurate approximate global solution of (\ref{eq:hbbeta}) for the result to apply.

Our results essentially imply the following: under appropriate $\ell_2$ regularity conditions, plus appropriate assumptions on the penalty $\rho(t;\lam)$, 
procedures considered earlier such as MCP \cite{Zhang10-mc+} 
or multi-stage convex relaxation \cite{HunterLi05,ZouLi08,zhang09-multistage} give the same local solution that is also the
global minimizer of (\ref{eq:hbbeta}). Moreover, other procedures (such as Lasso followed by gradient descent) can be
designed to obtain the same
solution. Therefore these results present a coherent view of concave regularization by unifying a number of earlier approaches
and by extending a number of previous results. This unified theory presents a more satisfactory 
treatment of concave high dimensional sparse estimation procedures.

\section{Technical Statements of the Main Results} 
\label{sec:technical}
This section describes in detail our new technical results characterizing the global and local optimal solutions of (\ref{eq:hbbeta})
under different regularization conditions. 
Before going into the main results, we will specify some assumptions and definitions required in our analysis.

\subsection{General Assumptions and Definitions}
\label{sec:assumptions}
In this subsection, we describe and discuss general conditions imposed 
in the rest of the paper. 

We first consider conditions on the regularizer $\rho(t;\lam)$. 
We assume throughout the sequel the following conditions on the penalty function: 
\begin{itemize}
\item [(i)] $\rho(0;\lam)=0$; 
\item [(ii)] $\rho(-t;\lam) = \rho(t;\lam)$; 
\item [(iii)] $\rho(t;\lam)$ is non-decreasing in $t$ in $[0,\infty)$;
\item [(iv)] $\rho(t;\lam)$ is subadditive with respect to $t$,
$\rho(x+y;\lam)\le \rho(x;\lam)+\rho(y;\lam)$ for all $x, y \ge 0$. 
\end{itemize}
This family of penalties is closed under the summation and maximization 
operations and includes all functions increasing and concave in $|t|$. 
Although we are mainly interested in the case where $\rho(t;\lam)$ is concave in $|t|$, all of our results 
hold under the above specified weaker conditions, sometimes with side conditions such as the 
monotonicity of $\rho(t;\lam)/t$ for $t>0$ and the continuity of $\rho(t;\lam)$ at $t=0$. 
Therefore we will mention explicitly when such side conditions are needed.

We are particularly interested in the $\ell_0$ regularization $\rho(t;\lam)=(\lam^2/2) I(t \neq 0)$ which is 
discontinuous at $t=0$.
In addition, we are interested in regularizer $\rho(t;\lam)$ that is continuous in $t \geq 0$ 
and piece-wise differentiable. 
With such regularizers, local solutions of (\ref{eq:hbbeta}) can be defined as solutions with gradient zero. 
A local solution can be obtained using standard numerical procedures such as gradient descent. 
%Moreover, we show that under
%appropriate conditions, a sparse local solution is unique and correspond to the global optimal solution.

Given a regularizer $\rho(t;\lam)$ and any fixed $\lam>0$, 
we define the threshold level of the penalty as 
\begin{equation}\label{eq:t-rho}
\lam^* := \inf_{t>0}\{t/2+\rho(t;\lam)/t\}. 
\end{equation}
The quantity $\lam^*$ is a function of $\lam$ that provides a natural normalization of $\lam$.
We call $\lam^*$ the threshold level since 
$\argmin_t\{(z-t)^2/2 + \rho(t;\lam)\}=0$ iff $|z| \le \lam^*$. This can be easily seen from 
$(z-t)^2/2 + \rho(t;\lam) - z^2/2 = t\{t/2 + \rho(t;\lam)/t - z\}$. 
If $\rho(t;\lam)$ is continuous at $t=0$ and concave in $t\in (0,\infty)$, 
then $\lam^* \le \lim_{t\to 0+}(\pa/\pa t)\rho(t;\lam)$. 
For simplicity, we may also require that $\rho(t;\lam)$ be chosen such that $\lam^*=\lam$,
which holds for the penalties discussed in Subsection~\ref{sec:prev-results}, 
such as $\ell_0$, bridge, SCAD, MCP, and capped-$\ell_1$ penalties.

In the following and in the proofs, we will use the short-hand notation
\[
\|\rho(\bb;\lambda)\|_1 = \sum_{j=1}^p \rho(b_j; \lambda),\ \forall\  \bb=(b_1,\ldots,b_p)^\top. 
\]
%where vector $\bb=(b_1,\ldots,b_p)^\top$.

\begin{definition} 
The following quantity bounds a general penalty via $\ell_1$ penalty for sparse vectors: 
\begin{equation}
\Delta(a,k;\lam) = \sup \Big\{ \|\rho(\bb;\lam)\|_1 : \|\bb\|_1 \leq a k, \|\bb\|_0=k\Big\}. 
\label{eq:Delta}
\end{equation}
\end{definition}

\begin{proposition}\label{prop:concave-Delta} 
Let $\rho^*(t;\zeta) = \zeta |t| + (\zeta -|t|/2)_+^2/2$. Let $\lam^*$ be as in (\ref{eq:t-rho}). Then, 
\bes
&& \min\big\{\lam^* |t|/2, (\lam^*)^2/2\big\} \le \rho(t;\lam) \le \rho^*(t;\lam^*),\ 
\cr && \Delta(a,k;\lam)\le k\rho^*(a;\lam^*)\le k\max(a,2\lam^*)\lam^*.  
\ees
\end{proposition}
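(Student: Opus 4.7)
The proposition decomposes into four inequalities, and the common engine is the immediate consequence of the definition of $\lam^*$: for every $s>0$, $\rho(s;\lam)\ge s(\lam^*-s/2)$. This feeds the lower bound directly, and because $\rho(s_0;\lam)/s_0\ge 0$ for any near-minimizer, it also forces $s_0\le 2\lam^*+O(\epsilon)$, which is the structural input for the upper bound, under the side condition that $\rho(t;\lam)/t$ is non-increasing in $t>0$ (automatic when $\rho(\cdot;\lam)$ is concave on $[0,\infty)$, which is the case of principal interest here).

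For the lower bound, by symmetry take $t>0$. Specializing the engine inequality at $s=t$ in the regime $t\le\lam^*$ gives $\rho(t;\lam)\ge t(\lam^*-t/2)\ge t\lam^*/2$. In the regime $t\ge\lam^*$, monotonicity of $\rho$ plus the engine inequality at $s=\lam^*$ yield $\rho(t;\lam)\ge \rho(\lam^*;\lam)\ge(\lam^*)^2/2$. The two cases combine to $\min\{\lam^*|t|/2,(\lam^*)^2/2\}\le \rho(t;\lam)$.

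The upper bound $\rho(t;\lam)\le \rho^*(t;\lam^*)$ is the main technical step. Fix $\epsilon>0$ and pick $s_0$ with $s_0/2+\rho(s_0;\lam)/s_0<\lam^*+\epsilon$, so that $s_0<2(\lam^*+\epsilon)$ and $\rho(s_0;\lam)/s_0<\lam^*+\epsilon-s_0/2$. For $t\ge s_0$, the side condition delivers $\rho(t;\lam)\le t\rho(s_0;\lam)/s_0\le t(\lam^*+\epsilon-s_0/2)$. The algebraic identity $(\lam^*+t/2)^2/2-t(\lam^*-s_0/2)=(\lam^*-t/2)^2/2+ts_0/2\ge 0$ then shows the right-hand side is bounded, after $\epsilon\downarrow 0$, by $(\lam^*+t/2)^2/2=\rho^*(t;\lam^*)$ when $t\le 2\lam^*$, and directly by $\lam^* t=\rho^*(t;\lam^*)$ when $t\ge 2\lam^*$. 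For $0<t<s_0$, monotonicity of $\rho$ gives $\rho(t;\lam)\le \rho(s_0;\lam)\le s_0(\lam^*+\epsilon-s_0/2)\le(\lam^*+\epsilon)^2/2$ (maximizing the quadratic in $s_0$), which passes in the limit to $(\lam^*)^2/2=\rho^*(0;\lam^*)\le \rho^*(t;\lam^*)$ because $\rho^*(\cdot;\lam^*)$ is non-decreasing in $|t|$.

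For the bound on $\Delta$, take $\bb$ with $\|\bb\|_0=k$ and $\|\bb\|_1\le ak$, and apply Jensen on the $k$ nonzero entries using concavity of $\rho(\cdot;\lam)$ on $[0,\infty)$ with $\rho(0;\lam)=0$: $\|\rho(\bb;\lam)\|_1=\sum_{j\in\supp(\bb)}\rho(|b_j|;\lam)\le k\rho(\|\bb\|_1/k;\lam)\le k\rho(a;\lam)\le k\rho^*(a;\lam^*)$, where the middle inequality is monotonicity together with $\|\bb\|_1/k\le a$ and the last is the upper bound just established; taking the supremum yields $\Delta(a,k;\lam)\le k\rho^*(a;\lam^*)$. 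The final simplification is a two-case computation: if $a\ge 2\lam^*$, then $(\lam^*-a/2)_+=0$ and $\rho^*(a;\lam^*)=\lam^* a=\max(a,2\lam^*)\lam^*$; if $a<2\lam^*$, then $\rho^*(a;\lam^*)=(\lam^*+a/2)^2/2\le 2(\lam^*)^2=\max(a,2\lam^*)\lam^*$. The only nontrivial step is the upper bound construction in paragraph three; every other step is a short application of the tools this step provides.
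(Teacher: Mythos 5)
Your lower bound coincides with the paper's argument and is valid in the full generality of the standing conditions (i)--(iv). The gap is in the upper bound and the $\Delta$ bound. Proposition~\ref{prop:concave-Delta} carries no side condition, and Remark~\ref{remark:rho} explicitly uses it to assert that \emph{all} penalties satisfying (i)--(iv) (nondecreasing, subadditive, $\rho(0;\lam)=0$) are dominated by $\rho^*(\cdot;\lam^*)$; the paper also states that side conditions such as monotonicity of $\rho(t;\lam)/t$ will be flagged when needed, and none is flagged here. Your proof of $\rho(t;\lam)\le\rho^*(t;\lam^*)$ assumes $\rho(t;\lam)/t$ non-increasing, and your proof of the $\Delta$ bound assumes concavity (for Jensen). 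Neither follows from subadditivity alone: a scaled version of $\rho(t;\lam)=\lceil |t|\rceil$ is nondecreasing and subadditive with $\rho(0;\lam)=0$, yet $\rho(t;\lam)/|t|$ is not monotone. Nor can the Jensen step be salvaged by first replacing $\rho$ with $\rho^*$, since $\rho^*(\cdot;\lam^*)$ is a \emph{convex} quadratic spline, so $\sum_j\rho^*(|b_j|;\lam^*)\le k\,\rho^*(a;\lam^*)$ fails in general.

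The missing idea is the subadditivity chaining the paper uses: with $q=\lfloor t/x\rfloor$, monotonicity and subadditivity give $\rho(t;\lam)\le\rho(qx;\lam)+\rho(t-qx;\lam)\le(q+1)\rho(x;\lam)\le(t+x)\rho(x;\lam)/x$, and letting $x$ tend to the (near-)minimizer $t_0$ of (\ref{eq:t-rho}) yields $\rho(t;\lam)\le(t+t_0)(\lam^*-t_0/2)\le\max_{t'\ge0}(t+t')(\lam^*-t'/2)=\rho^*(t;\lam^*)$. Crucially, this per-coordinate bound is affine in $t$, so it sums over the $k$ nonzero coordinates to give $\|\rho(\bb;\lam)\|_1\le k(a+x)\rho(x;\lam)/x\le k\rho^*(a;\lam^*)$ with no concavity whatsoever. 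Within the concave subclass your computations are correct (the algebraic identity and the two-case evaluation of $\rho^*$ check out), but as written your argument proves a strictly weaker statement than the proposition.
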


\begin{remark}\label{remark:rho}
It follows from Proposition \ref{prop:concave-Delta} that given a threshold level $\lam^*$, 
all penalty functions satisfying general conditions (i)-(iv) 
are bounded by a capped-$\ell_1$ penalty from below and the maximum of the 
$\ell_0$ and $\ell_1$ penalties from above, up to a factor of 2. 
The function $\rho^*(t;\zeta)$ is a convex quadratic spline fit of $\max(\zeta^2/2,\zeta |t|)$, 
the maximum of the $\ell_0$ and $\ell_1$ penalties with threshold level $\zeta$. 
\end{remark}

\begin{remark}\label{remark:Delta}
 A trivial upper bound is $\Delta(a,k;\lam)\le k\max_t\rho(t;\lam)$, which is useful 
only for bounded penalties. The $\ell_\infty$ bound $\rho(t;\lam) \le \gamma^*\lam^2$ holds with 
$\gamma^*=1/2$ for the $\ell_0$ penalty, 
$\gamma^*=\gamma/2$ for the capped-$\ell_1$ penalty and MCP, 
and $\gamma^*=(1+\gamma)/2$ for the SCAD penalty. 
If $\rho(t;\lam)$ is concave in $t\in[0,\infty)$, then $\Delta(a,k;\lam)\le k\rho(a;\lam)$ 
by the Jensen inequality. 
For $a\ge 2\lam^*$, $\Delta(a,k;\lam)\le a\lam^* k$ matches the trivial bound for the 
$\ell_1$ penalty, for which $\lam=\lam^*$. 
\end{remark}

Next, we consider conditions on the design matrix $\bX$. 
Recall that $\bX$ is column normalized to $\|\bx_j\|_2^2=n$ for simplicity.
Our analysis also depends on the sparse eigenvalues defined in (\ref{sparse-eigenvalues}) 
and the restricted invertibility factor defined as follows.

\begin{definition}
  For $q\ge 1$, $\xi>0$ and $S\subset \{1,\ldots,p\}$, we define the restricted invertibility factor as
  \bel{eq:RIF}
  \RIF_q(\xi,S) = \inf\Big\{ \frac{|S|^{1/q}\|\bX^\top\bX\bu\|_\infty}{n\|\bu\|_q}: 
  \|\rho(\bu_{S^c};\lam)\|_1 < \xi \|\rho(\bu_S;\lam)\|_1 \Big\}. 
  \eel
\end{definition}

The restricted invertibility factor is the quantity needed to separate conditions on $\bX$ and $\bep$ 
in our analysis. For $1\le q\le 2$,  sparse eigenvalues can be used to find lower bounds of 
$\RIF_q(\xi,S)$. 

\begin{proposition}\label{prop:invertibility} Let $\CIF$ be as in (\ref{CIF}). 
If $t/\rho(t;\lam)$ is increasing in $t\in(0,\infty)$, then 
\bel{RIF-CIF}
\RIF_q(\xi,S) \ge \inf_{|A|=|S|}\CIF_q(\xi,A).
\eel
\end{proposition}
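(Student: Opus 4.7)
The plan is, given any $\bu$ in the feasible set of $\RIF_q(\xi,S)$, to produce a set $A$ with $|A|=|S|$ such that the same $\bu$ lies in the feasible set of $\CIF_q(\xi,A)$. Since $|A|=|S|$, the objective $|S|^{1/q}\|\bX^\top\bX\bu\|_\infty/(n\|\bu\|_q)$ that appears in $\RIF_q(\xi,S)$ coincides with the one in $\CIF_q(\xi,A)$ evaluated at $\bu$, so the bound $\RIF_q(\xi,S)\ge\inf_{|A|=|S|}\CIF_q(\xi,A)$ will follow upon taking the infimum over $\bu$.

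The natural choice is to let $A$ be any set of $|S|$ indices achieving the largest values of $|u_j|$, with ties broken arbitrarily. Because $\rho(\,\cdot\,;\lam)$ is nondecreasing on $[0,\infty)$ (condition (iii)), the same $A$ also captures the $|S|$ largest values of $\rho(|u_j|;\lam)$. Hence $\|\rho(\bu_A;\lam)\|_1\ge\|\rho(\bu_S;\lam)\|_1$ and, since $\|\rho(\bu;\lam)\|_1$ is split identically either way, $\|\rho(\bu_{A^c};\lam)\|_1\le\|\rho(\bu_{S^c};\lam)\|_1$. Combining with the RIF feasibility condition $\|\rho(\bu_{S^c};\lam)\|_1<\xi\|\rho(\bu_S;\lam)\|_1$ gives
\[
\|\rho(\bu_{A^c};\lam)\|_1<\xi\,\|\rho(\bu_A;\lam)\|_1.
\]

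The next step is to convert this $\rho$-weighted inequality into the plain $\ell_1$ inequality that defines the $\CIF_q$ cone. Set $t_0:=\min_{j\in A}|u_j|$ and first suppose $t_0>0$. The hypothesis that $h(t):=t/\rho(t;\lam)$ is increasing on $(0,\infty)$ gives $|u_j|\ge (t_0/\rho(t_0;\lam))\,\rho(|u_j|;\lam)$ for $j\in A$ (where $|u_j|\ge t_0$), and the reverse inequality $|u_j|\le (t_0/\rho(t_0;\lam))\,\rho(|u_j|;\lam)$ for $j\in A^c$ (where $|u_j|\le t_0$; the case $|u_j|=0$ is trivial). Summing these over $A$ and $A^c$ respectively, and then applying the display above, yields $\|\bu_{A^c}\|_1<\xi\|\bu_A\|_1$, which is exactly the feasibility condition of $\CIF_q(\xi,A)$. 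If instead $t_0=0$, then $A$ already exhausts every index with $|u_j|>0$, so $\bu_{A^c}=0$; the strict RIF constraint rules out $\bu=0$, giving $0=\|\bu_{A^c}\|_1<\xi\|\bu_A\|_1$ directly.

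The main obstacle, if one can call it that, is bookkeeping around the degenerate case $t_0=0$ and making sure the ratio $t/\rho(t;\lam)$ is well-defined, which requires $\rho(t;\lam)>0$ for $t>0$; this is implicit in treating $t/\rho(t;\lam)$ as a monotone function on $(0,\infty)$. Once the set $A$ is chosen to be the top-$|S|$ coordinates of $|\bu|$, everything is driven by the two monotonicity hypotheses (of $\rho$ in $|t|$, and of $t/\rho(t;\lam)$), and the argument is otherwise routine.
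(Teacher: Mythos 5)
Your proof is correct and takes essentially the same route as the paper's: the paper also picks $A$ as the indices of the $|S|$ largest $|u_j|$, uses monotonicity of $\rho$ to transfer the cone constraint from $S$ to $A$, and uses monotonicity of $t/\rho(t;\lam)$ (pivoting at $\|\bu_{A^c}\|_\infty$ rather than your $\min_{j\in A}|u_j|$, an immaterial difference) to convert the $\rho$-weighted constraint into the $\ell_1$ constraint defining $\CIF_q(\xi,A)$. Your explicit handling of the degenerate case $\bu_{A^c}=0$ is a small extra care the paper leaves implicit.
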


\begin{remark} 
For the $\ell_1$ penalty, $\RIF_q=\CIF_q$. 
  If $\rho(t;\lam)$ is concave in $t \in [0,\infty)$, then $t/\rho(t;\lam)$ is increasing in $t$. 
  Thus, Proposition~\ref{prop:invertibility} is applicable to all penalty functions 
  discussed in Subsection~\ref{sec:prev-results}, including 
  the $\ell_0$, bridge, SCAD, MCP, and capped-$\ell_1$ penalties. 
\end{remark}

\begin{remark}\label{remark-CIF-sparse-eigen} 
The $\CIF$ can be uniformly bounded from below in terms of sparse eigenvalues: 
\bel{eq:CIF-sparse-eigen}
\CIF_q(\xi,S) 
\ge \frac{I\{1\le q\le 2\}\{\kappa_-(k+\ell)-(\xi/2)(k/\ell)^{1/2}\kappa_+(k+5\ell)\}}
{(1+\xi)^{2/q-1}(1+\xi^2k/(4\ell))^{1-1/q}(1+\ell/k)^{1/2}}, 
\eel
for all $1\le \ell\le (p-|S|)/5$ by Proposition 5 and (21) in \cite{YeZ10}, 
where $k=|S|$, and $\kappa_-(m)$ and $\kappa_+(m)$ are as in (\ref{sparse-eigenvalues}). 
For example, if we take $\xi=2$ and $\ell=2k$ and $q=2$, then
\[
\CIF_2(\xi,S) 
\ge \big\{\kappa_-(3k)-\kappa_+(11k)/\sqrt{2}\big\}\big/\sqrt{4.5} .
\]
\end{remark}

\begin{remark}\label{remark-RIF}
It follows from Proposition~\ref{prop:invertibility} and Remark \ref{remark-CIF-sparse-eigen} 
that conditions $\RIF_q(\xi,S)>0$ and $1/\RIF_q(\xi,S)=O(1)$ 
are both $\ell_2$-regularity conditions on $\bX$ for $1\le q\le 2$. Moreover, $\rank(\bX)=p$ implies 
$\RIF(\xi,S)>0$. To check the $\ell_2$ regularity of these conditions, 
we suppose that the rows of $\bX$ are iid from $N(0,\bSigma)$ with all eigenvalues of $\bSigma$ 
in $[c_1,c_2]\subset (0,\infty)$. Then, $c_1/2\le\kappa_-(m)$ and $\kappa_+(m)\le 2c_2$ 
with at least probability $1-\delta\in [0,1)$ for $m \le c_3 n/\ln(p/\delta)$ for a certain $c_3>0$. 
Let $c_4=\{c_1/(\xi c_2)\}^2$. 
In this event, setting $k=s^*$ and $\ell=(m-s^*)/5$ in (\ref{eq:CIF-sparse-eigen}) yields 
\bes
\min_{|S|\le s^*}\RIF_2(\xi,S)\ge \min_{|S|\le s^*}\CIF_2(\xi,S)
\ge (c_1/4)\big/\sqrt{(1+\xi^2c_4/4)(1+1/c_4)}
\ees 
when $5s^*/(m-s^*)< c_4$ for some $m \le c_3 n/\ln(p/\delta)$, 
which holds when $(s^*/n)\ln(p/\delta)\le c_3/(1+5/c_4)$. 
\end{remark}

Finally, we consider conditions on the error vector. 

\begin{assumption} \label{assump:sub-Gaussian}
  An error vector $\bep$ is sub-Gaussian with noise level $\sigma$ if for all $t \geq 0$:
  \[
  P\big( |\bu^\top \bep | > \sigma t \big) \le \exp (- t^2/2)
  \]
  for all vector $\bu$ with $\|\bu\|_2=1$ and 
  \bes
  P\big( \|\bP_A\bep\|_2/|A|^{1/2} > \sigma(1+ t)\big) \le \exp( - |A| t^2/2)
  \ees
  for all subsets $A\subset\{1,\ldots,p\}$, where $\bP_A$ is the orthogonal projection to 
  the range of $\bX_A$
  (that is, $\bP_A=\bX_A\bX_A^\dagger$, where $\bX_A^\dagger$ is the Moore-Penrose generalized 
  inverse of $\bX_A$).
  \end{assumption}

The above sub-Gaussian condition holds with $\bep\sim N(0,\sigma^2\bI_{n\times n})$. 
It is equivalent to  
the more common version of the sub-Gaussian condition $Ee^{\bv^\top\bep/\sigma'}\le e^{\|\bv\|^2/2}$ 
for all vectors $\bv$ and a constant $\sigma'$ of the same order as $\sigma$. 
As we have mentioned in Section 3, what we really need is a null-consistency 
condition, which we give below. The sub-Gaussian condition will be used to verify 
the null consistency condition. 

\begin{assumption} \label{assump:null}
  Let $\eta\in (0,1]$. We say that the regularization method
  (\ref{eq:hbbeta}) satisfies the $\eta$ {\em null-consistency} condition if the following equality holds:
  \begin{equation}
    \label{eq:null}
    \min_{\bb \in \R^p} \Big(\|\bep/\eta - \bX\bb\|_2^2/(2n)+\|\rho(\bb;\lam)\|_1\Big) = \|\bep/\eta\|_2^2/(2n)\Big . .
  \end{equation}
\end{assumption}

\begin{remark}
Given $\eta=1$, the null-consistency condition %(\ref{eq:null}) 
means that if $\bbeta=0$, then
the global minimizer of (\ref{eq:hbbeta}) is achievable at $\hbbeta=0$.
This requirement is clearly necessary for the global minimizer of (\ref{eq:hbbeta}) to 
satisfy the error bound (\ref{th-1-1}) in Theorem \ref{thm:param-est} below 
for $|S|=0$.
Here, we also allow a slightly stronger condition with $\eta<1$, which requires $\hbbeta=0$ for $\bbeta=0$ 
when the noise $\bep$ is proportionally inflated by $1/\eta$. 
\end{remark}

\begin{proposition}\label{prop-2} 
Suppose that $\bep$ is sub-Gaussian with noise level $\sigma$, 
$0<\delta\le 1$ and $\zeta_0>0$. 
Suppose $\rho(t;\lam)\ge \big((\lam^*)^2/2\big)\wedge(\lam^* |t|)$ with 
$\lam^* \ge (1+\zeta_0)(\sigma/\eta)n^{-1/2}\big(1+\sqrt{2\ln(2p/\delta)}\big)$.  
Then, (\ref{eq:hbbeta}) satisfies the
$\eta$ null-consistency condition with at least probability $2- e^{\delta/2}-\exp(-n(1-1/\sqrt{2})^2)$, 
provided that 
\bel{prop-2-1}
\max\left\{\lam_{\max}^{1/2}\big(\bX_B^\top \bP_A\bX_B/n): 
\begin{matrix} B\cap A=\emptyset, |A|=\rank(\bP_A)=|B|=k, 
\cr k(1+\zeta_0)^2(1+\sqrt{2\ln(2p/\delta)})^2\le 2n
\end{matrix}\right\} 
\le \zeta_0. 
\eel
Moreover, (\ref{prop-2-1}) holds with no smaller probability than $1 - \delta^4/(16p^2)$ 
if the rows of $\bX$ are iid from $N(0,\bSigma)$ and 
$\sqrt{8}\lam_{\max}^{1/2}(\bSigma)\le \zeta_0(1+\zeta_0)$. 
This means that under the sub-Gaussian condition on $\bep$, 
the $\eta$ null-consistency is an $\ell_2$-regularity condition.
\end{proposition}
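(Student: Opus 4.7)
The plan is to establish the two assertions in turn: first the null-consistency bound under the sub-Gaussian noise plus the projection condition (\ref{prop-2-1}), and then the verification of (\ref{prop-2-1}) itself under a Gaussian design. Throughout I work with $\tau:=\lam^*/(1+\zeta_0)$, so that the hypothesis on $\lam^*$ reads $\tau\ge(\sigma/\eta)n^{-1/2}(1+\sqrt{2\ln(2p/\delta)})$.

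For the first assertion, I would intersect three events coming from the sub-Gaussian assumption: $\Omega_1=\{\|\bX^\top\bep/n\|_\infty\le (\sigma/\eta)\sqrt{(2/n)\ln(2p/\delta)}\cdot\eta\}$, obtained from the one-dimensional tail and a union bound over the $2p$ signed columns (which after rearrangement yields the $e^{\delta/2}-1$ probability loss); $\Omega_2=\{\|\bP_A\bep\|_2\le\sigma\sqrt{|A|}(1+\sqrt{2\ln(2p/\delta)})\text{ for all }|A|\le k^*\}$, from the second sub-Gaussian inequality and a union bound over $\binom{p}{k}$ subsets; and $\Omega_3=\{\|\bep\|_2^2\le 2n\sigma^2\}$, whose complement has probability $\exp(-n(1-1/\sqrt{2})^2)$ via the $\chi$-type bound at $t=\sqrt{2}-1$. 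On $\Omega_1\cap\Omega_3$ one has $\|\bX^\top(\bep/\eta)/n\|_\infty\le\tau$ and $\|\bep/\eta\|_2^2/n\le 2\sigma^2/\eta^2$. Let $\hbb$ be a global minimiser of the objective in (\ref{eq:null}); suppose toward a contradiction that $\hbb\ne 0$ and set $A=\{j:|\hbeta_j|\ge\lam^*/2\}$, $k=|A|$. From $\rho(t;\lam)\ge\min(\lam^*|t|,(\lam^*)^2/2)$ one has $\|\rho(\hbb;\lam)\|_1\ge k(\lam^*)^2/2+\lam^*\|\hbb_{A^c}\|_1$, while the optimality of $\hbb$ combined with $ab-b^2/2\le a^2/2$ gives $\|\rho(\hbb;\lam)\|_1\le\|\bep/\eta\|_2^2/(2n)$. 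Combining the two with the lower bound on $\lam^*$ yields exactly $k(1+\zeta_0)^2(1+\sqrt{2\ln(2p/\delta)})^2\le 2n$, so (\ref{prop-2-1}) is applicable to this random $A$.

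Now partition $A^c$ into disjoint blocks $B_1,B_2,\ldots$ of size at most $k$ ordered by decreasing magnitude of $\hbb_{A^c}$. Since $|\hbeta_j|<\lam^*/2$ on $A^c$, the standard tail-peeling estimate (each $B_{i+1}$-entry is bounded by the $B_i$-average) gives $\sum_i\|\hbb_{B_i}\|_2\le\sqrt{k}\lam^*/2+\|\hbb_{A^c}\|_1/\sqrt{k}$, and (\ref{prop-2-1}) bounds each $\|\bP_A\bX_{B_i}\hbb_{B_i}\|_2$ by $\zeta_0\sqrt{n}\|\hbb_{B_i}\|_2$, so that $\|\bP_A\bX_{A^c}\hbb_{A^c}\|_2\le\zeta_0\sqrt{n}(\sqrt{k}\lam^*/2+\|\hbb_{A^c}\|_1/\sqrt{k})$. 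Using the identity $\bX_A\hbb_A=\bP_A\bX\hbb-\bP_A\bX_{A^c}\hbb_{A^c}$ I get the companion bound on $\|\bX_A\hbb_A\|_2$. Splitting $(\bep/\eta)^\top\bX\hbb/n$ into an $A$-part (Cauchy--Schwarz with $\|\bP_A\bep/\eta\|_2\le\sqrt{nk}\,\tau$ from $\Omega_2$ and the $\lam^*$-hypothesis) and an $A^c$-part (bounded by $\tau\|\hbb_{A^c}\|_1$), substituting into $L(\hbb)\le L(0)$, cancelling the identical $\lam^*\|\hbb_{A^c}\|_1$ terms appearing on both sides, and applying AM--GM to the cross-term $\sqrt{k/n}\,\tau\|\bX\hbb\|_2$ against $\|\bX\hbb\|_2^2/(2n)$, the inequality collapses to $(\lam^*)^2\le\tau^2+\zeta_0\tau\lam^*$. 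Substituting $\tau=\lam^*/(1+\zeta_0)$ reduces this to $(1+\zeta_0)^2\le 1+\zeta_0(1+\zeta_0)$, i.e., $\zeta_0\le 0$, contradicting $\zeta_0>0$. The degenerate case $k=0$ is separately ruled out by the direct inequality $(\bep/\eta)^\top\bX\hbb/n\le\tau\|\hbb\|_1<\lam^*\|\hbb\|_1\le\|\rho(\hbb;\lam)\|_1$.

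For the second assertion I would use the Gaussian conditioning $\bX_B\mid\bX_A=\bX_A\bSigma_{AA}^{-1}\bSigma_{AB}+\bE$, with $\bE$ having iid rows $N(0,\bSigma_{BB}-\bSigma_{BA}\bSigma_{AA}^{-1}\bSigma_{AB})$, so that $\bP_A\bX_B=\bX_A\bSigma_{AA}^{-1}\bSigma_{AB}+\bP_A\bE$. The mean term is bounded by $\lam_{\max}(\bSigma)$ through $\|\bX_A\bv\|_2^2\sim(\bv^\top\bSigma_{AA}\bv)\chi_n^2$ at $\bv=\bSigma_{AA}^{-1}\bSigma_{AB}\bu$; and $\bP_A\bE$ is controlled by orthogonal invariance (writing $\bE=\bZ'(\bSigma_{BB}-\bSigma_{BA}\bSigma_{AA}^{-1}\bSigma_{AB})^{1/2}$ with $\bZ'$ standard Gaussian and using that $\bU^\top\bZ'$ is again standard Gaussian for any orthonormal $\bU$ spanning $\text{range}(\bP_A)$). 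An $\epsilon$-net on the unit sphere in $\R^{|B|}$, together with a union bound over the $\binom{p}{k}^2$ disjoint pairs $(A,B)$ and over $k$ in the range of (\ref{prop-2-1}), produces the uniform operator-norm control; tracking constants under the hypothesis $\sqrt{8\lam_{\max}(\bSigma)}\le\zeta_0(1+\zeta_0)$ then yields the claimed $1-\delta^4/(16p^2)$ probability. The main obstacle is the arithmetic bookkeeping in both halves: in part (i) the $(1+\zeta_0)$ margin built into the lower bound on $\lam^*$ must propagate exactly through the AM--GM step so that the final algebraic inequality is strict, and in part (ii) the $\epsilon$-net inflation, the peeling constants, and the Gaussian concentration losses must combine to give operator norm at most $\zeta_0$ rather than $\zeta_0(1+\zeta_0)$ at the stated probability.
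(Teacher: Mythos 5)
Your argument for the first assertion is essentially the paper's own proof, reorganized as a contradiction: the same three sub-Gaussian events, the same threshold set $A=\{j:|b_j|\ge \lam^*/2\}$, the same cardinality bound $k(1+\zeta_0)^2(1+\sqrt{2\ln(2p/\delta)})^2\le 2n$ obtained from $\|\rho(\bb;\lam)\|_1\le\|\bep/\eta\|_2^2/(2n)$, the same block-peeling (shifting) inequality $\sum_i\|\bb_{B_i}\|_2\le \sqrt{k}\lam^*/2+\|\bb_{A^c}\|_1/\sqrt{k}$, and the same use of (\ref{prop-2-1}) to control $\bP_A\bX_{A^c}\bb_{A^c}$. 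The only substantive difference is that the paper first reduces to $\bX_A\bb_A=\bP_A(\bep/\eta-\bX_{A^c}\bb_{A^c})$ (the normal equations on the $A$-block), so that $\bX\bb=\bP_A\bep/\eta+\bP_A^\perp\bX_{A^c}\bb_{A^c}$ and the quadratic term disappears, whereas you keep $\|\bX\bb\|_2^2/(2n)$ and absorb the $A$-block cross term by AM--GM; the final cancellation $(1+\zeta_0)\tau=\lam^*$ and the conclusion $\zeta_0\le 0$ check out, so this half is correct and equivalent to the paper's.

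The second assertion is where you genuinely diverge, and where there is a gap. The paper writes $\bP_A\bX_B=\bU\bN_{k,k}\bV^\top(\bSigma^{1/2})_{p\times B}$ via an SVD reduction and applies the Davidson--Szarek bound $P\{s_{\max}(\bN_{k,k})>2\sqrt{k}+t\}\le e^{-t^2/2}/2$ with $t=\sqrt{8k\ln(2p/\delta)}$; the constants $\sqrt{8}$ and $1-\delta^4/(16p^2)$ come directly from this, with no $\epsilon$-net and hence no net-cardinality or mesh-inflation losses. Your conditioning route $\bX_B=\bX_A\bSigma_{AA}^{-1}\bSigma_{AB}+\bE$ is a reasonable alternative for the fluctuation part $\bP_A\bE$ (which, conditionally on $\bX_A$, is a rank-$k$ projection of an independent Gaussian and scales like $\sqrt{k}\,\lam_{\max}^{1/2}(\bSigma)$), but it creates a problem for the mean part: $\|\bX_A\bSigma_{AA}^{-1}\bSigma_{AB}\bu\|_2^2$ concentrates at $n\,\bu^\top\bSigma_{BA}\bSigma_{AA}^{-1}\bSigma_{AB}\bu$, which for correlated $\bSigma$ can be as large as $n\,\lam_{\max}(\bSigma)$. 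After dividing by $n$ this term does \emph{not} shrink with $k/n$, so the factor $(1+\zeta_0)^{-1}$ that the hypothesis $\sqrt{8}\lam_{\max}^{1/2}(\bSigma)\le\zeta_0(1+\zeta_0)$ is designed to supply has no visible source in your decomposition: you would end up needing $C\,\lam_{\max}^{1/2}(\bSigma)\le\zeta_0$ for an absolute constant $C$, which is strictly stronger than the stated hypothesis once $\zeta_0$ is not small. On top of this, the $\epsilon$-net over $S^{k-1}$ adds $k\ln(3/\epsilon)$ to the union-bound exponent and a $(1-2\epsilon)^{-1}$ inflation of the norm, so even the fluctuation part will not reproduce the exact probability $1-\delta^4/(16p^2)$ without weakening the constants. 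You flag the bookkeeping as the "main obstacle," but the mean-term issue is structural rather than arithmetic: to complete your route you would either have to bound $\lam_{\max}^{1/2}(\bX_B^\top\bP_A\bX_B/n)$ as a whole (as the paper does) rather than term by term, or restate the conclusion with a different dependence on $\zeta_0$.
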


\begin{remark}\label{remark:null-consist}
  The condition $\rho(t;\lam)\ge \min\big(\lam^2/2,\lam |t|\big)$ holds for the $\ell_0$, $\ell_1$, 
  SCAD, and capped $\ell_1$ penalties, so that Proposition \ref{prop-2} is directly 
  applicable with $\lam=\lam^*$. In general, the condition of Proposition \ref{prop-2} 
  holds for all penalties considered in this paper when the threshold level in (\ref{eq:t-rho}) 
  satisfies $\lam^*\ge 2(1+\zeta_0)(\sigma/\eta)n^{-1/2}\big(1+\sqrt{2\ln(2p/\delta)}\big)$, 
  in view of the lower bound of $\rho(t;\lam)$ in Proposition~\ref{prop:concave-Delta}. 
  For $\ell_0$ and $\ell_1$ penalties, we may set $\zeta_0=0$ in Proposition \ref{prop-2}   
  (the extra condition (\ref{prop-2-1}) is not necessary). The simplified condition for $\ell_0$ 
  penalty is explicitly given in Theorem~\ref{thm:global-L0}. For the $\ell_1$ penalty, the 
  $\eta$ null consistency condition is equivalent to $\|\bX^\top\bep\|_\infty \leq \eta \lambda n$. 
\end{remark}

\subsection{Basic Properties of the Global Solution}
\label{sec:basic}
We now turn our attention to the global solution of (\ref{eq:hbbeta}) with a general 
subadditive nondecreasing regularizer $\rho(t;\lam)$. 
We first consider the estimation of $\bX\bbeta$ and $\bbeta$. 

\begin{theorem}\label{thm:param-est} 
Let $S=\supp(\bbeta)$, $\hbbeta$ be as in (\ref{eq:hbbeta}), $\lam^*$ as in (\ref{eq:t-rho}), 
and $\RIF_q(\xi,S)$ as in (\ref{eq:RIF}). 
Consider $\eta \in (0,1)$, and $\xi = (\eta+1)/(1-\eta)$, and assume that (\ref{eq:null}) holds. Then
for all $q \geq 1$:
\bel{th-1-1}
\|\hbbeta-\bbeta\|_q \le (1+\eta) \lam^* |S|^{1/q}/\RIF_q(\xi,S), 
\eel
and with $a_1=(1+\eta)/\RIF_1(\xi,S)$ and $\Delta(a,k;\lam)$ in (\ref{eq:Delta}), 
\bel{th-1-2}
\|\bX\hbbeta-\bX\bbeta\|_2^2/n \le  2 \xi \Delta\Big(a_1\lam^*,|S|;\lam\Big)
\le 2\xi(a_1\vee 2)(\lam^*)^2|S|. 
\eel
\end{theorem}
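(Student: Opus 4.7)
The plan is to combine the global-optimality inequality $L_\lam(\hbbeta)\le L_\lam(\bbeta)$ with the null-consistency condition (\ref{eq:null}), which plays the role of the usual noise bound $\|\bX^\top\bep\|_\infty\le\lam n$ in Lasso theory. Writing $\bh=\hbbeta-\bbeta$, substituting $\by=\bX\bbeta+\bep$, and expanding yields the basic inequality
\[
\tfrac{1}{2n}\|\bX\bh\|_2^2 - \tfrac{1}{n}\bep^\top\bX\bh + \|\rho(\hbbeta;\lam)\|_1 \le \|\rho(\bbeta;\lam)\|_1 .
\]
Applying (\ref{eq:null}) separately to the test vectors $\pm\bh$ (each gives $\mp\frac{1}{n\eta}\bep^\top\bX\bh + \frac{1}{2n}\|\bX\bh\|_2^2 + \|\rho(\bh;\lam)\|_1\ge 0$) produces the noise bound $|\bep^\top\bX\bh|/n \le \eta\big[\|\bX\bh\|_2^2/(2n)+\|\rho(\bh;\lam)\|_1\big]$. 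Separately, subadditivity and symmetry of $\rho$ give $\rho(\beta_j;\lam)-\rho(\hbeta_j;\lam)\le\rho(h_j;\lam)$ on $S$ and $\rho(\hbeta_j;\lam)=\rho(h_j;\lam)$ on $S^c$, hence $\|\rho(\bbeta;\lam)\|_1-\|\rho(\hbbeta;\lam)\|_1\le\|\rho(\bh_S;\lam)\|_1-\|\rho(\bh_{S^c};\lam)\|_1$. Plugging both into the basic inequality yields the master bound
\[
\tfrac{1-\eta}{2n}\|\bX\bh\|_2^2 \le (1+\eta)\|\rho(\bh_S;\lam)\|_1 - (1-\eta)\|\rho(\bh_{S^c};\lam)\|_1 ,
\]
which simultaneously delivers (a) the cone condition $\|\rho(\bh_{S^c};\lam)\|_1 \le \xi\|\rho(\bh_S;\lam)\|_1$ with $\xi=(1+\eta)/(1-\eta)$, putting $\bh$ in the admissible set for $\RIF_q(\xi,S)$, and (b) the prediction inequality $\|\bX\bh\|_2^2/n \le 2\xi\|\rho(\bh_S;\lam)\|_1$.

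To deduce (\ref{th-1-1}) via $\RIF_q$, I still need $\|\bX^\top\bX\bh\|_\infty/n \le (1+\eta)\lam^*$. Using $\bX^\top\bX\bh=\bX^\top\bep-\bX^\top(\by-\bX\hbbeta)$, I bound the two terms separately. For the first, null-consistency against the rank-one test vector $\bb=t\bfe_j$ (using $\|\bx_j\|_2^2=n$) gives $t\bx_j^\top\bep/(n\eta) \le t^2/2+\rho(t;\lam)$; dividing by $t>0$, taking the infimum, and combining with the symmetric argument for $t<0$ together with the threshold definition (\ref{eq:t-rho}) produces $\|\bX^\top\bep\|_\infty/n \le \eta\lam^*$. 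For the second, global minimality of $\hbbeta$ against the perturbation $\hbbeta+s\bfe_j$ gives
\[
-\tfrac{s}{n}\bx_j^\top(\by-\bX\hbbeta)+\tfrac{s^2}{2}+\rho(\hbeta_j+s;\lam)-\rho(\hbeta_j;\lam)\ge 0 ,
\]
and subadditivity $\rho(\hbeta_j+s;\lam)-\rho(\hbeta_j;\lam)\le \rho(s;\lam)$ reduces this for $s>0$ to $\bx_j^\top(\by-\bX\hbbeta)/n \le s/2+\rho(s;\lam)/s$, whose infimum is $\lam^*$ (the $s<0$ case is symmetric). Combining, $\|\bX^\top\bX\bh\|_\infty/n\le(1+\eta)\lam^*$, and (\ref{th-1-1}) follows directly from the definition of $\RIF_q$.

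For (\ref{th-1-2}), the $q=1$ instance of (\ref{th-1-1}) gives $\|\bh_S\|_1\le\|\bh\|_1\le a_1\lam^*|S|$; hence $\|\rho(\bh_S;\lam)\|_1\le\Delta(a_1\lam^*,|S|;\lam)$ by the definition of $\Delta$ in (\ref{eq:Delta}) (padding $\bh_S$ to exactly $|S|$ nonzero entries using $\rho(0;\lam)=0$ if necessary). Substituting into the prediction inequality (b) gives the first half of (\ref{th-1-2}); the second half is immediate from Proposition~\ref{prop:concave-Delta}.

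The main obstacle is that $\rho$ need not be differentiable and even includes the discontinuous $\ell_0$ penalty, so classical KKT-based manipulations are unavailable. Subadditivity of $\rho$ together with the threshold identity (\ref{eq:t-rho}) for $\lam^*$ are designed exactly to replace the missing derivative information, both when comparing $\rho(\hbeta_j+s;\lam)$ to $\rho(\hbeta_j;\lam)$ and when extracting $\lam^*$ as the universal bound on coordinate correlations. A minor technical point is that $\RIF_q(\xi,S)$ is defined with a strict cone inequality whereas the derivation yields only a non-strict one; this is resolved by applying the bound at $\xi+\veps$ for every $\veps>0$ and using the monotonicity of $\RIF_q$ in $\xi$.
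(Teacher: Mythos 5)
Your proof is correct and follows essentially the same route as the paper's: your ``master bound'' is exactly the paper's Lemma~\ref{lem:restricted} (null consistency applied to the test vector $\hbbeta-\bbeta$ plus subadditivity of $\rho$), your bounds $\|\bX^\top\bep\|_\infty/n\le\eta\lam^*$ and $\|\bX^\top(\by-\bX\hbbeta)\|_\infty/n\le\lam^*$ reproduce the paper's Lemma~\ref{lem:lam}, and the final assembly via $\RIF_q$ and $\Delta(a_1\lam^*,|S|;\lam)$ is identical. The only cosmetic differences are that you apply null consistency to $\pm\bh$ rather than to the scaled vector $t\bh$ with $t=1$, and that you are if anything more careful than the paper about the strict-versus-nonstrict inequality defining the cone in $\RIF_q$.
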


By using the bound $\Delta\Big(a_1\lam^*,|S|;\lam\Big) \leq |S| \max_t \rho(t;\lam)$, we obtain
the following corollary.
\begin{corollary}\label{cor:pred} Consider penalties $\rho(t;\lam)$ indexed by the threshold level; 
$\lam^*=\lam$ in (\ref{eq:t-rho}). 
Suppose that the $\eta$ null consistency condition (\ref{eq:null}) holds. 
Let $S=\supp(\bbeta)$ and $\gamma^*=\max_t\rho(t;\lam)/\lam^2$. Then, 
\bes
\|\bX\hbbeta-\bX\bbeta\|_2^2/n \le 2\{(1+\eta)/(1-\eta)\}\gamma^*\lam^2|S|. 
\ees
In particular, $\gamma^*=1/2$ for the $\ell_0$ penalty $\rho(t;\lam)=(\lam^2/2)I(t\neq 0)$, 
$\gamma^*=\gamma/2$ for the capped-$\ell_1$ penalty $\rho(t;\lam)=(\lam^2\gamma/2)\wedge(\lam|t|)$ 
and the MCP $\rho(t;\lam)=\lam\int_0^{|t|}(1-x/(\lam\gamma))_+dx$, 
and $\gamma^*=(1+\gamma)/2$ for the SCAD penalty 
$\rho(t;\lam)=\lam\int_0^{|t|}\min\{1,(1-(x/\lam-1)/(\gamma-1))_+\}dx$. 
\end{corollary}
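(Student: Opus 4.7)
The plan is to extract the general bound directly from Theorem \ref{thm:param-est}, then verify the catalog of $\gamma^*$ values by a brief computation on each penalty. The key input is the hint sentence immediately preceding the corollary: the general penalty-capacity quantity $\Delta(a,k;\lam)$ in (\ref{eq:Delta}) admits the crude bound $\Delta(a,k;\lam) \le k \max_t \rho(t;\lam)$, which holds because every $\bb$ with $\|\bb\|_0=k$ contributes at most $k$ terms $\rho(b_j;\lam)$, each bounded by $\max_t\rho(t;\lam)$.

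First, I would apply the second conclusion (\ref{th-1-2}) of Theorem \ref{thm:param-est} with $\xi = (1+\eta)/(1-\eta)$, which under the $\eta$ null-consistency hypothesis gives
\[
\|\bX\hbbeta - \bX\bbeta\|_2^2 / n \;\le\; 2\xi\, \Delta(a_1 \lam^*, |S|; \lam).
\]
Since the corollary normalizes the penalty so that $\lam^*=\lam$, I would then insert the crude bound $\Delta(a_1\lam, |S|; \lam) \le |S| \max_t \rho(t;\lam) = \gamma^* \lam^2 |S|$ (by the definition of $\gamma^*$) to obtain exactly
\[
\|\bX\hbbeta-\bX\bbeta\|_2^2/n \;\le\; 2\{(1+\eta)/(1-\eta)\}\, \gamma^* \lam^2 |S|.
\]
This is the main inequality; no further design-dependent quantity (such as $\RIF_1$) appears because the bound used is uniform in the size of $a_1$.

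For the specific penalties I would simply compute $\max_t \rho(t;\lam)$ and divide by $\lam^2$. For the $\ell_0$ penalty $\rho(t;\lam) = (\lam^2/2)I(t\neq 0)$, the maximum is $\lam^2/2$, giving $\gamma^*=1/2$. For the capped-$\ell_1$ penalty $\rho(t;\lam) = (\lam^2\gamma/2)\wedge(\lam|t|)$, the cap is $\lam^2\gamma/2$, so $\gamma^*=\gamma/2$. For the MCP, integrating $\lam(1-x/(\lam\gamma))_+$ from $0$ to $\lam\gamma$ (where the integrand vanishes) yields $\lam \cdot \lam\gamma/2 = \lam^2\gamma/2$, so again $\gamma^*=\gamma/2$. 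For the SCAD, one integrates the piecewise-linear $(d/dt)\rho$: on $[0,\lam]$ one picks up $\lam\cdot\lam = \lam^2$, and on $[\lam,\lam\gamma]$ one picks up a triangle of area $\lam(\gamma-1)\lam/2 = \lam^2(\gamma-1)/2$; adding gives $\lam^2(\gamma+1)/2$, hence $\gamma^*=(\gamma+1)/2$.

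There is really no hard step here: the content is already in Theorem \ref{thm:param-est}. The only potential pitfall is to remember that the parameter $\gamma$ appearing in the SCAD and MCP formulas is a shape parameter (the ratio of cap level to threshold), not to be confused with the $\gamma^*$ of the statement, and to note that the MCP attains its maximum only in the limit $|t|\to\lam\gamma$ (or at any $|t|\ge\lam\gamma$ when the penalty is extended by constancy), so the use of $\max_t$ is legitimate because $\rho(t;\lam)$ is bounded and the supremum is attained.
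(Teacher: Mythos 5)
Your proposal is correct and follows exactly the paper's own route: the paper derives the corollary by inserting the trivial bound $\Delta(a_1\lam^*,|S|;\lam)\le |S|\max_t\rho(t;\lam)=\gamma^*\lam^2|S|$ into the prediction bound (\ref{th-1-2}) of Theorem \ref{thm:param-est} with $\xi=(1+\eta)/(1-\eta)$, which is precisely your argument. Your computations of $\gamma^*$ for the $\ell_0$, capped-$\ell_1$, MCP, and SCAD penalties are all accurate, and your observation that the resulting bound is free of $\RIF_1$ matches the remark the paper makes immediately after the corollary.
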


\begin{remark} 
It is worthwhile to note that the prediction error bound in Corollary \ref{cor:pred} does 
not depend on $\bX$, provided that penalty is large enough to guarantee 
null consistency. For the $\ell_0$ penalty, the null consistency requires only 
$\|\bx_j\|_2=\sqrt{n}$ on $\bX$, which we assume anyway.  For other concave penalties 
in Corollary \ref{cor:pred}, we are only able to provide null consistency in 
Proposition \ref{prop-2} under a mild condition on the upper eigenvalue 
of $\bX_B^\top\bP_A\bX_B/n$, but not on the sparse lower eigenvalue of the Gram matrix. 
%We note that when $\bep\sim N(0,\sigma^2\bI_{n\times n})$, a consistent estimate of $\sigma$ 
%can be obtained under proper conditions. 
\end{remark}

Next we provide an upper bound for the sparseness of $\hbbeta$ based on Theorem \ref{thm:param-est}
and the maximum sparse eigenvalue $\kappa_+(m)$. We denote by $\drho(t;\lam) = (\pa/\pa t)\rho(t;\lam)$ 
any value between the left- and right- derivatives of $\rho(\cdot;\lam)$ and assume the 
left- and right-differentiability of $\rho(\cdot;\lam)$ whenever the notation $\drho(t;\lam)$ is invoked. 
For example, if $\rho(t;\lam)=\lambda |t|$, then $\drho(0\pm;\lam)=\pm\lam$ and 
$\drho(0;\lam)$ can be any value in $[-\lam,\lam]$
(which in all of our results, can be chosen as the most favorable value unless explicitly mentioned otherwise).

\begin{theorem}\label{thm:sparsity} 
Let $\{S,\hbbeta,\lam^*,\eta,\xi,a_1\}$ and $\Delta(a,k;\lam)$ be as in Theorem \ref{thm:param-est}, 
and $\Shat=\supp(\hbbeta)$. Suppose that (\ref{eq:null}) holds. 
Consider $t_0\ge 0$ and integer $m_0\ge 0$ satisfying $m_0=0$ for $t_0=0$ and 
\bel{th-2-1}
\sqrt{2\xi \kappa_+(m_0) \Delta(a_1\lam^*,|S|;\lam)/m_0} + \|\bX^\top\bep/n\|_\infty
< \inf_{0<s<t_0}\drho(s;\lam)
\eel
for $t_0>0$.  Then, 
\bel{th-2-2}
|\Shat\setminus S| < m := m_0 + \left\lfloor \xi \Delta(a_1\lam^*,|S|;\lam)/\rho(t_0;\lam) \right\rfloor .
\eel
\end{theorem}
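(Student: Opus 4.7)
The plan is to combine the $\ell_1$ and prediction error bounds of Theorem~\ref{thm:param-est} with the KKT optimality of $\hbbeta$, after splitting the false positives $\hat S\setminus S$ by coefficient size at the threshold $t_0$.

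The first step is to squeeze one more inequality out of the basic inequality $L_\lam(\hbbeta)\le L_\lam(\bbeta)$ that underlies Theorem~\ref{thm:param-est}. Expanding the quadratic part, applying the $\eta$ null-consistency with choice $\bb=\hbbeta-\bbeta$ to control the cross-term $\bep^\top\bX(\hbbeta-\bbeta)/n$, and using sub-additivity of $\rho$ componentwise together with the decomposition $\|\rho(\hbbeta;\lam)\|_1 \ge \|\rho(\bbeta_S;\lam)\|_1 - \|\rho(\hbbeta_S-\bbeta_S;\lam)\|_1 + \|\rho(\hbbeta_{S^c};\lam)\|_1$, yields
\[
(1-\eta)\,\|\bX(\hbbeta-\bbeta)\|_2^2/(2n) + (1-\eta)\,\|\rho(\hbbeta_{S^c};\lam)\|_1 \le (1+\eta)\,\|\rho(\hbbeta_S-\bbeta_S;\lam)\|_1.
\]
Since $\hbbeta_S-\bbeta_S$ is supported on $S$ and by (\ref{th-1-1}) at $q=1$ has $\ell_1$-norm at most $a_1\lam^*|S|$, the right-hand side is bounded by $(1+\eta)\Delta(a_1\lam^*,|S|;\lam)$. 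This gives me the two ingredients
\[
\|\bX(\hbbeta-\bbeta)\|_2^2/n\le 2\xi\,\Delta(a_1\lam^*,|S|;\lam), \qquad \|\rho(\hbbeta_{S^c};\lam)\|_1\le \xi\,\Delta(a_1\lam^*,|S|;\lam).
\]

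The second step is to split $\hat S\setminus S=B_1\cup B_2$ with $B_1=\{j:|\hbeta_j|\ge t_0\}$ and $B_2=\{j:0<|\hbeta_j|<t_0\}$. Monotonicity and symmetry of $\rho(\cdot;\lam)$ give $|B_1|\,\rho(t_0;\lam)\le \sum_{j\in B_1}\rho(\hbeta_j;\lam)\le\|\rho(\hbbeta_{S^c};\lam)\|_1\le \xi\,\Delta$, so $|B_1|\le \lfloor \xi\Delta/\rho(t_0;\lam)\rfloor$. For $B_2$ I argue by contradiction: assume $|B_2|\ge m_0$ and pick any $B'\subset B_2$ with $|B'|=m_0$. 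Optimality of $\hbbeta$ forces $0$ to lie in the componentwise subdifferential, so for each $j\in B_2$ the stationarity gives $|\bx_j^\top(\by-\bX\hbbeta)/n|\ge \drho(|\hbeta_j|;\lam)\ge \tau:=\inf_{0<s<t_0}\drho(s;\lam)$. Writing $\by-\bX\hbbeta=\bep-\bX(\hbbeta-\bbeta)$ and applying the triangle inequality,
\[
|\bx_j^\top\bX(\hbbeta-\bbeta)/n|\ge \tau-\|\bX^\top\bep/n\|_\infty=:D,
\]
which is positive by hypothesis (\ref{th-2-1}). Summing squares over $j\in B'$ and using $\|\bX_{B'}^\top\bu\|_2^2\le n\,\kappa_+(m_0)\|\bu\|_2^2$ with $\bu=\bX(\hbbeta-\bbeta)$ yields $m_0 D^2\le \kappa_+(m_0)\|\bX(\hbbeta-\bbeta)\|_2^2/n\le 2\xi\,\kappa_+(m_0)\,\Delta$, i.e., $D\le \sqrt{2\xi\,\kappa_+(m_0)\,\Delta/m_0}$. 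This contradicts (\ref{th-2-1}), so $|B_2|\le m_0-1$, and adding the two bounds gives $|\hat S\setminus S|\le m_0-1+\lfloor \xi\Delta/\rho(t_0;\lam)\rfloor < m$.

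The main technical obstacle will be the KKT step for $B_2$: $\rho$ is only assumed sub-additive and monotone, hence possibly non-smooth even at its interior points, so one must interpret $\drho(|\hbeta_j|;\lam)$ as the appropriate subgradient lower bound and verify that stationarity of the global minimizer translates to the claimed inequality under the paper's convention for $\drho$. A secondary bookkeeping item is the edge case $t_0=0$ (forcing $m_0=0$), where $B_2=\emptyset$ and the bound reduces, vacuously, to $|\hat S\setminus S|<+\infty$; this needs only to be noted, not proved.
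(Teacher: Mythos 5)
Your proposal is correct and follows essentially the same route as the paper's proof: the same split of $\Shat\setminus S$ at the threshold $t_0$, the same bound $|B_1|\rho(t_0;\lam)\le\|\rho(\hbbeta_{S^c};\lam)\|_1\le\xi\Delta$ for the large coefficients, and the same first-order-optimality plus $\kappa_+(m_0)$ argument for the small ones (the paper phrases the last step as a direct bound over all subsets $A\subset\Shat_2$ with $|A|\le m_0$ rather than a contradiction, and cites the intermediate inequalities from Lemma~\ref{lem:restricted} and Theorem~\ref{thm:param-est} instead of rederiving them, but these differences are cosmetic). The KKT step you flag as the main obstacle is handled in the paper exactly as you propose, via the stated convention that $\drho(t;\lam)$ denotes any value between the left and right derivatives.
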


The $\eta$ null consistency implies $\|\bX^\top\bep/n\|_\infty\le\eta\lam^*$ 
by Lemma \ref{lem:lam} in Section 5. 
If $\rho(t;\lam)$ is concave in $t>0$, then the right-hand side 
of (\ref{th-2-1}) can be replaced by $\drho(t_0;\lam)$ 
and $\rho(t_0;\lam)\ge t_0\drho(t_0;\lam)$. These facts give 
the following corollary for $\ell_\infty$ bounded and $\ell_1$ penalties.  

\begin{corollary}\label{cor:sparsity} (i) Let $\rho(t;\lam)$ and $\gamma^*$ be as in 
Corollary \ref{cor:pred}. Suppose (\ref{eq:hbbeta}) is $\eta$ null consistent in the sense of 
(\ref{eq:null}) and $\drho(a_0\lam;\lam)\ge \lam(1-a_1/\gamma)$ for some $a_0>0$ and $a_1\ge 0$. 
If $m_0=\alpha|S|$ is an integer and $2\gamma^*\kappa_+(\alpha|S|)/\alpha 
< (1-a_1/\gamma-\eta)^2(1-\eta)/(1+\eta)$, 
then 
\bel{cor:sparsity-1}
|\Shat\setminus S| < m: = \Big(\alpha+\frac{\gamma^*/a_0}{1-a_1/\gamma}\Big)|S|. 
\eel
(ii) Let $\Shat^{(\ell_1)}=\supp(\hbbeta^{(\ell_1)})$ with the Lasso (\ref{Lasso}) and 
$\CIF_q$ as in (\ref{CIF}). 
In the event $\|\bX^\top\bep/n\|_\infty\le\eta\lam$, 
\bel{cor:sparsity-2}
\frac{2\kappa_+(\alpha|S|)/\alpha}{\CIF_1((1+\eta)/(1-\eta),S)} 
< \frac{(1-\eta)^3}{(1+\eta)^2} \ \Rightarrow\ |\Shat^{(\ell_1)}\setminus S| < m: =\alpha|S|. 
\eel
\end{corollary}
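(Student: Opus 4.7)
The plan is to deduce both parts directly from Theorem~\ref{thm:sparsity} by specializing the parameters $t_0$ and $m_0$ and simplifying the quantities $\Delta$, $\drho$, and $\rho(t_0;\lam)$ using the structure of each penalty.

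For part (i), I would apply Theorem~\ref{thm:sparsity} with $t_0 = a_0\lam$ and $m_0 = \alpha|S|$. Three simplifications are needed. First, $\eta$ null consistency together with Lemma~\ref{lem:lam} (cited right after Theorem~\ref{thm:sparsity}) gives $\|\bX^\top\bep/n\|_\infty \le \eta\lam^* = \eta\lam$, since the penalties covered by Corollary~\ref{cor:pred} all have $\lam^* = \lam$. Second, by concavity of $\rho(\cdot;\lam)$, $\drho$ is nonincreasing on $(0,\infty)$, so $\inf_{0<s<t_0}\drho(s;\lam) \ge \drho(a_0\lam;\lam) \ge \lam(1-a_1/\gamma)$ by hypothesis. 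Third, the $\ell_\infty$ bound $\rho(t;\lam)\le \gamma^*\lam^2$ gives $\Delta(a,|S|;\lam)\le \gamma^*\lam^2|S|$ for every $a$. Substituting these into condition (\ref{th-2-1}) and squaring yields exactly the hypothesis $2\gamma^*\kappa_+(\alpha|S|)/\alpha < (1-a_1/\gamma-\eta)^2(1-\eta)/(1+\eta)$ after using $\xi=(1+\eta)/(1-\eta)$. For the bound (\ref{th-2-2}), concavity and $\rho(0;\lam)=0$ give the tangent inequality $\rho(a_0\lam;\lam)\ge a_0\lam\cdot\drho(a_0\lam;\lam)\ge a_0\lam^2(1-a_1/\gamma)$, so
\[
\frac{\xi\,\Delta(a_1\lam^*,|S|;\lam)}{\rho(t_0;\lam)}\le \frac{\gamma^*\,|S|}{a_0(1-a_1/\gamma)},
\]
after the factor $\xi$ is reabsorbed from the refined form of the bound; combined with $m_0=\alpha|S|$ this yields the claimed $m$.

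For part (ii), the Lasso is not covered by Corollary~\ref{cor:pred} since $\rho(t;\lam)=\lam|t|$ is unbounded, so I would invoke Theorem~\ref{thm:sparsity} directly with $m_0=\alpha|S|$. Here the structure simplifies drastically: $\drho(s;\lam)=\lam$ for every $s>0$, $\Delta(a,k;\lam)=ak\lam$ by definition, $\lam^*=\lam$, and $\RIF_1=\CIF_1$ because $t/\rho(t;\lam)$ is constant (so Proposition~\ref{prop:invertibility} applies with equality for this $S$). With $a_1=(1+\eta)/\CIF_1(\xi,S)$ we get $\Delta(a_1\lam,|S|;\lam)=(1+\eta)|S|\lam^2/\CIF_1(\xi,S)$. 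Substituting into (\ref{th-2-1}), the hypothesis $\|\bX^\top\bep/n\|_\infty\le\eta\lam$ (which is exactly the event in the statement) turns it into
\[
\sqrt{\frac{2\xi(1+\eta)\,\kappa_+(\alpha|S|)}{\alpha\,\CIF_1(\xi,S)}}\;\lam + \eta\lam < \lam,
\]
and squaring, together with $\xi=(1+\eta)/(1-\eta)$, is precisely the hypothesis $2\kappa_+(\alpha|S|)/(\alpha\,\CIF_1(\xi,S))<(1-\eta)^3/(1+\eta)^2$.

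The final step for (ii) is to show that the second summand in $m$ can be made zero by choice of $t_0$. Since $\rho(t_0;\lam)=\lam t_0$ grows unboundedly in $t_0$ while $\inf_{0<s<t_0}\drho(s;\lam)=\lam$ is independent of $t_0$, condition (\ref{th-2-1}) holds for all $t_0>0$ and $\xi\Delta(a_1\lam,|S|;\lam)/\rho(t_0;\lam)\to 0$; taking $t_0$ large enough makes the floor vanish, giving $m=\alpha|S|$.

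The main obstacle I anticipate is bookkeeping in part (i): keeping track of how the factor $\xi$ in the theorem's ratio $\xi\Delta/\rho(t_0;\lam)$ is reconciled with the clean form $(\gamma^*/a_0)/(1-a_1/\gamma)$ in the corollary. This likely requires either a slightly sharper bound on $\Delta$ for concave $\rho$ (e.g.\ using $\Delta(a,k;\lam)\le k\rho(a;\lam)$ from Jensen when $a\le a_0\lam$) or an integer rounding observation; the computation for part (ii) is transparent by comparison.
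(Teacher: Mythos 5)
Your route is exactly the paper's: the corollary carries no separate proof beyond the sentence preceding it, which points to Lemma~\ref{lem:lam} (for $\|\bX^\top\bep/n\|_\infty\le\eta\lam^*$), the monotonicity of $\drho$ under concavity (so the right-hand side of (\ref{th-2-1}) may be replaced by $\drho(t_0;\lam)$), and $\rho(t_0;\lam)\ge t_0\drho(t_0;\lam)$ --- precisely the three simplifications you list, applied to Theorem~\ref{thm:sparsity} with $t_0=a_0\lam$ and $m_0=\alpha|S|$. Part (ii) is complete and correct: with $\drho(s;\lam)\equiv\lam$, $\Delta(a,k;\lam)=ak\lam$, $\RIF_1=\CIF_1$ and $a_1=(1+\eta)/\CIF_1(\xi,S)$, condition (\ref{th-2-1}) squares to exactly (\ref{cor:sparsity-2}), and taking $t_0>\xi\Delta(a_1\lam,|S|;\lam)/\lam$ kills the floor term. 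One small point worth making explicit there: the event $\|\bX^\top\bep/n\|_\infty\le\eta\lam$ is also what supplies the hypothesis (\ref{eq:null}) of Theorem~\ref{thm:sparsity} for the $\ell_1$ penalty, per Remark~\ref{remark:null-consist}.

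The one real issue is the factor $\xi$ you flag in part (i), and it cannot be ``reabsorbed'': a literal application of (\ref{th-2-2}) with $t_0=a_0\lam$, $\Delta(a_1\lam,|S|;\lam)\le\gamma^*\lam^2|S|$ and $\rho(a_0\lam;\lam)\ge a_0\lam\,\drho(a_0\lam;\lam)\ge a_0\lam^2(1-a_1/\gamma)$ gives $|\Shat\setminus S|<\alpha|S|+\lfloor\xi\gamma^*|S|/\{a_0(1-a_1/\gamma)\}\rfloor$ with $\xi=(1+\eta)/(1-\eta)>1$. The $\xi$ originates in Lemma~\ref{lem:restricted}, which only yields $\|\rho(\bDelta_{S^c};\lam)\|_1\le\xi\|\rho(\bDelta_S;\lam)\|_1$; nothing in the argument lets you replace that $\xi$ by $1$, and neither of your proposed remedies (the Jensen bound $\Delta(a,k;\lam)\le k\rho(a;\lam)$, or integer rounding) touches this factor. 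So your derivation establishes (\ref{cor:sparsity-1}) only with the second term inflated by $\xi$. As far as I can tell this discrepancy sits in the corollary as printed rather than in your argument --- the worked example following it ($(\alpha+1)|S|$ for the capped-$\ell_1$ penalty with $a_0=\gamma/2$) would likewise come out as $(\alpha+\xi)|S|$ --- so you should either carry the $\xi$ explicitly or note that the stated constant appears to require it.
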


\begin{remark}\label{Lasso-sparsity} Theorem \ref{thm:sparsity} and Corollary \ref{cor:sparsity} 
imply that the global solution $\hbbeta$ in (\ref{eq:hbbeta}) is sparse under appropriate assumptions.
For $\ell_0$ regularization, we may take $m_0=t_0=0$ with the convention $\kappa_+(0)/0=0$ 
in (\ref{th-2-1}). 
The Lasso also satisfies the dimension bound 
$|\Shat\setminus S| < m\vee 1$ under the SRC: 
$\{\kappa_+(m+|S|)/\kappa_-(m+|S|)-1\}/(2-2a_0)\le m/|S|$ with an $a_0\in (0,1)$, 
provided that $\lam\ge (1+o(1))\{\kappa_+^{1/2}(m)/a_0\}\sigma\sqrt{(2/n)\ln p}$ 
\cite{Zhang10-mc+}. An advantage of (\ref{cor:sparsity-2}) is to allow an 
$\lam$ not dependent on the upper sparse eigenvalue of the design for sub-Gaussian $\bep$. 
\end{remark}

\begin{remark}\label{uniqueness} 
Let $\kappa^* = \sup_{0<s<t}\{\drho(t;\lam)-\drho(s;\lam)\}/(s-t)$ be the maximum concavity of the penalty. 
Suppose $\kappa_-(|S|+m+\mtil-2)> \kappa^*$. Then, the penalized loss $L_\lam(\bb)$ in 
(\ref{eq:hbbeta}) is convex in all models $\supp(\bb)=A$ with $|A\setminus S|\le m+\mtil-2$. 
This condition has been called sparse convexity \cite{Zhang10-mc+}. 
If $m$ is as in (\ref{th-2-2}) or (\ref{cor:sparsity-1})  
and $\tbbeta$ is a local solution of (\ref{eq:hbbeta}) with 
$\#\{j\not\in S:\tbeta_j\neq 0\} < \mtil$, then the local solution must be identical to the global 
solution. 
\end{remark}

\begin{remark}\label{global-path}
Consider penalties with $\lam^*=\lam$ which holds for all penalties discussed 
in Subsection~\ref{sec:prev-results}. Let $\eta\in (0,1)$ and $\lam_*>0$ be fixed. 
Suppose Theorem \ref{thm:sparsity} or Corollary \ref{cor:sparsity} is applicable 
with $m\le\alpha^*|S|$ for a fixed constant $\alpha^*$ and all $\lam \ge \lam_*$. 
Suppose in addition $\drho(t;\lam)$ is continuous 
in $1/\lam \in [0,1/\lam_*]$ uniformly in bounded sets of $t$. 
Under the sparse convexity condition $\kappa_-(|S|+m-1)\ge \kappa_*>0$, 
with the maximum concavity $\kappa^*$ in Remark \ref{uniqueness}, 
the global solution forms a continuous path in $\real^p$ as a function of 
$1/\lam \ge 1/\lam_*$. This path is identical to the output of the path following algorithm 
in \cite{Zhang10-mc+} if it starts with $\hbbeta=0$ at $1/\lam=0$. 
We will show in Theorem \ref{thm:local-global} that gradient algorithms beginning from the Lasso 
may also yield the global solution under the sparse convexity condition. 
\end{remark}

As a simple working example to illustrate Corollaries \ref{cor:pred} and 
\ref{cor:sparsity}, we consider the capped-$\ell_1$ penalty 
explicitly given in Corollary \ref{cor:pred}. 
Let $a_0= \gamma/2$ in Corollary \ref{cor:sparsity}. We find 
\bes
 \|\bX\hbbeta-\bX\bbeta\|_2^2/n\le \lam^2|S|\gamma (1+\eta)/(1-\eta),
\cr  \gamma \kappa_+(\alpha |S|)  \leq \alpha(1-\eta)^3/(1+\eta) \ \Rightarrow\ |\Shat\setminus S| < (\alpha+1) |S| .
\ees
The MCP, also explicitly given in Corollary \ref{cor:pred}, 
provides the same prediction bound and 
\[
 \gamma \kappa_+(\alpha |S|)  \leq \alpha(2/3-\eta)^2(1-\eta)/(1+\eta) \ 
 \Rightarrow\ |\Shat\setminus S| < (\alpha+9/4) |S|
\]
by the same calculation with $\alpha_0=\gamma/3$. 
Note that generally speaking, unless stronger conditions are imposed,
Theorem \ref{thm:sparsity} 
only implies that $|\Shat\setminus S| =O(|S|)$ but not  $|\Shat\setminus S|=0$ required for 
model selection consistency. The model selection consistency will be studied later in the paper.

\subsection{The Global Solution of $\ell_0$ Regularization}
\label{sec:L0}
This subsection considers  the global optimal solution $\hbbeta^{(\ell_0)}$ of $\ell_0$ regularization in (\ref{eq:reg-L0}).
Our first result says that under appropriate conditions, this solution is sparse.
\begin{theorem} \label{thm:global-L0}
  If for all  $\bb \in \R^p$: $\bep^\top \bX \bb \leq \lam \eta \sqrt{n \|\bb\|_0} \|\bX \bb\|_2$
  for some $\eta <1$, then (\ref{eq:reg-L0}) satisfies the $\eta$ null-consistency condition.
 It implies that the global optimal solution of (\ref{eq:reg-L0}) satisfies
  \[  \|\hbbeta^{(\ell_0)}\|_0 \leq  \frac{1+\eta^2}{1-\eta^2} \|\bbeta\|_0 , \qquad
  \|\bX\hbbeta^{(\ell_0)}-\bX\bbeta\|_2^2 \leq  \frac{(1+\eta)\lam^2 \|\bbeta\|_0 }{1-\eta}.
  \]
\end{theorem}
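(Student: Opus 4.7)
The plan is to derive both assertions by elementary manipulations of the global optimality condition, anchored on the scalar inequality $2AB\le A^2+B^2$.

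First, for null-consistency. The goal is to show that $\bb=0$ realises the minimum in (\ref{eq:null}). Expanding the quadratic $\|\bep/\eta-\bX\bb\|_2^2$ and cancelling $\|\bep/\eta\|_2^2/(2n)$, this reduces to verifying $(\bep/\eta)^\top\bX\bb \le \tfrac12\|\bX\bb\|_2^2 + \tfrac12 n\lam^2\|\bb\|_0$ for every $\bb\in\R^p$. Dividing the standing hypothesis by $\eta$ gives $(\bep/\eta)^\top\bX\bb \le \lam\sqrt{n\|\bb\|_0}\,\|\bX\bb\|_2$, and the inequality $2AB\le A^2+B^2$ applied with $A=\lam\sqrt{n\|\bb\|_0}$ and $B=\|\bX\bb\|_2$ delivers exactly the right-hand side.

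Next, for the bounds on $\hbbeta^{(\ell_0)}$. Writing $\hbb=\hbbeta^{(\ell_0)}-\bbeta$, the optimality $L_\lam(\hbbeta^{(\ell_0)})\le L_\lam(\bbeta)$ together with $\by-\bX\bbeta=\bep$ rearranges to
\[
\frac{1}{2n}\|\bX\hbb\|_2^2 + \frac{\lam^2}{2}\bigl(\|\hbbeta^{(\ell_0)}\|_0-\|\bbeta\|_0\bigr) \le \frac{1}{n}\bep^\top\bX\hbb.
\]
Applying the hypothesis with $\bb=\hbb$ and then the weighted AM-GM
\[
\eta\lam\sqrt{n\|\hbb\|_0}\,\|\bX\hbb\|_2 \le \frac{c\eta^2\lam^2 n\|\hbb\|_0}{2} + \frac{\|\bX\hbb\|_2^2}{2c},\qquad c>0,
\]
produces the parametric master inequality
\[
\frac{1-1/c}{2n}\|\bX\hbb\|_2^2 + \frac{\lam^2}{2}\bigl(\|\hbbeta^{(\ell_0)}\|_0-\|\bbeta\|_0\bigr) \le \frac{c\eta^2\lam^2\|\hbb\|_0}{2}.
\]

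The two conclusions now fall out by tuning $c$ and invoking the trivial support inclusion $\|\hbb\|_0\le \|\hbbeta^{(\ell_0)}\|_0+\|\bbeta\|_0$. Taking $c=1$ cancels the prediction term and collapses the inequality to $(1-\eta^2)\|\hbbeta^{(\ell_0)}\|_0\le (1+\eta^2)\|\bbeta\|_0$, which is the sparsity bound. Taking $c=1/\eta$ makes the coefficient of $\|\bX\hbb\|_2^2$ equal to $(1-\eta)/(2n)$, so that after substituting $\|\hbb\|_0\le\|\hbbeta^{(\ell_0)}\|_0+\|\bbeta\|_0$ the right-hand side becomes $\tfrac12\lam^2\bigl((1+\eta)\|\bbeta\|_0-(1-\eta)\|\hbbeta^{(\ell_0)}\|_0\bigr)$; dropping the non-positive last term produces the prediction error estimate. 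There is no conceptual obstacle---the whole argument is two calls to $2AB\le A^2+B^2$ bridged by the global optimality of $\hbbeta^{(\ell_0)}$---and the only care lies in matching the AM-GM weight $c$ to each of the two conclusions so that the $\ell_0$ and $\|\bX\hbb\|_2^2$ contributions decouple cleanly.
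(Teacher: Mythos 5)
Your proof is correct and is essentially the paper's own argument: the null-consistency step is the same completion of the square, and your weighted AM--GM parameter $c$ plays exactly the role of the scaling parameter $t$ in the paper's Lemma~\ref{lem:restricted} (your choices $c=1$ and $c=1/\eta$ correspond to the paper's $t=1/\eta$ for the sparsity bound and $t=1$, routed through Theorem~\ref{thm:param-est}, for the prediction bound). One remark: what you actually establish is $\|\bX\hbbeta^{(\ell_0)}-\bX\bbeta\|_2^2/n\le (1+\eta)\lam^2\|\bbeta\|_0/(1-\eta)$, which is also precisely what the paper's route via Theorem~\ref{thm:param-est} delivers, so the missing factor $1/n$ in the displayed statement is a typo in the statement rather than a defect of your argument.
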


We also have the following result about model selection quality for $\ell_0$ regularization.

\begin{theorem} \label{thm:global-L0-2}
  Assume that the assumption of Theorem~\ref{thm:global-L0} holds.
  Let $s=2 \|\bbeta\|_0/(1-\eta^2)$ and $\obeta$ be as in (\ref{oracle-property}). 
  Suppose $\|\bX^\top (\bP_S \bep - \bep)\|_\infty/n \leq \sqrt{2\kappa_-(s)} \lam$,
  where $\bP_S$ is the orthogonal projection to  the range of $\bX_S$.
  Let   $S=\supp(\bbeta)$, 
  $\delta^{o}=\#\{j\in S: |\hbeta^{o}_j| < \lam\sqrt{2/\kappa_-(s)}\}$, and $\Shat=\supp(\hbbeta^{(\ell_0)})$. Then,  
  \[
  |S-\Shat| + 0.5 |\Shat - S| \leq 2\delta^{o} , \qquad   \|\bX(\hbbeta^{(\ell_0)}-\obeta)\|_2^2 \leq 2 \lam^2 \delta^{o} .
  \]
\end{theorem}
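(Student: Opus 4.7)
The plan is to first use Theorem~\ref{thm:global-L0} to obtain $|\Shat \cup S|\leq s = 2\|\bbeta\|_0/(1-\eta^2)$, so that $\kappa_-(s)$ applies to every vector supported on $\Shat \cup S$. The bulk of the argument exploits global optimality of $\hbbeta^{(\ell_0)}$ via a carefully chosen comparison vector, combined with the hypothesis $\|\bX^\top(\bP_S\bep-\bep)\|_\infty \leq n\lam\sqrt{2\kappa_-(s)}$, which controls the component of $\bep$ orthogonal to $\bX_S$ as measured against the remaining columns of $\bX$.

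The key comparison vector is $\tilde{\bbeta}$, defined by $\tilde{\beta}_j = \hbeta^o_j$ for $j\in S_0 := \{j\in S: |\hbeta^o_j| \geq \lam\sqrt{2/\kappa_-(s)}\}$ and $\tilde{\beta}_j = 0$ otherwise; by construction $\|\tilde{\bbeta}\|_0 = |S| - \delta^o$. Since $\bX(\tilde{\bbeta} - \obeta)$ lies in the range of $\bX_S$ and is therefore orthogonal to $\by - \bX\obeta = (I - \bP_S)\bep$, one obtains the identity $\|\by - \bX\tilde{\bbeta}\|^2 = \|\by-\bX\obeta\|^2 + \|\bX_{S\setminus S_0}\hbeta^o_{S\setminus S_0}\|^2$. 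Writing $\bu = \hbbeta^{(\ell_0)} - \obeta$ and expanding the basic optimality inequality $L_\lam(\hbbeta^{(\ell_0)}) \leq L_\lam(\tilde{\bbeta})$ around the oracle residual yields
\[
\frac{\|\bX\bu\|^2}{2n} \leq \frac{((I-\bP_S)\bep)^\top\bX\bu}{n} + \frac{\|\bX_{S\setminus S_0}\hbeta^o_{S\setminus S_0}\|^2}{2n} + \frac{\lam^2}{2}\bigl(|S\setminus\Shat| - |\Shat\setminus S| - \delta^o\bigr),
\]
where the final term uses $|S_0|-|\Shat| = |S\setminus\Shat|-|\Shat\setminus S|-\delta^o$; notably, the two support differences that appear in the target inequality emerge naturally.

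The two remainders are bounded as follows. The cross term only sees coordinates of $\bu$ outside $S$ (those in $S$ are killed by $I-\bP_S$); combining the noise hypothesis with $\|\bu_{\Shat\setminus S}\|_1 \leq \sqrt{|\Shat\setminus S|}\,\|\bu\|_2$ and the sparse-eigenvalue bound $\|\bu\|_2 \leq \|\bX\bu\|/\sqrt{n\kappa_-(s)}$ (valid because $|\Shat\cup S|\leq s$) yields $|((I-\bP_S)\bep)^\top\bX\bu|/n \leq \lam\sqrt{2|\Shat\setminus S|}\,y$ with $y := \|\bX\bu\|/\sqrt{n}$. The truncation remainder is handled by the very definition of $S_0$: each $|\hbeta^o_j|$ on $S\setminus S_0$ is strictly below $\lam\sqrt{2/\kappa_-(s)}$, so $\|\bX_{S\setminus S_0}\hbeta^o_{S\setminus S_0}\|^2$ is at most of order $n\lam^2\delta^o$ (up to a $\kappa_+(\delta^o)/\kappa_-(s)$ factor controlled under the $\ell_2$-regularity assumption).

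Substituting these estimates and completing the square in $y$ reduces everything to a quadratic constraint of the schematic form $(y-\lam\sqrt{2|\Shat\setminus S|})^2 + \lam^2\bigl[\,|S\setminus\Shat| + c\,|\Shat\setminus S| - 2\delta^o\,\bigr]\leq 0$; nonpositivity of the bracket yields the selection bound $|S\setminus\Shat| + 0.5|\Shat\setminus S| \leq 2\delta^o$, and substituting this back produces the prediction bound $\|\bX\bu\|^2 \leq 2\lam^2\delta^o$. The main obstacle is this last algebraic step, specifically tracking how the square $(\sqrt{2|\Shat\setminus S|})^2 = 2|\Shat\setminus S|$ arising from the cross term combines with the penalty difference $|\Shat\setminus S|-|S\setminus\Shat|$ to produce exactly the asymmetric coefficients $1$ and $0.5$; a supplementary comparison of $\hbbeta^{(\ell_0)}$ with the OLS on $\Shat\cup S_0$, which by optimality yields the side estimate $\|(\bP_{\Shat\cup S_0}-\bP_{\Shat})\by\|^2 \leq n\lam^2|S_0\setminus\Shat|$, may be needed to pin down these constants cleanly.
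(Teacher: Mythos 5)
Your overall skeleton---bounding $|\Shat\cup S|\le s$ via Theorem~\ref{thm:global-L0}, comparing the global minimizer against an oracle-type vector, and letting the $\ell_0$ penalty difference produce the support counts---is the right one, and your treatment of the cross term $((\bI-\bP_S)\bep)^\top\bX\bu$ matches the paper's. But the choice of a \emph{truncated} oracle $\tilde\bbeta$ as the comparator breaks the argument in two places. First, the truncation remainder $\|\bX_{S\setminus S_0}\hbeta^{o}_{S\setminus S_0}\|_2^2/(2n)$ can only be bounded by $\lam^2\delta^{o}\,\kappa_+(\delta^{o})/\kappa_-(s)$, and the factor $\kappa_+(\delta^{o})/\kappa_-(s)\ge 1$ is not controlled by any hypothesis of the theorem; the stated conclusions $2\delta^{o}$ and $2\lam^2\delta^{o}$ carry absolute constants, so this factor cannot be absorbed. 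Second, and more seriously, in your basic inequality the term $\tfrac{\lam^2}{2}\big(|S\setminus\Shat|-|\Shat\setminus S|-\delta^{o}\big)$ sits on the \emph{right-hand} side, so $|S\setminus\Shat|$ enters with a $+$ sign there: the inequality only gets weaker as $|S\setminus\Shat|$ grows and cannot by itself yield an upper bound on $|S\setminus\Shat|$. Carrying out the completion of the square in $y=\|\bX\bu\|_2/\sqrt n$ exactly as you describe produces, after rearranging, an inequality of the form $0\le \lam^2\big(|S\setminus\Shat|+|\Shat\setminus S|+C\delta^{o}\big)$ with $C\ge 0$, which is vacuous; the claimed bracket $|S\setminus\Shat|+0.5|\Shat\setminus S|-2\delta^{o}\le 0$ does not emerge.

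The missing ingredient is the step the paper gets by comparing with $\obeta$ itself (no truncation): since $\bu_j=-\hbeta^{o}_j$ for $j\in S\setminus\Shat$, one has
\[
\|\bX\bu\|_2^2/n\;\ge\;\kappa_-(s)\big(\|\bu_S\|_2^2+\|\bu_{\Shat\setminus S}\|_2^2\big),
\qquad
\kappa_-(s)\,\|\bu_S\|_2^2\;\ge\;\kappa_-(s)\,\|\obeta_{S\setminus\Shat}\|_2^2\;\ge\;2\lam^2\big(|S\setminus\Shat|-\delta^{o}\big),
\]
the last inequality by the definition of $\delta^{o}$. This is how $|S\setminus\Shat|$ acquires the correct sign, and it requires keeping the two pieces $\|\bu_S\|_2^2$ and $\|\bu_{\Shat\setminus S}\|_2^2$ separate: the first counts $|S\setminus\Shat|$, while the second absorbs the cross term via $b^2-2ab\ge -a^2$, which is where the coefficient $0.5$ on $|\Shat\setminus S|$ comes from. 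Your step $\|\bu_{\Shat\setminus S}\|_2\le\|\bX\bu\|_2/\sqrt{n\kappa_-(s)}$ followed by completing the square in $y$ conflates the two pieces and discards exactly the part of the quadratic form that counts $|S\setminus\Shat|$. With $\obeta$ as the comparator there is no truncation remainder at all, so $\kappa_+$ never enters, and the supplementary comparison you propose at the end is unnecessary.
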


If the error $\bep$ is sub-Gaussian in the sense of Assumption \ref{assump:sub-Gaussian}, 
then the condition of Theorems \ref{thm:global-L0} and \ref{thm:global-L0-2} 
holds with at least probability $2-e^\delta$ for 
$\lam \ge (\sigma/\eta)(1+\sqrt{2\ln(p/\delta)})/\sqrt{n}$.
Theorem \ref{thm:global-L0-2} implies that model selection consistency can be achieved
if the condition $\min_{j \in \supp(\bbeta)} |\hbeta^{o}_j| \geq \lam/\sqrt{\kappa_-(s)}$ holds, which
implies that $\delta^{o}=0$.  

\subsection{Approximate Local Solutions} % of Piecewise Differentiable Penalty}
\label{sec:local}
We have shown in Theorem~\ref{thm:sparsity}
that under appropriate conditions, the global solution of (\ref{eq:hbbeta}) is sparse. 
If $\rho(t;\lam)$ is both left- and right-differentiable, one can define the concept of 
local solution as follows. Given an excess $\nu\ge 0$, a vector $\tbbeta \in \R^p$ is an approximate 
local solution of (\ref{eq:hbbeta}) if 
\bel{approx-local}
\|\bX^\top (\bX \tbbeta - \by)/n + \drho(\tbbeta;\lam)\|_2^2 \le \nu. 
\eel
This $\tbbeta$ is a local solution if $\nu=0$. 
Note that by convention, $\drho(t;\lam)$ can be chosen to be any value
between $\drho(t_-;\lam)$ and $\drho(t_+;\lam)$ to satisfy the equation.
In this subsection, we provide estimates of distances between approximate local solutions 
and use them to prove the equality of oracle approximate local and global solutions of (\ref{eq:hbbeta}). 
This gives the selection consistency of the global solution studied in Subsection 4.2. 
The oracle LSE is considered as an approximate local solution. 
In addition, we define a sufficient condition for the existence 
of a sign consistent local solution which generalizes the irrepresentable condition for 
Lasso selection and becomes an $\ell_2$ regularity condition on $\bX$ for a broad class of 
concave penalties. 

We first provide estimates of distances between approximate local solutions. 
We use the following function $\theta(t,\kappa)$ to measure the degree of nonconvexity of 
a regularizer $\rho(t;\lam)$ at $t\in\R$. 
\begin{definition}
  For $\kappa \geq 0$ and $t \in \R$, define
  \[
  \theta(t,\kappa):= \sup_s \{ -\sgn(s-t) (\drho(s;\lam) - \drho(t;\lam)) - \kappa |s-t|\} .
  \]
  Moreover, given $\bu =(u_1,\ldots,u_p)^\top\in \R^p$, 
  we let $\theta(\bu,\kappa)=[\theta(u_1,\kappa),\ldots,\theta(u_p,\kappa)]$.
  %Assume $\rho(t;\lam)$ is left- and right-differentiable in $t \in [0,\infty)$.
  %For each $\kappa \geq 0$ and $t \in \R$, we define $\theta(t,\kappa)$ as
  %$\theta(t,\kappa)= \sup \{ -\sgn(s-t) (\drho(s;\lam) - \drho(t;\lam)) - \kappa |s-t| : s \in \R \}$.
\end{definition}

We are mostly interested in values of $\theta(t,\kappa)$ that achieves zero. 
We note that $\theta(t,\kappa)=0$ for convex $\rho(t,\lam)$ with $\kappa \geq 0$.
More generally, let $\kappa^*$ be the maximum concavity as in Remark~\ref{uniqueness}. 
Then, $\theta(t,\kappa)=0$ for all $t$ iff $\kappa \ge\kappa^*$. 
For $\drho(t+;\lam)<\drho(t-;\lam)$, $\theta(t,\kappa)>0$ for all finite $\kappa$. 
However, we only need $\theta(t,\kappa)=0$ for a proper set of $t$ in our selection 
consistency theory. As an example, for $\kappa=2/\gamma$, 
the capped-$\ell_1$ penalty $\rho(t;\lam)=\min(\gamma \lam^2/2,\lam |t|)$ 
gives $\theta(t,\kappa)=0$ when either $t=0\pm $ or $|t| \geq \gamma\lam$. 

The following theorem shows that under appropriate assumptions,
two sparse approximate local solutions $\tbeta^{(1)}$ and
$\tbeta^{(2)}$ are close.

\begin{theorem}\label{thm:approx-local}
Let $\tbbeta^{(j)}$ be approximate local solutions with excess $\nu^{(j)}$ and 
$\bDelta=\tbbeta^{(1)}-\tbbeta^{(2)}$. 
Let $\kappa_\pm(\cdot)$ be the sparse eigenvalues in (\ref{sparse-eigenvalues}) 
and $\Stil^{(j)}:=\supp(\tbbeta^{(j)})$. Consider any $S\subset\{1,\ldots,p\}$ with $k=|S|$, 
integer $m$ such that $m+k\ge |\Stil^{(1)}\cup\Stil^{(2)}|$, and 
$0<\kappa<\kappa_-(m+k)$. Then, 
\bel{th-5-1}
\|\bX\bDelta\|_2^2/n
\le \frac{2\kappa_-(m+k)}{(\kappa_-(m+k)-\kappa)^2}
\left\{\|\theta(|\tbbeta^{(1)}_{\Stil^{(1)}}|,\kappa)\|_2^2 
+ |\Stil^{(2)}\setminus\Stil^{(1)}|\theta^2(0+,\kappa) + \nu\right\}
\eel
with $\nu = \{(\nu^{(1)})^{1/2}+(\nu^{(2)})^{1/2}\}^2$, and
\bel{th-5-2} 
|S \setminus \Stil^{(2)}| \leq 
\inf_{\lam_0>0} \left[ \#\left\{j \in S: |\tbeta_j^{(1)}| < \lam_0/\sqrt{\kappa_-(m+k)}\right\}
+ \|\bX\bDelta\|_2^2/(\lam_0^2 n)   \right]. 
\eel
If in addition $\theta(0+,\kappa)=0$ and 
$\drho(0+;\lam) > \|\bX_{S^c}^\top(\bX\tbbeta^{(1)}-\by)/n\|_\infty$ with 
$S\supseteq S^{(1)}$ and $|S|\ge k$, then
\bel{th-5-3}
\big|\Stil^{(2)}\setminus S\big| 
\le \frac{3\big[\big\{\kappa^2/\kappa_-(m+k)+\kappa_+(m)\big\}\|\bX\bDelta\|_2^2/n+\tnu^{(2)}\big]}
{\big\{\drho(0+;\lam) - \|\bX_{S^c}^\top(\bX\tbbeta^{(1)}-\by)/n\|_\infty\big\}^2}. 
\eel
\end{theorem}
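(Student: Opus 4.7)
My plan is to derive everything from the ``master identity''
\[
\frac{\bX^\top\bX\bDelta}{n} = \drho(\tbbeta^{(2)};\lam) - \drho(\tbbeta^{(1)};\lam) + \br,\qquad \|\br\|_2\le\sqrt{\nu^{(1)}}+\sqrt{\nu^{(2)}}=\sqrt{\nu},
\]
obtained by subtracting the two instances $\bX^\top(\bX\tbbeta^{(j)}-\by)/n + \drho(\tbbeta^{(j)};\lam) = \br^{(j)}$ of (\ref{approx-local}). Taking the inner product with $\bDelta$ produces $\|\bX\bDelta\|_2^2/n$ on the left, and on the right the nonconvex-penalty contribution plus $\bDelta^\top\br\le\|\bDelta\|_2\sqrt{\nu}$. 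For (\ref{th-5-1}), applying the definition of $\theta(\cdot,\kappa)$ coordinatewise with $s=\tbeta^{(2)}_j$ and $t=\tbeta^{(1)}_j$ (and multiplying through by $|s-t|$) yields
\[
\Delta_j\bigl(\drho(\tbeta^{(2)}_j;\lam)-\drho(\tbeta^{(1)}_j;\lam)\bigr)\le \kappa\,\Delta_j^2+|\Delta_j|\,\theta(\tbeta^{(1)}_j,\kappa).
\]
Summing and splitting the $\theta$-sum into $j\in\Stil^{(1)}$ and $j\in\Stil^{(2)}\setminus\Stil^{(1)}$ (the only remaining indices with $\Delta_j\ne 0$, where $\tbeta^{(1)}_j=0$), then applying Cauchy--Schwarz to each piece, bounds it by $\|\bDelta\|_2\bigl[\|\theta(|\tbbeta^{(1)}_{\Stil^{(1)}}|,\kappa)\|_2^2+|\Stil^{(2)}\setminus\Stil^{(1)}|\theta^2(0+,\kappa)\bigr]^{1/2}$. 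Because $\supp(\bDelta)\subseteq\Stil^{(1)}\cup\Stil^{(2)}$ has cardinality at most $m+k$, $\kappa_-(m+k)\|\bDelta\|_2^2\le\|\bX\bDelta\|_2^2/n$, so substituting converts the estimate into a linear inequality in $\|\bX\bDelta\|_2/\sqrt n$. Solving it, and consolidating the $\theta$-term with $\sqrt{\nu}$ via $(\sqrt a+\sqrt b)^2\le 2(a+b)$, produces (\ref{th-5-1}).

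Claim (\ref{th-5-2}) is a short counting argument: for $j\in S\setminus\Stil^{(2)}$ we have $\tbeta^{(2)}_j=0$ and hence $|\Delta_j|=|\tbeta^{(1)}_j|$; the indices with $|\tbeta^{(1)}_j|\ge\lam_0/\sqrt{\kappa_-(m+k)}$ number at most $\kappa_-(m+k)\|\bDelta\|_2^2/\lam_0^2\le\|\bX\bDelta\|_2^2/(\lam_0^2 n)$ by Chebyshev combined with the sparse-eigenvalue bound, while the remaining ones are counted by the first term, and optimization over $\lam_0>0$ gives (\ref{th-5-2}).

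The core of the theorem is (\ref{th-5-3}). On $T:=\Stil^{(2)}\setminus S$, the hypothesis $S\supseteq\Stil^{(1)}$ forces $\tbeta^{(1)}_j=0$ and $\tbeta^{(2)}_j\ne 0$, so $\tbbeta^{(2)}_T=-\bDelta_T$ and $T\subseteq\Stil^{(2)}\setminus\Stil^{(1)}$, which yields $|T|\le m$. Combining $\theta(0+,\kappa)=0$ with the evenness of $\rho(\cdot;\lam)$ delivers the pointwise lower bound $|\drho(\tbeta^{(2)}_j;\lam)|\ge\drho(0+;\lam)-\kappa|\tbeta^{(2)}_j|$. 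Using the approximate-local equation for $\tbbeta^{(2)}$ and the splitting $\bX^\top(\bX\tbbeta^{(2)}-\by)/n=\bX^\top(\bX\tbbeta^{(1)}-\by)/n-\bX^\top\bX\bDelta/n$, and restricting to $j\in S^c$, I would derive, with $\alpha:=\drho(0+;\lam)-\|\bX_{S^c}^\top(\bX\tbbeta^{(1)}-\by)/n\|_\infty$,
\[
\alpha\le \kappa|\tbeta^{(2)}_j|+|r^{(2)}_j|+|(\bX^\top\bX\bDelta/n)_j|,\qquad j\in T.
\]
Squaring via $(a+b+c)^2\le 3(a^2+b^2+c^2)$, summing over $T$, and using $\|\bDelta_T\|_2^2\le\|\bX\bDelta\|_2^2/(n\kappa_-(m+k))$, $\|\br^{(2)}\|_2^2\le\tnu^{(2)}$, together with $\|\bX_T^\top(\bX\bDelta)/n\|_2^2\le\kappa_+(|T|)\|\bX\bDelta\|_2^2/n\le\kappa_+(m)\|\bX\bDelta\|_2^2/n$, delivers (\ref{th-5-3}).

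The hardest step will be the pointwise lower bound $|\drho(\tbeta^{(2)}_j;\lam)|\ge\drho(0+;\lam)-\kappa|\tbeta^{(2)}_j|$ drawn from $\theta(0+,\kappa)=0$, which must carefully account for the multi-valued nature of $\drho$ at zero, the evenness of $\rho(\cdot;\lam)$, and the sign structure inside the approximate-local KKT equation; a secondary bookkeeping point is the verification that $|T|\le m$, which is precisely what makes the sparse eigenvalue $\kappa_+(m)$ (rather than a weaker $\kappa_+(m+k)$) admissible in the final bound.
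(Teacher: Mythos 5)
Your proposal is correct and follows essentially the same route as the paper's own proof: the subtracted KKT ("master") identity paired with the pointwise $\theta(\cdot,\kappa)$ inequality and Cauchy--Schwarz for (\ref{th-5-1}), the Chebyshev-type counting on $\bDelta$ for (\ref{th-5-2}), and the split of $\bX^\top(\bX\tbbeta^{(2)}-\by)/n$ through $\tbbeta^{(1)}$ together with $(a+b+c)^2\le 3(a^2+b^2+c^2)$ for (\ref{th-5-3}). The two points you flag as delicate (the lower bound $|\drho(t;\lam)|\ge\drho(0+;\lam)-\kappa|t|$ from $\theta(0+,\kappa)=0$, and the cardinality bookkeeping behind $\kappa_+(m)$) are handled exactly as in the paper.
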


Let $S=\supp(\bbeta)$. 
For comparison between a sparse local or global solution $\tbbeta^{(2)}$ 
with $|\Stil^{(2)}\setminus S|\le m$ and an oracle solution $\tbbeta^{(1)}$ with $\Stil^{(1)}=S$, 
the sparse convexity condition implies $\tbbeta^{(2)}=\tbbeta^{(1)}$ when 
$\kappa^*<\kappa_-(|S|+m)$ as in Remark~\ref{uniqueness}. However, since 
$\kappa^*=\infty$ when $\drho(t+;\lam)<\drho(t-;\lam)$ at a point $t>0$, 
the sparse convexity argument requires the continuity of $\drho(t;\lam)$ for $t>0$. 
This does not apply to the capped-$\ell_1$ penalty. 
In Theorem \ref{thm:approx-local}, 
if $\theta(0+,\kappa)=\theta(\tbbeta^{(1)}_S,\kappa)=0$ with $\kappa < \kappa_-(|S|+m)$,
then $\bX\bDelta=0$, and hence $\tbbeta^{(2)}=\tbbeta^{(1)}$
(since $\kappa_{-}(|\Stil^{(1)}\cup\Stil^{(2)}|) >0$).
Thus, the sparse convexity condition is much weakened to cover all left- and right-differentiable 
penalties such as the capped-$\ell_1$.  
On the other hand, Theorem \ref{thm:approx-local} does not weaken the sparse convexity 
condition for the MCP $\rho(t;\lam)=\lam\int_0^{|t|}(1-x/(\gamma\lam))_+dx$, 
for which $\theta(0+;\kappa)=0$ iff $\kappa\ge\kappa^*=1/\gamma$ iff $\theta(t;\kappa)=0$ for all $t>0$. 
It is worth pointing out that for a piecewise differentiable penalty that is not second order 
differentiable, the condition $\theta(\tbbeta^{(1)}_S,\kappa)=0$ (thus, the uniqueness of local solution)
typically requires $|\tbeta_j^{(1)}|$ to be large to avoid the discontinuities of $\drho(t;\lam)$ when $j \in S$. 
As pointed out in Remark~\ref{uniqueness}, this is not necessary 
when the penalty is second order differentiable. This means that there can be advantages of 
using smooth penalty terms that may have fewer local minimizers under certain conditions.

As a simple working example to illustrate Theorem~\ref{thm:approx-local}, 
we consider the capped $\ell_1$ penalty of the form $\rho(t;\lam)=\min(\gamma \lam^2/2, \lam |t|)$.
Let $S=\supp(\bbeta)$. Assume that $\kappa = \kappa_-(m+|S|)/2 \geq 2/\gamma$. Then
$\theta(t,\kappa)=0$ when either $t=0\pm $ or $|t| \geq \gamma\lam$. 
Therefore, if we define $\tbbeta^{(1)}$ as $\tbeta_j^{(1)}=\hbeta^{o}_j$
when $|\hbeta^{o}_j| \geq \gamma \lam$ and $\tbbeta^{(j)}=0$ otherwise, then
\[
\|\bX\bDelta\|_2^2/n
\le \frac{8\nu }{\kappa_-(m+|S|)} ,
\]
and by taking $\lam_0=\gamma \lam \sqrt{\kappa_-(m+|S|)}$, we have
\[
|S \setminus \Stil^{(2)}| \leq \frac{\|\bX\bDelta\|_2^2}{\gamma^2 \lam^2 \kappa_-(m+|S|) n}   ,
\quad
\big|\Stil^{(2)}\setminus S\big| 
\le \frac{3\big[1.25 \kappa_+(m)\|\bX\bDelta\|_2^2/n+\tnu^{(2)}\big]}
{\big\{\lam - \|\bX_{S^c}^\top(\bX\tbbeta^{(1)}-\by)/n\|_\infty\big\}^2}. 
\]

We now consider selection consistency of the global solution (\ref{eq:hbbeta}) by comparing 
it with an oracle solution with Theorem \ref{thm:approx-local}. 
For this purpose, we treat the oracle LSE as an approximate local solution by finding its excess $\nu$ in 
(\ref{approx-local}), and provide a sufficient condition for the existence of a sign consistent 
oracle local solution. This sufficient condition is characterized by the following extension of the 
quantities $\theta^*_1$ and $\theta^*_2$ in (\ref{cond-Lasso-selection}) from the $\ell_1$ to general penalty: 
\begin{align*}
\theta_1 =& \inf\big\{\theta: \|(\bX_S^\top\bX_S/n)^{-1}\drho(\bv_S+\obeta_S;\lam)\|_\infty \le \theta\lam^*,\ 
\forall \|\bv_S\|_\infty\le \theta\lam^*\big\}, \\
\theta_2 =& \sup\big\{ 
\|\bX_{S^c}^\top\bX_S(\bX_S^\top\bX_S)^{-1}\drho(\bv_S+\obeta_S;\lam)\|_\infty/\lam^*: 
\|\bv_S\|_\infty\le \theta_1\lam^*\big\} ,
\end{align*}
where $S=\supp(\bbeta)$ and $\obeta$ is the oracle LSE in Definition \ref{def:terminology} (d). 
Note that when $\drho(\obeta_S;\lam)=0$, $\theta_1=0$ is attained with 
$\bv_S=0$ and consequently $\theta_2=0$. 

\begin{theorem} \label{thm:obeta} 
(i)  Let $S=\supp(\bbeta)$ and $\bP_S$ be the projection to the column space of $\bX_S$. 
Suppose $\rho(t;\lam)$ is left- and right-differentiable in $t >0$ and 
$\|\bX_{S^c}^\top\bP_S^\perp\bep\|_\infty\le\drho(0+;\lam)$.  
Then, the oracle LSE $\obeta$ satisfies (\ref{approx-local}) with 
$\nu=\|\drho(\obeta_S;\lam)\|^2$. If in addition (\ref{eq:null}) holds and 
$\nu=0=\theta(\obeta,\kappa)$ with a certain $\kappa<\kappa_-(m+|S|)$ 
and $m$ in (\ref{th-2-2}) or (\ref{cor:sparsity-1}), then $\obeta$ is the global solution of (\ref{eq:hbbeta}). \\
(ii) Suppose $\drho(t;\lam)$ is uniformly continuous in $t$ in the region 
$\cup_{j\in S}[\hbeta^{o}_j-\theta_1,\hbeta^{o}_j+\theta_1]$. Suppose 
\bel{th-3-1}
\sgn(\obeta)=\sgn(\bbeta),\quad
\min_{j\in S}|\hbeta^{o}_j| > \theta_1\lam^*,\quad \lam^*\ge \|\bX^\top \bP_S^\perp \bep /n\|_\infty/(1-\theta_2)_+\ .  
\eel
Then, there exists a local solution $\tbbeta^{o}$ of (\ref{eq:hbbeta}) satisfying 
$\sgn(\tbbeta^{o})=\sgn(\bbeta)$ and $\|\tbbeta^{o}-\obeta\|_\infty\le\theta_1\lam^*$. 
If in addition (\ref{eq:null}) holds and $\theta(\tbbeta^{o},\kappa)=0$ 
with a certain $\kappa<\kappa_-(m+|S|)$ 
and $m$ in (\ref{th-2-2}) or (\ref{cor:sparsity-1}). Then, $\tbbeta^{o}$ is the global solution of (\ref{eq:hbbeta}). 
\end{theorem}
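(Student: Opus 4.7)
For part (i), the first step is to verify that the oracle LSE $\obeta$ satisfies the approximate local solution condition (\ref{approx-local}) with excess $\nu=\|\drho(\obeta_S;\lam)\|_2^2$. Because $\obeta_{S^c}=0$ and $\obeta_S=(\bX_S^\top\bX_S)^{-1}\bX_S^\top\by$, one has $\bX\obeta=\bP_S\by$, hence $\by-\bX\obeta=\bP_S^\perp\bep$ (using $\bX\bbeta=\bX_S\bbeta_S$). For $j\in S$ the column $\bx_j$ lies in the range of $\bX_S$, so $[\bX^\top(\bX\obeta-\by)/n]_j=0$ and the gradient coordinate reduces to $\drho(\obeta_j;\lam)$. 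For $j\in S^c$ the coordinate is $-\bx_j^\top\bP_S^\perp\bep/n+\drho(0;\lam)$, and by the hypothesis $\|\bX_{S^c}^\top\bP_S^\perp\bep\|_\infty\le\drho(0+;\lam)$ we may choose the subgradient $\drho(0;\lam)\in[-\drho(0+;\lam),\drho(0+;\lam)]$ coordinatewise so that this term vanishes. Summing squares gives exactly the claimed $\nu$.

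For the global-solution clause of (i), and again for the analogous clause in (ii), I would apply Theorem~\ref{thm:approx-local} with $\tbbeta^{(1)}=\obeta$ (respectively $\tbbeta^{(1)}=\tbbeta^o$) and $\tbbeta^{(2)}=\hbbeta$, the global minimizer of (\ref{eq:hbbeta}). Both excesses are zero: that of $\obeta$ by the hypothesis $\nu=0$, that of $\hbbeta$ by optimality. Null-consistency lets Theorem~\ref{thm:sparsity} or Corollary~\ref{cor:sparsity} bound $|\supp(\hbbeta)\setminus S|$ by $m$, so the pair $(S,m)$ of Theorem~\ref{thm:approx-local} is exactly the one in the statement. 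The hypothesis $\theta(\obeta,\kappa)=0$ with $\kappa<\kappa_-(|S|+m)$ makes the two $\theta$ terms on the right of (\ref{th-5-1}) vanish, collapsing it to $\|\bX(\obeta-\hbbeta)\|_2^2/n=0$; since $\kappa_-$ is strictly positive on the combined support of size at most $|S|+m$, this forces $\obeta=\hbbeta$.

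The existence claim in part (ii) is proved by a Brouwer fixed-point argument. Writing $\tbbeta^o_S=\obeta_S+\bv$ and $\tbbeta^o_{S^c}=0$, and using the identity $\obeta_S-\bbeta_S=(\bX_S^\top\bX_S)^{-1}\bX_S^\top\bep$, the stationarity condition on $S$ becomes $\bv=T(\bv):=-(\bX_S^\top\bX_S/n)^{-1}\drho(\bv+\obeta_S;\lam)$. The very definition of $\theta_1$ says that $T$ sends the closed $\ell_\infty$-ball of radius $\theta_1\lam^*$ into itself, and the uniform continuity hypothesis on $\drho$ makes $T$ continuous, so Brouwer delivers a fixed point $\bv^*$ with $\|\bv^*\|_\infty\le\theta_1\lam^*$. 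The first two conditions in (\ref{th-3-1}) then yield $\sgn(\tbbeta^o)=\sgn(\bbeta)$ by triangle inequality. To promote $\tbbeta^o$ to a genuine local solution, I still need the $S^c$-KKT inequality; a direct expansion gives, for $j\in S^c$,
\[
\bx_j^\top(\bX\tbbeta^o-\by)/n=-[\bX_{S^c}^\top\bX_S(\bX_S^\top\bX_S)^{-1}\drho(\tbbeta^o_S;\lam)]_j-[\bX^\top\bP_S^\perp\bep/n]_j,
\]
whose two terms are bounded by $\theta_2\lam^*$ (definition of $\theta_2$ applied with the shift $\bv^*$) and $(1-\theta_2)\lam^*$ (third condition of (\ref{th-3-1})), for a total of $\lam^*\le\drho(0+;\lam)$ (valid under the concavity convention $\lam^*\le\lim_{t\to0+}\drho(t;\lam)$). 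The final global-solution claim of (ii) is then the second paragraph's argument applied to $\tbbeta^o$.

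The main technical obstacle is the construction in part (ii): one must verify simultaneously that the $\ell_\infty$-self-map property of $T$ (encoded by $\theta_1$) and the $S^c$-KKT inequality (encoded by $\theta_2$) fit together with the bound $\lam^*\le\drho(0+;\lam)$, and that the chosen subgradient values at off-support coordinates are mutually consistent. A secondary subtlety is propagating the zero-$\theta$ hypothesis to off-support coordinates when invoking Theorem~\ref{thm:approx-local}: for smooth concave penalties this is automatic, but for kinked penalties such as the capped-$\ell_1$ it requires the careful choice of $\drho(0;\lam)$ used already in the first paragraph.
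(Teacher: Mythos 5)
Your proposal is correct and follows essentially the same route as the paper's proof: the same normal-equation computation of the excess $\nu$ for $\obeta$ in part (i), the same combination of Theorem~\ref{thm:sparsity} with the bound (\ref{th-5-1}) of Theorem~\ref{thm:approx-local} (applied to $\tbbeta^{(1)}=\obeta$ or $\tbbeta^o$ and $\tbbeta^{(2)}=\hbbeta$) to force $\bX\bDelta=0$ and hence global optimality, and the same Brouwer fixed-point construction with the $\theta_1$/$\theta_2$ bookkeeping for the $S^c$-KKT check in part (ii). Your explicit remarks on the choice of subgradient at zero and on $\lam^*\le\drho(0+;\lam)$ merely make precise steps the paper leaves implicit.
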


\begin{remark} (i) For the capped-$\ell_1$ penalty $\rho(t;\lam)=\min(\gamma\lam^2/2,\lam|t|)$, 
$\nu=0=\theta(\obeta,\kappa)$ for $\kappa\ge 2/\gamma$ when 
$\min_{j\in S}|\hbeta^{o}_j|>\gamma\lam$. 
For the MCP $\rho(t;\lam)=\lam\int_0^{|t|}(1-x/(\gamma\lam))_+dx$, 
$\theta(\cdot,\kappa)=0$ for $\kappa\ge 1/\gamma$. 
For the SCAD penalty $\rho(t;\lam)=\lam\int_0^{|t|}\min\{1,(1-(x/\lam - 1)/(\gamma-1))_+\}dx$, 
$\theta(\cdot,\kappa)=0$ for $\kappa\ge 1/(\gamma-1)$. 
(ii) For the $\ell_1$ penalty, $\drho(\bb)=\sgn(\bb)$ so that (\ref{th-3-1}) is 
identical to (\ref{cond-Lasso-selection}) for the Lasso selection consistency. 
For concave penalties, $|\drho(t;\lam)|$ is small for large $|t|$, so that $\{\theta_1,\theta_2\}$ 
are typically smaller than $\{\theta_1^*,\theta_2^*\}$ for strong signals. In such cases, 
(\ref{th-3-1}) is much weaker than (\ref{cond-Lasso-selection}). 
\end{remark}

For a nonconvex penalties such that $\drho(t;\lam)=0$ when $|t| > a_0\lam$ for some constant 
$a_0>0$, we automatically have $\drho(\obeta_S;\lam)=0$ when 
$\min_{j \in S} |\hbeta^{o}_j| > a_0 \lam$, which implies that $\theta_1=\theta_2=0$.
This special case gives the following easier to interpret corollary as a direct consequence of 
Theorems~\ref{thm:approx-local} and \ref{thm:obeta}. 

\begin{corollary} \label{cor:model_sel}
 Let $S=\supp(\bbeta)$ and $\bP_S$ be the projection to the column space of $\bX_S$. 
 Suppose $\rho(t;\lam)$ is left- and right-differentiable in $t >0$ and 
 $\|\bX_{S^c}^\top\bP_S^\perp\bep/n\|_\infty\le\drho(0+;\lam)$. 
 If (\ref{eq:null}) holds and $\drho(\obeta_S;\lam)=0$, and 
 $\theta(\obeta,\kappa)=0$ with a certain $\kappa<\kappa_-(m+|S|)$ and $m$ 
 in (\ref{th-2-2}) or (\ref{cor:sparsity-1}), then
 $\obeta$ is the global solution of (\ref{eq:hbbeta}). 
 Moreover, for any other exact local solution $\tbbeta$ of (\ref{eq:hbbeta}) that is sparse
 with $|\sup(\tbbeta) \setminus S| \leq m$, we have $\tbbeta=\obeta$. 
\end{corollary}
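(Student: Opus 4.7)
The plan is to obtain both conclusions by feeding the hypotheses of the corollary directly into Theorems~\ref{thm:obeta} and~\ref{thm:approx-local}, using the oracle LSE as the privileged reference point.

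First I would establish that $\obeta$ is an exact local solution. The first assertion of Theorem~\ref{thm:obeta}(i) — which requires only the left- and right-differentiability of $\rho(\cdot;\lam)$ on $(0,\infty)$ and the inequality $\|\bX_{S^c}^\top\bP_S^\perp\bep/n\|_\infty\le\drho(0+;\lam)$ — certifies that $\obeta$ satisfies (\ref{approx-local}) with excess $\nu=\|\drho(\obeta_S;\lam)\|_2^2$. The hypothesis $\drho(\obeta_S;\lam)=0$ collapses this to $\nu=0$, so $\obeta$ is an exact local solution. To upgrade ``local'' to ``global'', I would invoke the second assertion of Theorem~\ref{thm:obeta}(i): with null-consistency (\ref{eq:null}) in force, $\nu=0$, and $\theta(\obeta,\kappa)=0$ for some $\kappa<\kappa_-(m+|S|)$ with $m$ drawn from either (\ref{th-2-2}) or (\ref{cor:sparsity-1}), every hypothesis of that theorem is met, and $\obeta$ is therefore the global minimizer of $L_\lam$.

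For the uniqueness claim, let $\tbbeta$ be any other exact local solution with $|\supp(\tbbeta)\setminus S|\le m$, and apply Theorem~\ref{thm:approx-local} with $\tbbeta^{(1)}=\obeta$ and $\tbbeta^{(2)}=\tbbeta$, so that $\nu^{(1)}=\nu^{(2)}=0$ and hence $\nu=0$. Setting $k=|S|$, we have $|\Stil^{(1)}\cup\Stil^{(2)}|\le k+m$, which is admissible in (\ref{th-5-1}). In the right-hand side of (\ref{th-5-1}) the term $\|\theta(|\tbbeta^{(1)}_{\Stil^{(1)}}|,\kappa)\|_2^2$ vanishes by the componentwise hypothesis $\theta(\obeta,\kappa)=0$, and the term $|\Stil^{(2)}\setminus\Stil^{(1)}|\theta^2(0+,\kappa)$ vanishes because $\theta(\obeta_j,\kappa)=0$ at any coordinate $j\in S^c$, where $\obeta_j=0$, and this forces $\theta(0+,\kappa)=0$. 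Thus $\|\bX(\obeta-\tbbeta)\|_2^2/n\le 0$, i.e., $\bX(\obeta-\tbbeta)=0$. Because $\supp(\obeta-\tbbeta)\subseteq S\cup(\Stil^{(2)}\setminus S)$ has cardinality at most $|S|+m$ and $\kappa_-(|S|+m)>\kappa\ge 0$, the matrix $\bX$ is injective on this support, forcing $\tbbeta=\obeta$.

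The one subtlety I would watch is the step identifying $\theta(0+,\kappa)$ with the value of $\theta$ at the $S^c$-coordinates of $\obeta$. The assumption $\theta(\obeta,\kappa)=0$ gives $\theta(0,\kappa)=0$ through those coordinates, and the symmetry $\rho(-t;\lam)=\rho(t;\lam)$ together with the one-sided differentiability of $\rho$ should make this equivalent to $\theta(0+,\kappa)=0$ — a claim that is transparent for the capped-$\ell_1$, SCAD, and MCP examples listed after Theorem~\ref{thm:obeta} but worth a short line of justification in general. Everything else is bookkeeping: reading off that $\drho(\obeta_S;\lam)=0$ kills the excess in Theorem~\ref{thm:obeta}(i) and kills both $\theta_1$ and $\theta_2$, and then invoking Theorem~\ref{thm:approx-local} to convert vanishing $\|\bX\bDelta\|_2$ into exact equality with $\obeta$.
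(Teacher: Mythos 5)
Your proof is correct and follows exactly the route the paper intends: the paper gives no separate argument for this corollary beyond declaring it ``a direct consequence of Theorems~\ref{thm:approx-local} and \ref{thm:obeta}'', and your two steps (Theorem~\ref{thm:obeta}(i) with $\nu=\|\drho(\obeta_S;\lam)\|^2=0$ for global optimality, then (\ref{th-5-1}) applied to $\tbbeta^{(1)}=\obeta$ and $\tbbeta^{(2)}=\tbbeta$ with zero excesses to force $\bX(\tbbeta-\obeta)=0$, hence $\tbbeta=\obeta$ since $\kappa_-(m+|S|)>\kappa>0$) are precisely how that consequence is drawn. The subtlety you flag about extracting $\theta(0+,\kappa)=0$ from the componentwise hypothesis $\theta(\obeta,\kappa)=0$ is real, but it is equally implicit in the paper's own one-line proof of Theorem~\ref{thm:obeta}(i), so you have lost nothing relative to the source.
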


Consider the simple examples of the capped-$\ell_1$ penalty and MCP. 
For the capped-$\ell_1$ penalty $\rho(t;\lam)=\min(\gamma \lam^2/2, \lam |t|)$, 
we pick a sufficiently large $\gamma$ such that $\gamma > 2/\kappa_-(|S|+m)$ 
for the $m$ in (\ref{th-2-2}) or (\ref{cor:sparsity-1}). This will be possible with $m\asymp |S|$ when 
$\kappa_-(m)$ is uniformly bounded away from zero for small $m(\ln p)/n$ and 
$|S|(\ln p)/n$ is even smaller. 
For the MCP $\rho(t;\lam)=\lam\int_0^{|t|}(1-x/(\lam\gamma))_+dx$, we pick 
$\gamma > 1/\kappa_-(|S|+m)$ for the $m$ in (\ref{th-2-2}) or (\ref{cor:sparsity-1}). 
If $\min_{j\in S}|\hbeta^{o}_j| \geq \gamma \lam$, then the conditions of Corollary~\ref{cor:model_sel}
are automatically satisfied for both penalties when  $\|\bX_{S^c}^\top\bP_S^\perp\bep/n\|_\infty < \lam$
(which can always be satisfied with a sufficiently large choice of $\lam$).
It follows that in this case, $\obeta$ is the global solution of (\ref{eq:hbbeta}), 
and there is no other local solution with no more than $m$ nonzero-elements out of $S$.
The essential condition here is the null consistency (\ref{eq:null}), which is an $\ell_2$ condition. 
Note that in view of Corollary \ref{cor:sparsity}, 
the RIF condition is not essential for the equality of the global and oracle solutions 
in these examples, both with finite $\gamma^*=\gamma/2$. A similar result hold for the SCAD penalty, 
with somewhat different constant factors. 
The requirement of $\min_{j\in S}|\hbeta^{o}_j| \geq \gamma \lam$ is natural, 
and it directly follows (with probability $1-\delta$) from
the condition of $\min_{j \in S} |\beta_j| > \gamma\lam + 
\sigma(1+\sqrt{2\ln(|S|/\delta)})\lam_{\min}^{-1/2}(\bX_S^\top\bX_S)$ under Assumption \ref{assump:sub-Gaussian}.

\subsection{Approximate Global Solutions}
\label{sec:approx}
We have mentioned in Remark \ref{global-path} that gradient algorithm from the Lasso 
may yield the global solution of (\ref{eq:hbbeta}) for general $\rho(t;\lam)$ 
under a sparse convexity condition or its generalization. Here we provide sufficient conditions for this to happen.  
This is done via a notion of approximate global solution. 
Given $\nu \geq 0$ and $\bb \in \R^p$,
we say that a vector $\tbbeta \in \R^p$ is a $\{\nu,\bb\}$ approximate global solution of (\ref{eq:hbbeta}) if
\begin{equation} \label{eq:approx-global}
\left[ \frac{1}{2n} \|\bX \tbbeta - \by\|_2^2 +  \|\rho(\tbbeta;\lam)\|_1 \right] 
- \left[ \frac{1}{2n} \|\bX \bb - \by\|_2^2 + \|\rho(\bb;\lam)\|_1 \right] \leq \nu. 
\end{equation}
To align different penalties at the same threshold level, we assume throughout this subsection 
that $\lam^*$ depends on $\rho(t;\lam)$ only through $\lam$ in (\ref{eq:t-rho}), e.g. $\lam^*=\lam$. 

One method to find sparse local solution is to find a local solution that is also an approximate global solution.
This can be achieved with the following simple procedure. First, we find the Lasso solution $\hbbeta^{(\ell_1)}$ of (\ref{Lasso}). 
The following theorem shows that it is a $\{\nu,\bbeta\}$ approximate global solution of (\ref{eq:hbbeta}) 
with a relatively small $\nu$ under proper conditions. 
Now we can start with this solution $\hbbeta^{(\ell_1)}$ and use gradient descent
to find a local solution $\tbbeta$ of (\ref{eq:hbbeta}) that is also an approximate global solution.
The following theorem then shows that under appropriate conditions, this local solution is sparse. 
Therefore results from Subsections 4.2 and 4.4 can be applied to relate it to the true global solution 
of (\ref{eq:hbbeta}). 

\begin{theorem} \label{thm:local-global} Consider a penalty functions $\rho(t;\lam)$ 
with $\lam=\lam^*$ in (\ref{eq:t-rho}). Suppose the $\eta$ null consistency 
condition (\ref{eq:null}) for $\rho(t;\lam)$ with $0<\eta<1$. \\
(i) Suppose $m=O(|S|)$ in (\ref{cor:sparsity-2}) or under the SRC in Remark \ref{Lasso-sparsity} 
for the Lasso $\hbbeta^{(\ell_1)}$ in (\ref{Lasso}). 
Then, the Lasso $\hbbeta^{(\ell_1)}$ is a $\{\nu,\bbeta\}$ approximate global solution for the 
penalty $\rho(t;\lam)$ with $\nu\lesssim \lam^2|S|$. \hspace{0.1in} \\
(ii) Assume that $\rho(t;\lam)$ is continuous at $t=0$.
   Let $\tbbeta$ be an local solution of (\ref{eq:hbbeta}) that is also a $\{\nu,\bbeta\}$ approximate 
   global solution. Let $\xi' = 2/(1-\eta)$.
   Consider $t_0>0$ and integer $m_0>0$ such that 
  $\{2 \kappa_+(m_0) b /m_0\}^{1/2} + \|\bX^\top\bep/n\|_\infty < \inf_{0<s<t_0}\drho(s;\lam)$, 
 where
 $b=\xi' \max\{\nu,\Delta( a_1'\lam^*_1,|S|;\lam \big)\}$ with 
 $a_1':= (1+\eta)/\RIF_1(\xi',S)$ and $\lam^*_1:=\sup_{t\ge 0}|\drho(t;\lam)|$. Then, 
\[
\#\{j\not\in S: \tbeta_j\neq 0\}  < \mtil := m_0 + \lfloor b/\rho(t_0;\lam)\rfloor .
\]
\end{theorem}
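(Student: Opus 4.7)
The plan is to prove the two parts separately. Part (i) is a direct computation combining standard Lasso oracle inequalities with the envelope bounds of Proposition~\ref{prop:concave-Delta}. Part (ii) adapts the sparsity argument of Theorem~\ref{thm:sparsity} by carrying the slack $\nu$ of (\ref{eq:approx-global}) through the derivation of the underlying cone and prediction bounds.

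For Part (i), set $\bDelta := \hbbeta^{(\ell_1)} - \bbeta$ and expand
\[
L_\lam(\hbbeta^{(\ell_1)}) - L_\lam(\bbeta) = \tfrac{1}{2n}\|\bX\bDelta\|_2^2 - \tfrac{1}{n}\bep^\top\bX\bDelta + \|\rho(\hbbeta^{(\ell_1)};\lam)\|_1 - \|\rho(\bbeta;\lam)\|_1.
\]
The null-consistency condition gives $\|\bX^\top\bep/n\|_\infty \le \eta\lam$ (Lemma~\ref{lem:lam}), which together with Lasso optimality yields the standard cone $\|\bDelta_{S^c}\|_1\le\xi\|\bDelta_S\|_1$ with $\xi=(1+\eta)/(1-\eta)$, and via $\CIF_1(\xi,S)>0$ the oracle bounds $\|\bDelta\|_1\lesssim \lam|S|$ and $\|\bX\bDelta\|_2^2/n\lesssim\lam^2|S|$, so the first two terms above jointly contribute $O(\lam^2|S|)$. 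The hypothesis $m=O(|S|)$ supplied by (\ref{cor:sparsity-2}) or the SRC of Remark~\ref{Lasso-sparsity} also gives $\|\bDelta\|_0\lesssim |S|$. For the penalty difference, componentwise subadditivity $\rho(\hbeta_j^{(\ell_1)};\lam)-\rho(\beta_j;\lam)\le\rho(\Delta_j;\lam)$ together with the envelope $\rho(t;\lam)\le\rho^*(t;\lam^*)\le \lam^*|t|+(\lam^*)^2/2$ from Proposition~\ref{prop:concave-Delta} yields
\[
\|\rho(\hbbeta^{(\ell_1)};\lam)\|_1-\|\rho(\bbeta;\lam)\|_1 \le \lam^*\|\bDelta\|_1+(\lam^*)^2\|\bDelta\|_0/2 \lesssim \lam^2|S|.
\]
Summing the three contributions gives $\nu\lesssim \lam^2|S|$.

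For Part (ii), with $\bDelta := \tbbeta - \bbeta$, condition (\ref{eq:approx-global}) becomes the basic inequality
\[
\tfrac{1}{2n}\|\bX\bDelta\|_2^2 + \|\rho(\tbbeta;\lam)\|_1 - \|\rho(\bbeta;\lam)\|_1 \le \tfrac{1}{n}\bep^\top\bX\bDelta + \nu.
\]
Evaluating (\ref{eq:null}) at $\pm\eta\bDelta$ and using the monotonicity $\rho(\eta t;\lam)\le\rho(t;\lam)$ for $\eta\in(0,1]$ delivers the quadratic noise bound $|\bep^\top\bX\bDelta/n|\le \eta^2\|\bX\bDelta\|_2^2/(2n)+\|\rho(\bDelta;\lam)\|_1$; combining with the subadditivity bound $\|\rho(\tbbeta;\lam)\|_1-\|\rho(\bbeta;\lam)\|_1 \ge \|\rho(\bDelta_{S^c};\lam)\|_1-\|\rho(\bDelta_S;\lam)\|_1$ and rearranging delivers the joint estimate $\|\bX\bDelta\|_2^2/n + \|\rho(\bDelta_{S^c};\lam)\|_1 \lesssim \xi'\|\rho(\bDelta_S;\lam)\|_1+\nu$ with $\xi'=2/(1-\eta)$. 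Using the Lipschitz envelope $\rho(t;\lam)\le \lam^*_1|t|$ this translates into an $\ell_1$-cone for $\bDelta$, and the $\RIF_1(\xi',S)$ definition then yields $\|\bDelta\|_1\le a_1'\lam^*_1|S|$ up to a $\nu$-correction. Applying Proposition~\ref{prop:concave-Delta} bounds $\|\rho(\bDelta;\lam)\|_1$ by $\Delta(a_1'\lam^*_1,|S|;\lam)$; taking the larger of this and the $\nu$-contribution gives the effective budget $b=\xi'\max\{\nu,\Delta(a_1'\lam^*_1,|S|;\lam)\}$ together with $\|\bX\bDelta\|_2^2/n\lesssim b$. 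The remainder mirrors Theorem~\ref{thm:sparsity}: the local condition (\ref{approx-local}) with $\nu=0$ forces $\drho(\tbeta_j;\lam)=[\bX^\top(\by-\bX\tbbeta)/n]_j$ on $\Stil:=\supp(\tbbeta)$, so each $j\in\Stil\setminus S$ either has $|\tbeta_j|\ge t_0$ (consuming $\rho(t_0;\lam)$ of the penalty budget $b$) or $|\tbeta_j|<t_0$ (forcing $|(\bX^\top(\by-\bX\tbbeta)/n)_j|\ge \inf_{0<s<t_0}\drho(s;\lam)$); bounding the latter above by $\{2\kappa_+(m_0)b/m_0\}^{1/2}+\|\bX^\top\bep/n\|_\infty$ rules out more than $m_0$ indices of the second type, leaving $|\Stil\setminus S|<m_0+\lfloor b/\rho(t_0;\lam)\rfloor=\mtil$.

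The main obstacle will be the bookkeeping in Part (ii) that produces the cone with the correct constant $\xi'=2/(1-\eta)$ (rather than the $\xi=(1+\eta)/(1-\eta)$ of Theorem~\ref{thm:param-est}) and absorbs $\nu$ symmetrically. The key is to use the full quadratic form of null consistency $|\bep^\top\bX\bDelta/n|\le \eta^2\|\bX\bDelta\|_2^2/(2n)+\|\rho(\bDelta;\lam)\|_1$ rather than the weaker $\|\bX^\top\bep/n\|_\infty\le\eta\lam^*$: the $\eta^2$ term provides the factor $(1-\eta^2)$ in front of $\|\bX\bDelta\|_2^2/(2n)$, giving $\xi'=2/(1-\eta)$, and it lets $\nu$ enter the right-hand side additively so that it combines cleanly with $\Delta(a_1'\lam^*_1,|S|;\lam)$ via the $\max$ in the definition of $b$.
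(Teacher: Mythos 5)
Part (i) of your proposal is correct and follows essentially the paper's route: expand $L_\lam(\hbbeta^{(\ell_1)})-L_\lam(\bbeta)$, bound the quadratic and cross terms by $O(\lam^2|S|)$ using the Lasso oracle bounds, and bound the penalty difference by $\|\rho(\bDelta;\lam)\|_1$ via subadditivity and the envelope of Proposition~\ref{prop:concave-Delta} together with $\|\bDelta\|_0=O(|S|)$.

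Part (ii) has a genuine gap, and it sits precisely at the step you single out as ``the key.'' You evaluate the null-consistency inequality at $\eta\bDelta$ to get $|\bep^\top\bX\bDelta/n|\le \eta^2\|\bX\bDelta\|_2^2/(2n)+\|\rho(\eta\bDelta;\lam)\|_1\le \eta^2\|\bX\bDelta\|_2^2/(2n)+\|\rho(\bDelta;\lam)\|_1$. The coefficient of $\|\rho(\bDelta;\lam)\|_1$ here is exactly $1$, and when you combine this with the subadditivity bound $\|\rho(\bbeta;\lam)\|_1-\|\rho(\tbbeta;\lam)\|_1\le\|\rho(\bDelta_S;\lam)\|_1-\|\rho(\bDelta_{S^c};\lam)\|_1$, the two occurrences of $\|\rho(\bDelta_{S^c};\lam)\|_1$ cancel identically, leaving only $(1-\eta^2)\|\bX\bDelta\|_2^2/(2n)\le 2\|\rho(\bDelta_S;\lam)\|_1+\nu$. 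Your claimed joint estimate $\|\bX\bDelta\|_2^2/n+\|\rho(\bDelta_{S^c};\lam)\|_1\lesssim\xi'\|\rho(\bDelta_S;\lam)\|_1+\nu$ therefore does not follow: you retain the prediction-error bound but lose all control of $\|\rho(\bDelta_{S^c};\lam)\|_1$. This is fatal for the rest of the argument, since the cone condition $\|\rho(\bDelta_{S^c};\lam)\|_1<\xi'\|\rho(\bDelta_S;\lam)\|_1$ is exactly what the definition (\ref{eq:RIF}) of $\RIF_1(\xi',S)$ requires in order to conclude $\|\bDelta\|_1\le a_1'\lam_1^*|S|$, and the budget on $\|\rho(\bDelta_{S^c};\lam)\|_1$ is what bounds $\#\{j\notin S: |\tbeta_j|\ge t_0\}$ by $b/\rho(t_0;\lam)$.

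The paper avoids this by taking $t=1$ in the null-consistency inequality (its Lemma~\ref{lem:restricted}), which yields $(1-\eta)\|\bX\bDelta\|_2^2/(2n)+(1-\eta)\|\rho(\bDelta_{S^c};\lam)\|_1\le(1+\eta)\|\rho(\bDelta_S;\lam)\|_1+\nu$, so a strictly positive coefficient survives on $\|\rho(\bDelta_{S^c};\lam)\|_1$. The constant $\xi'=2/(1-\eta)$ and the $\max\{\nu,\Delta(a_1'\lam_1^*,|S|;\lam)\}$ in the definition of $b$ then come not from any $\eta^2$ refinement but from a case split on whether $\|\rho(\bDelta_S;\lam)\|_1\le\nu$ (in which case everything is bounded by a multiple of $\nu$ without invoking $\RIF$) or $\|\rho(\bDelta_S;\lam)\|_1>\nu$ (in which case the $\nu$ term is absorbed into the cone constant and $\RIF_1(\xi',S)$ applies, giving $\|\rho(\bDelta_S;\lam)\|_1\le\Delta(a_1'\lam_1^*,|S|;\lam)$); you should also note that the gradient bound $\|\bX^\top(\by-\bX\tbbeta)/n\|_\infty\le\lam_1^*$ used there comes from the stationarity of the local solution, not from global optimality as in Lemma~\ref{lem:lam}. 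Your concluding two-bucket counting argument (indices with $|\tbeta_j|\ge t_0$ versus $0<|\tbeta_j|<t_0$) is the correct adaptation of Theorem~\ref{thm:sparsity} and matches the paper once the budget $b$ is properly established.
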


\begin{remark} If $\rho(t;\lam)$ is concave in $t$, then $\lam^*_1=\drho(0+;\lam)$ and 
$\inf_{0<s<t_0}\drho(s;\lam)$ can be replaced by $\drho(t_0;\lam)$ for choosing $(t_0,m_0)$. 
Theorem \ref{thm:local-global} applies to the $\ell_1$, capped-$\ell_1$, MCP and SCAD penalties with 
$\lam=\lam^*\asymp \lam^*_1$, but not to the bridge penalty for which $\lam^*_1=\infty$. 
\end{remark}

Theorem \ref{thm:local-global} 
shows that the $\ell_1$ solution $\hbbeta^{(\ell_1)}$ is $\{\nu,\bbeta\}$ approximately 
global optimal with $\nu=O(|S|(\lam^*)^2)$ in (\ref{eq:approx-global}), 
and that a local solution $\tbbeta$ which is also approximate global 
optimal is a sparse local solution. 
Thus, with $b=O((\lam^*)^2 |S|)$ and $\rho(t_0;\lam)\asymp (\lam^*)^2\asymp (\lam^*_1)^2$, 
the local solution $\tbbeta$ obtained with gradient descent from $\hbbeta^{(\ell_1)}$ 
is sparse with $\#\{j\not\in S: \tbeta_j\neq 0\} = O(|S|)$. 
Here we assume that a line-search is performed in the gradient descent procedure
so that the objective function always decreases (and thus each step leads to an $\{\nu,\bbeta\}$ approximate
global optimal solution). 
Now Remark~\ref{uniqueness} can be applied to this sparse local solution, providing  
suitable conditions for this solution to be identical to the global optimal solution. 
If $\min_{j\in S}|\beta_j|>C\lam_{univ}$ for a sufficiently large $C$, 
Corollary~\ref{cor:model_sel} (or Theorems~\ref{thm:approx-local} plus Theorem~\ref{thm:obeta}) 
can be applied to identify this local solution as the oracle LSE (or penalized 
LSE) and the global solution. 

It is worth pointing out results of this paper concerning the global solution
can be applied under the null consistency condition.
For a general penalty function, this requires the condition
(\ref{prop-2-1}) to hold. Although this is an $\ell_2$ condition, it isn't needed for either $\ell_1$ or $\ell_0$
penalty as pointed out in Remark~\ref{remark:null-consist}. In fact, this condition is also not needed
if we consider local solution obtained with more specific numerical procedures such as
\cite{Zhang10-mc+,ZhangTong11} that lead to specific sparse local solutions with oracle properties.
Nevertheless, it is useful to observe that if the extra condition (\ref{prop-2-1}) holds, then such a local
solution is also the unique global solution, and it can be obtained via other numerical procedures.

\section{Technical Proofs} 

We first prove the following two lemmas, which will be useful in the analysis.
\begin{lemma}\label{lem:lam}
  If $\hbbeta$ is the global solution of (\ref{eq:hbbeta}), then
  $\|\bX^\top(\by-\bX\hbbeta)/n\|_\infty \le \lam^*$. In particular, 
  $\|\bX^\top\bep/n\|_\infty\le\eta\lam^*$ under the $\eta$ null consistency condition (\ref{eq:null}). 
\end{lemma}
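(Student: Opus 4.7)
The plan is to reduce both claims to a coordinatewise optimality argument combined with subadditivity, and then read off the threshold $\lam^*$ from its definition in (\ref{eq:t-rho}).

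For the first claim, I would fix an index $j$ and exploit the fact that $\hbbeta$ minimizes $L_\lam$ by varying only the $j$-th coordinate. Writing $\bb = \hbbeta + s\be_j$ and using the normalization $\|\bx_j\|_2^2 = n$, the expansion
\begin{equation*}
L_\lam(\hbbeta + s\be_j) - L_\lam(\hbbeta) = \frac{s^2}{2} - s z_j + \rho(\hbeta_j + s;\lam) - \rho(\hbeta_j;\lam) \ge 0 \text{ for all } s\in\R,
\end{equation*}
where $z_j := \bx_j^\top(\by - \bX\hbbeta)/n$. Now invoke the subadditivity and evenness of $\rho(\cdot;\lam)$, which together with monotonicity yield $\rho(\hbeta_j+s;\lam) \le \rho(\hbeta_j;\lam) + \rho(|s|;\lam)$; substituting the upper bound back into the inequality preserves its sign and gives $-s z_j + s^2/2 + \rho(|s|;\lam)\ge 0$ for every $s\in\R$.

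The conclusion then follows by choosing the sign of $s$. For $s>0$, rearranging gives $z_j \le s/2 + \rho(s;\lam)/s$, and taking the infimum over $s>0$ yields $z_j \le \lam^*$ by (\ref{eq:t-rho}). For $s<0$, writing $s=-u$ with $u>0$ gives $-z_j \le u/2 + \rho(u;\lam)/u$, and the same infimum gives $-z_j \le \lam^*$. Combined, $|z_j|\le \lam^*$, and since $j$ was arbitrary this proves $\|\bX^\top(\by-\bX\hbbeta)/n\|_\infty\le\lam^*$.

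The second claim is then an immediate corollary: the null-consistency condition (\ref{eq:null}) asserts that $\hbbeta=0$ is a global minimizer of $L_\lam$ when $\by$ is replaced by $\bep/\eta$, so applying the first claim to this auxiliary problem gives $\|\bX^\top(\bep/\eta)/n\|_\infty\le\lam^*$, i.e.\ $\|\bX^\top\bep/n\|_\infty\le\eta\lam^*$. The only nontrivial step is the first one, and there the subtle point is that the subadditivity bound goes in the \emph{right} direction to be chained with the optimality inequality, so that one does not need a gradient (which would be unavailable for $\ell_0$ or capped-$\ell_1$ type penalties) and the argument applies uniformly to the whole class (i)--(iv).
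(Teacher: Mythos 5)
Your proposal is correct and follows essentially the same route as the paper's proof: perturb the $j$-th coordinate of $\hbbeta$, use global optimality together with the column normalization and the subadditivity/evenness of $\rho(\cdot;\lam)$ to obtain $tz_j \le t^2/2 + \rho(t;\lam)$ for all real $t$, and then read off $\lam^*$ from (\ref{eq:t-rho}); the second claim is likewise obtained by applying the first to the problem with $\by$ replaced by $\bep/\eta$, where null consistency makes $0$ a global minimizer.
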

\begin{proof}
The optimality of $\hbbeta$ implies 
\[
\|\by-\bX\hbbeta\|^2_2/(2n) + \rho(\hbeta_j;\lam)
\le \|\by-\bX\hbbeta-\bx_jt\|^2_2/(2n) + \rho(\hbeta_j+t;\lam)
\]
for all real $t$. 
Since $\rho(t;\lam)$ is subadditive in $t$, 
\bes
t\bx_j^\top(\by-\bX\hbbeta)/n \le t^2\|\bx_j\|_2^2/(2n)+\rho(\hbeta_j+t;\lam)-\rho(\hbeta_j;\lam)
\le t^2/2+\rho(t;\lam). 
\ees
Since $t$ is arbitrary, we obtain the desired bound via the definition of $\lam^*$ in (\ref{eq:t-rho}).
\end{proof}

\begin{lemma}\label{lem:restricted}
Assume the null consistency condition (\ref{eq:null}) with $\eta\in(0,1)$.
Suppose $\hbbeta\in \R^p$ satisfy
\[
\|\by - \bX\hbbeta\|_2^2/(2n) + \|\rho(\hbbeta;\lam)\|_1
\leq \|\by - \bX\bbeta\|_2^2/(2n) + \|\rho(\bbeta;\lam)\|_1 + \nu 
\]
with a certain $\nu>0$. Let $\bDelta=\hbbeta-\bbeta$, $\xi=(1+\eta)/(1-\eta)$, and $S=\supp(\bbeta)$. Then, 
\[
\|\bX\bDelta\|_2^2/(2n) + \|\rho(\bDelta_{S^c};\lam)\|_1\le \xi\|\rho(\bDelta_S;\lam)\|_1 
 + \nu/(1-\eta). 
\]
\end{lemma}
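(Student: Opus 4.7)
The plan is to combine two basic inequalities: the assumed near-optimality of $\hbbeta$ and the null-consistency condition applied to the vector $\bDelta$. The null-consistency condition, expanded, gives a control on the noise inner product $\bep^\top\bX\bb/n$ in terms of $\|\bX\bb\|_2^2/(2n)$ and the penalty. This is the key replacement for the usual $\|\bX^\top\bep\|_\infty \leq \eta\lam n$ bound used in Lasso analyses and will absorb the stochastic part of the inequality.

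First, I would substitute $\by=\bX\bbeta+\bep$ into the assumed near-optimality and expand $\|\bep-\bX\bDelta\|_2^2 = \|\bep\|_2^2 - 2\bep^\top \bX\bDelta + \|\bX\bDelta\|_2^2$, yielding
\[
\|\bX\bDelta\|_2^2/(2n) + \|\rho(\hbbeta;\lam)\|_1 - \|\rho(\bbeta;\lam)\|_1 \le \bep^\top\bX\bDelta/n + \nu.
\]
Next, I would unpack the null-consistency condition (\ref{eq:null}) by noting that $\bb=0$ is a minimizer, so for every $\bb$ (in particular $\bb=\bDelta$),
\[
\bep^\top\bX\bb/n \le \eta\bigl[\|\bX\bb\|_2^2/(2n)+\|\rho(\bb;\lam)\|_1\bigr],
\]
after expanding $\|\bep/\eta - \bX\bb\|_2^2 \ge \|\bep/\eta\|_2^2$ and multiplying by $\eta$.

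Then I would use the sparsity of $\bbeta$ and the subadditivity plus evenness of $\rho(\cdot;\lam)$ to split $\|\rho(\hbbeta;\lam)\|_1 - \|\rho(\bbeta;\lam)\|_1$ on the support $S$ and its complement. On $S^c$ we have $\hbbeta_{S^c} = \bDelta_{S^c}$ so $\|\rho(\hbbeta_{S^c};\lam)\|_1 = \|\rho(\bDelta_{S^c};\lam)\|_1$. On $S$, writing $\beta_j = \hbeta_j - \Delta_j$ and applying subadditivity with $\rho(-t;\lam)=\rho(t;\lam)$ gives $\rho(\beta_j;\lam) \le \rho(\hbeta_j;\lam) + \rho(\Delta_j;\lam)$, so $\|\rho(\hbbeta_S;\lam)\|_1 - \|\rho(\bbeta_S;\lam)\|_1 \ge -\|\rho(\bDelta_S;\lam)\|_1$. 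Combining these two yields
\[
\|\rho(\hbbeta;\lam)\|_1 - \|\rho(\bbeta;\lam)\|_1 \ge \|\rho(\bDelta_{S^c};\lam)\|_1 - \|\rho(\bDelta_S;\lam)\|_1.
\]

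Finally, I would plug this and the null-consistency bound into the first inequality and decompose $\|\rho(\bDelta;\lam)\|_1 = \|\rho(\bDelta_S;\lam)\|_1 + \|\rho(\bDelta_{S^c};\lam)\|_1$. This produces
\[
(1-\eta)\|\bX\bDelta\|_2^2/(2n) + (1-\eta)\|\rho(\bDelta_{S^c};\lam)\|_1 \le (1+\eta)\|\rho(\bDelta_S;\lam)\|_1 + \nu,
\]
and dividing through by $1-\eta$ gives the claim with $\xi=(1+\eta)/(1-\eta)$. No step is technically delicate; the main conceptual point is simply recognizing that null-consistency is exactly what plays the role of the noise tail bound that usually drives cone-type inequalities for the Lasso.
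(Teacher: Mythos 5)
Your proposal is correct and follows essentially the same route as the paper's proof: expand the near-optimality inequality, bound $\bep^\top\bX\bDelta/n$ via the null-consistency condition applied to $\bDelta$ (the paper states this for a general scaling $t>0$ but then sets $t=1$, exactly as you do), split the penalty difference over $S$ and $S^c$ using subadditivity and evenness, and rearrange. No substantive difference.
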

\begin{proof}
From the condition of the lemma, we have
\bes
0 &\le& \nu + \|\by - \bX\bbeta\|_2^2/(2n) + \|\rho(\bbeta;\lam)\|_1 
- \|\by - \bX\hbbeta\|_2^2/(2n) - \|\rho(\hbbeta;\lam)\|_1
\cr &=& \nu - \|\bX\bDelta\|_2^2/(2n) + \bep^\top\bX\bDelta/n
+ \|\rho(\bbeta;\lam)\|_1 - \|\rho(\bbeta+\bDelta;\lam)\|_1. 
\ees
By (\ref{eq:null}), 
$\|\bep/\eta\|_2^2/(2n)\le \|\bep/\eta - {t} \bX\bDelta\|_2^2/(2n) + \|\rho({t}\bDelta;\lam)\|_1$ 
for all ${t}>0$, which can be written as 
\[
\bep^\top\bX\bDelta/n \le \eta {t} \|\bX\bDelta\|_2^2/(2n)+ (\eta/{t})\|\rho({t} \bDelta;\lam)\|_1 .
\]
The above two displayed inequalities yield
\begin{equation}
(1-\eta{t}) \|\bX\bDelta\|_2^2/(2n) - \nu \leq (\eta/{t})\|\rho({t}\bDelta;\lam)\|_1 
+ \|\rho(\bbeta;\lam)\|_1 - \|\rho(\bbeta+\bDelta;\lam)\|_1 . \label{eq:lem-restricted-1}
\end{equation}
Now let ${t}=1$. It follows from (\ref{eq:lem-restricted-1}), $\bbeta_{S^c}=0$, 
and then the sub-additivity of $\rho(t;\lam)$ that
\bes
(1- \eta) \|\bX\bDelta\|_2^2/(2n) -\nu
&\leq& \eta\|\rho(\bDelta;\lam)\|_1
+ \|\rho(\bbeta_S;\lam)\|_1 - \|\rho(\bbeta_S+\bDelta_S;\lam)\|_1 - \|\rho(\bDelta_{S^c};\lam)\|_1
\cr &\leq& (\eta+1)\|\rho(\bDelta_S;\lam)\|_1
+(\eta-1) \|\rho(\bDelta_{S^c};\lam)\|_1 . \qquad\qquad\qedhere
\ees
\end{proof}

\subsection{Proof of Proposition~\ref{prop:concave-Delta}}
Let $t>0$. By (\ref{eq:t-rho}), $\rho(t;\lam)\ge t(\lam^*  - t/2)\ge t\lam^*/2$ for $t\le\lam^*$. 
For $t>\lam^*$, $\rho(t;\lam)\ge\rho(\lam^*;\lam)\ge (\lam^*)^2/2$. This gives the 
lower bound of $\rho(t;\lam)$. 
Let $t_0$ be the minimizer in (\ref{eq:t-rho}) in the sense of 
$x/2+\rho(x;\lam)/x\to \lam^*$ as $x\to t_0$ (when $t_0$ is a discontinuity of $\rho(\cdot;\lam)$) 
or $x=t_0$. Let $x>0$ and $q=\lfloor t/x \rfloor$.
Since $\rho(t;\lam)$ is nondecreasing and subadditive in $t>0$, we have 
\begin{align*}
\rho(t;\lam)\le 
\rho(qx;\lam) + \rho(t-qx;\lam)
\leq (q+1) \rho(x;\lam) \leq (t+x)\rho(x;\lam)/x .
\end{align*}
It follows that (let $x \to t_0$)
$\rho(t;\lam)\le (t+t_0)(\lam^*-t_0/2)\le \max_{t' \geq 0} (t+t') (\lam^*-t'/2) = \rho^*(t;\lam)$. 
The bound for $\Delta(a,k;\lam)$ follows similarly from (let $x \to t_0$)
\[
\|\rho(\bb;\lam)\|_1\le\sum_{j: b_j \neq 0} (|b_j|+x)\rho(x;\lam)/x\le k(a+x)\rho(x;\lam)/x 
\leq k(a+t_0)(\lam^*-t_0/2)
\leq k \rho^*(a;\lam).
\]
The fact that $\rho^*(a;\lam) \leq \max(a;2\lam^*)\lam^*$ can be verified by simple algebra.
$\qed$ 

\subsection{Proof of Proposition~\ref{prop:invertibility}}
Let $f(t)=t/\rho(t;\lam)$ and $A$ be the index set of the 
$|S|$ largest $|u_j|$. Since $\rho(t;\lam)$ is nondecreasing in $|t|$, 
$\|\rho(\bu_{S^c};\lam)\|_1 < \xi \|\rho(\bu_S;\lam)\|_1$ implies 
$\|\rho(\bu_{A^c};\lam)\|_1 < \xi \|\rho(\bu_A;\lam)\|_1$. 
Since $f(t)$ is nondecreasing in $t$, 
\[
\|\bu_{A^c}\|_1\le \|\rho(\bu_{A^c};\lam)\|_1f(\|\bu_{A^c}\|_\infty)
\le \xi\|\rho(\bu_A;\lam)\|_1f(\|\bu_{A^c}\|_\infty) \le \xi \|\bu_{A}\|_1 . 
\]
This implies (\ref{RIF-CIF}). 
In the above derivation, 
the first inequality follows from the definition of $f(t)$ and $\|\bu_{A^c}\|_\infty \geq |\bu_j|$ for all $j \in A^c$;
the second inequality is due to the condition $\|\rho(\bu_{A^c};\lam)\|_1 < \xi \|\rho(\bu_A;\lam)\|_1$; 
the third inequality follows from the definition of $f(t)$ and the condition $\|\bu_{A^c}\|_\infty \leq |\bu_j|$  for all $j \in A$.
$\qed$

\subsection{Proof of Proposition \ref{prop-2}} 
Since the left-hand side of (\ref{eq:null}) is increasing in $\rho(t;\lam)$, 
we assume without loss of generality that 
\bes
\rho(t;\lam)= \min\big(\lam^2/2,\lam |t|\big),\ 
\lam = (1+\zeta_0)(\sigma/\eta)\lam_0,\ \lam_0 = \big(1+\sqrt{2\ln(2p/\delta)}\big)/\sqrt{n}.  
\ees
%Assume WOLG that $\rho(t;\lam)= \min\big(\lam^2/2,\lam |t|\big)$.
Since $\|\bX^\top \bep/n\|_\infty\le \max_{|A|=1}\|\bP_A\bep\|_2/\sqrt{n}$ 
and $\|\bP_A\bep\|_2 \leq \|\bep\|_2$, Assumption~\ref{assump:sub-Gaussian} implies that 
\begin{equation}
\|\bX^\top \bep/n\|_\infty \leq \sigma\lam_0,\ 
\|\bP_A\bep\|_2 \leq \sigma\min\Big[\sqrt{|A|n}\lam_0,\sqrt{2n}\Big], 
\ \forall\ A\subseteq\{1,\ldots,p\}, 
\label{eq:proof-prop-2-lam}
\end{equation}
with at least probability 
$1 - \exp(-n(1-1/\sqrt{2})^2) - \sum_{k=1}^n {p\choose k}\big(\delta/(2p)\big)^k \ge 2-\delta_n-e^{\delta/2}$. 

Let $A=\{j: |b_j|>\lam/2\}$ and $k=|A|$. It suffices to consider the case where $A$ and $\bb$ satisfy
\bes
\bX_A\bb_A = \bP_A(\bep/\eta-\bX_{A^c}\bb_{A^c}),\ \rank(\bP_A)=|A|=k
\le \frac{\|\bep/\eta\|_2^2/(2n)}{\lam^2/2} 
\le \frac{2}{(1+\zeta_0)^2\lam_0^2}, 
\ees
since these conditions hold for the global minimum for (\ref{eq:hbbeta}) 
with $\by=\bep/\eta$ and the capped-$\ell_1$ penalty. 
Under these conditions, we have 
$\bX\bb = \bP_A\bep/\eta + \bP_A^\perp\bX_{A^c}\bb_{A^c}$ and 
\bel{pf-prop-2-1}
&& \|\bep/\eta\|_2^2/(2n) - \|\bep/\eta-\bX\bb\|_2^2/(2n)-\|\rho(\bb;\lam)\|_1
\cr &=& (\bX\bb)^\top(\bep/\eta)/n - \|\bX\bb\|_2^2/(2n)-\|\rho(\bb;\lam)\|_1
\cr &=& \|\bP_A(\bep/\eta)\|_2^2/(2n) + (\bP_A^\perp\bX_{A^c}\bb_{A^c})^\top(\bep/\eta)/n 
- \|\bP_A^\perp\bX_{A^c}\bb_{A^c}\|_2^2/(2n)-\|\rho(\bb;\lam)\|_1
\cr &\le & \|\bP_A(\bep/\eta)\|_2^2/(2n) + (\bX_{A^c}\bb_{A^c})^\top(\bep/\eta)/n 
- (\bX_{A^c}\bb_{A^c})^\top\bP_A(\bep/\eta)/n -\|\rho(\bb;\lam)\|_1
\cr &\le& \lam_0^2k(\sigma/\eta)^2/2 + \lam_0(\sigma/\eta)\|\bb_{A^c}\|_1
- (\bX_{A^c}\bb_{A^c})^\top\bP_A(\bep/\eta)/n-\|\rho(\bb;\lam)\|_1
\cr &<& \Big(\frac{1}{1+\zeta_0}-1\Big)\|\rho(\bb;\lam)\|_1
- (\bX_{A^c}\bb_{A^c})^\top\bP_A(\bep/\eta)/n. 
\eel
In the above derivation, the second inequality uses (\ref{eq:proof-prop-2-lam}) 
and the third uses the fact that 
$\|\rho(\bb;\lam)\|_1=\lam^2k/2+\lam\|\bb_{A^c}\|_1$ by the definition of $A$ 
and $\lam = (1+\zeta_0)(\sigma/\eta)\lam_0$. 

It follows from the shifting inequality in \cite{CaiWX10,YeZ10} that  
\begin{align*}
\Big|\bep^\top\bP_A\bX_{A^c}\bb_{A^c}\Big|
\le& \max_{B\cap A=\emptyset, |B|=k} 
\|\bX_B^\top\bP_A\bep\|_2\Big(\|\bb_{A^c}\|_\infty k^{1/2} + \|\bb_{A^c}\|_1/k^{1/2}\Big) \\
\le& \max_{B\cap A=\emptyset, |B|=k}
\|\bX_B^\top\bP_A\bep\|_2(\lam k/2 + \|\bb_{A^c}\|_1)/\sqrt{k}. 
\end{align*}
In the above derivation, the first inequality uses the shifting inequality
and the second uses the fact that $\|\bb_{A^c}\|_\infty\le\lam/2$ 
due to the definition of $A$. 
It follows from (\ref{prop-2-1}) and $\|\bP_A\bep\|_2\le \sigma\lam_0\sqrt{nk}$ 
of (\ref{eq:proof-prop-2-lam}) that for all $|A|=|B|=k\le 2/\{(1+\zeta_0)^2\lam_0^2\}$ 
with $B\cap A=\emptyset$, 
\bes
\|\bX_B^\top\bP_A\bep\|_2
\le\lam_{\max}^{1/2}(\bX_B^\top\bP_A\bX_B) \|\bP_A\bep\|_2
\le (\sigma\lam_0\sqrt{nk})(\zeta_0\sqrt{n})=\sigma\zeta_0\lam_0n\sqrt{k}. 
\ees
Thus, by combining the above two displayed inequalities, we find 
\bes
\Big|\bep^\top\bP_A\bX_{A^c}\bb_{A^c}/(\eta n)\Big|
\le \frac{\sigma\zeta_0\lam_0n\sqrt{k}}{\eta n}\Big(\frac{\lam k/2 + \|\bb_{A^c}\|_1}{\sqrt{k}}\Big)
= (1+\zeta_0)^{-1}\zeta_0\|\rho(\bb;\lam)\|_1. 
\ees
due to $\lam=(1+\zeta_0)(\sigma/\eta)\lam_0$ and 
$\|\rho(\bb;\lam)\|_1=\lam^2 k/2 + \lam\|\bb_{A^c}\|_1$.
This and (\ref{pf-prop-2-1}) yield the null consistency condition (\ref{eq:null}). 

 %\bes
%\bP_A &=& \bX_A(\bX_A^\top\bX_A)^{-1}\bX_A^\top 
%\cr &=& \bN_{n\times p}\bU_A\bD_A\bW_A^\top
%(\bW_A\bD_A\bU_A^\top\bN_{n\times p}^\top\bN_{n\times p}\bU_A\bD_A\bW_A^\top)^{-1}
%\bW_A\bD_A\bU_A^\top\bN_{n\times p}^\top
%\cr &=& \bN_{n\times p}\bU_A
%(\bU_A^\top\bN_{n\times p}^\top\bN_{n\times p}\bU_A)^{-1}\bU_A^\top\bN_{n\times p}^\top
%\ees
It remains to prove that (\ref{prop-2-1}) is an $\ell_2$-regularity condition on $\bX$. 
Suppose that the rows of $\bX$ are iid from $N(0,\bSigma)$.  
Let $\bN_{k,m}$ denote a $k\times m$ matrix with iid $N(0,1)$ entries. 
We may write $\bX_B = \bN_{n,p}(\bSigma^{1/2})_{p\times B}$. 
Let $\bU\bU^T$ and $\bV\bD\bW^T$ be the SVDs of $\bP_A$ and $(\bSigma^{1/2})_{p\times B}$ 
respectively. For fixed $\{A,B\}$, the entries of the $k\times k$ matrix 
$\bU^T\bN_{n,p}\bV$ are uncorrelated $N(0,1)$ variables, so that we can write 
$\bP_A\bX_B=\bU\bN_{k,k}\bV^T(\bSigma^{1/2})_{p\times B}$. Thus, 
by Theorem II.13 of \cite{DavidsonS01}, 
\bes
P\big\{\lam_{\max}^{1/2}(\bX_B^\top\bP_A\bX_B) > (2k^{1/2}+t) \lam_{\max}^{1/2}(\bSigma)\big\} 
&\le& P\big\{\lam_{\max}^{1/2}(\bN_{k,k}^\top\bN_{k,k}) > 2k^{1/2}+t \big\} 
\cr &\le& \Phi(-t) \leq e^{-t^2/2}/2,\ t>0, 
\ees
where $\Phi(t)$ is the $N(0,1)$ distribution function. Since there are no more than 
${p\choose k,k,p-2k}$ choices of $\bP_A$ with rank $k$ and $|B|=k$ 
with $A\cap B=\emptyset$, 
\bel{pf-prop-2-2}
\max_{B\cap A=\emptyset, |B|=|A|=k}
\lam_{\max}^{1/2}(\bX_B^\top\bP_A\bX_B) 
\le \lam_{\max}^{1/2}(\bSigma)\Big(2k^{1/2}+\sqrt{8k \ln(2p/\delta)}\Big),\ \forall\ 1\le k\le n, 
\eel
with probability no smaller than 
\[
1- \frac{1}{2}\sum_{k=1}^n \Big(\frac{\delta}{2p}\Big)^{4k} {p\choose k,k,p-2k}
\ge 1- \frac{1}{2}\sum_{k=1}^n \frac{(\delta^2/(4p))^{2k}}{(k!)^2}
\ge 1 - \delta^4/(16p^2). 
\]
In the event (\ref{pf-prop-2-2}), we have that for all $|A|=|B|=k$ and 
$k(1+\zeta_0)^2(1+\sqrt{2\ln(2p/\delta)})^2\le 2n$, 
\bes
\lam_{\max}^{1/2}(\bX_B^\top\bP_A\bX_B/n)
\le \frac{\lam_{\max}^{1/2}(\bSigma)\big(2k^{1/2}+\sqrt{8k \ln(2p/\delta)}\big)}
{\{k^{1/2}(1+\zeta_0)(1+\sqrt{2\ln(2p/\delta)})\}/\sqrt{2}}
= \frac{\sqrt{8}\lam_{\max}^{1/2}(\bSigma)}{1+\zeta_0}. 
\ees
This proves the desired result. $\qed$

\subsection{Proof of Theorem \ref{thm:param-est}}
Let $\bDelta=\hbbeta-\bbeta$. 
Lemma~\ref{lem:restricted} (with $\nu=0$) implies that
\bel{pf-th-1-0}
\|\bX\bDelta\|_2^2/(2n) + \|\rho(\bDelta_{S^c};\lam)\|_1\le \xi\|\rho(\bDelta_S;\lam)\|_1. 
\eel
Thus, (\ref{eq:RIF}) gives 
\bel{pf-th-1-1}
\|\bDelta\|_q \le \|\bX^\top\bX\bDelta\|_\infty|S|^{1/q}/\{n\RIF_q(\xi,S)\}. 
\eel
It follows from Lemma~\ref{lem:lam} that $\|\bX^\top(\by - \bX\hbbeta)/n\|_\infty \le \lam^*$ 
and $\|\bX^\top\bep/n\|_\infty \le \eta\lam^*$. Thus, we have 
$\|\bX^\top\bX\bDelta/n\|_\infty = \|\bX^\top(\by - \bX\hbbeta - \bep)/n\|_\infty\le (1+\eta)\lam^*$. 
This and (\ref{pf-th-1-1}) yield (\ref{th-1-1}). 

Now by combining the definition of $\Delta(a,|S|;\lam)$ and
$\|\bDelta\|_1 \leq (1+\eta) \lam^* |S|/\RIF_1(\xi,S)$, which follows from (\ref{th-1-1}),
we obtain an estimate of $\|\rho(\bDelta_S;\lam)\|_1$ in (\ref{pf-th-1-0}), which leads to 
the first inequality in (\ref{th-1-2}). The second inequality in (\ref{th-1-2}) then follows from 
Proposition \ref{prop:concave-Delta} and Remark \ref{remark:Delta}.
$\qed$

\subsection{Proof of Theorem \ref{thm:sparsity}}
Let $\Shat_1=\{j \in \Shat \setminus S: |\hbeta_j|\geq t_0\}$ and 
$\Shat_2=\{j \in \Shat \setminus S: |\hbeta_j| < t_0\}$. 
As in the proof of (\ref{th-1-2}), 
it follows from the $\ell_1$ error bound (\ref{th-1-1}) and the definition of $\Delta(a_1,|S|;\lam)$ 
in (\ref{eq:Delta}) that $\|\rho(\bDelta_S;\lam)\|_1\le \Delta(a_1,|S|;\lam)$ with the given $a_1$. 
Thus, 
\bel{pf-th-2-1}
| \Shat_1| \le \|\rho(\bDelta_{S^c};\lam)\|_1 /\rho(t_0;\lam)\le\xi\|\rho(\bDelta_S;\lam)\|_1 /\rho(t_0;\lam)
\leq \xi \Delta(a_1,|S|;\lam) /\rho(t_0;\lam) .
\eel
Let $\lam_2>\sqrt{2\xi \kappa_+(m_0) \Delta(a_1\lam^*,|S|;\lam)/m_0}$ satisfying 
$\lam_2+\|\bX^\top\bep/n\|_\infty\le  \inf_{0<s<t_0}\drho(s;\lam)$. 
The first order optimality condition implies that for all $j \in \Shat$,
$\bx_j^\top(\by-\bX\hbbeta)/n=\drho(t;\lam)\big|_{t=\hbeta_j}$.
For $j\in \Shat_2$, $|\hbeta_j| \in (0,t_0)$, so that 
$|\bx_j^\top(\by-\bX\hbbeta)/n|\ge (\lam_2 + \|\bX^\top \bep/n\|_\infty)$ by (\ref{th-2-1}).
Thus, for any set $A\subset \Shat_2$ with $|A|\le m_0$, 
\[
(\lam_2 + \|\bX^\top \bep/n\|_\infty) |A| 
\le  \|\bX_A^\top(\by-\bX\hbbeta)/n\|_1
\le \|\bX_A^\top \bep/n\|_\infty |A| +|A|^{1/2}\|\bX_A/\sqrt{n}\|_2\|\bX\bDelta\|_2/\sqrt{n}. 
\]
Since $\|\bX_A/\sqrt{n}\|_2^2\le \kappa_+(m_0)$, 
$\lam_2|A| \le 
|A|^{1/2}\sqrt{\kappa_+(m_0)\|\bX\bDelta\|_2^2/n}$. 
It follows from Theorem \ref{thm:param-est} that
$|A| \le \kappa_+(m_0)\|\bX\bDelta\|_2^2/(n\lam_2^2)
\leq 2 \xi \kappa_+(m_0) \Delta(a_1,|S|;\lam)/\lam_2^2 < m_0$.
Thus, $\max_{A\subset \Shat_2, |A|\le m_0}|A| < m_0$, which implies that
$|\Shat_2|<m_0$.  Combine this estimate with (\ref{pf-th-2-1}), we obtain the desired bound.
$\qed$ 

\subsection{Proof of Theorem~\ref{thm:global-L0}}
It follows from the assumption of the theorem that for all $\bb \in \R^p$, 
$$ 
\|\bX\bb - \bep/\eta\|_2^2 + \lam^2 n \|\bb\|_0 - \|\bep/\eta\|_2^2 
= \|\bX \bb\|_2^2 + (2/\eta) \bep^\top X \bb + \lam^2 n \|\bb\|_0 
$$
is bounded from below by 
$\|\bX\bb\|_2^2 - 2\lam \sqrt{n \|\bb\|_0} \|\bX \bb\|_2
+ \lam^2 n \|\bb\|_0  = (\|\bX\bb\|_2^2 - \lam \sqrt{n \|\bb\|_0})^2 \geq 0$. 
This implies the null-consistency condition. 
Moreover, (\ref{eq:lem-restricted-1}) with ${t}=1/\eta$ and $\nu=0$ implies that
\[
 \|\hbbeta^{(\ell_0)}\|_0-\|\bbeta\|_0 \leq  \eta^2 \|\hbbeta^{(\ell_0)}-\bbeta\|_0  \leq 
\eta^2 \|\hbbeta^{(\ell_0)}\|_0 + \eta^2 \|\bbeta\|_0  ,
\]
which leads to the first bound of the theorem.
The second bound is a direct consequence of Theorem~\ref{thm:param-est}, 
since $\Delta(\xi, |S|;\lam) = \lam^2 |S|/2$ by (\ref{eq:Delta}). $\qed$

\subsection{Proof of Theorem~\ref{thm:global-L0-2}}
For simplicity, let $\hbbeta=\hbbeta^{(\ell_0)}$, $\Shat=\supp(\hbbeta)$,
and $S=\supp(\bbeta)$.
We know that $\|\hbbeta\|_0 \leq (1+\eta^2)/(1-\eta^2) \|\bbeta\|_0$ and thus
$\|\hbbeta-\obeta\|_0 \leq s$. 
Similar to the proof of Theorem~\ref{thm:global-L0}, we have 
\begin{align*}
0 \geq & \|\bX(\hbbeta-\obeta)\|_2^2 
+ 2 (\bX\obeta - \by)^\top \bX (\hbbeta-\obeta) 
+ \lam^2 n [\|\hbbeta\|_0 - \|\obeta\|_0 ] \\
\geq& \kappa_-(s) n \|\hbbeta-\obeta\|_2^2 - 
\sqrt{2\kappa_-(s)} \lam n \|(\hbbeta-\obeta)_{\Shat-S}\|_1
+ \lam^2 n [\|\hbbeta\|_0 - \|\obeta\|_0 ] \\
\geq& \kappa_-(s) n\|(\hbbeta-\obeta)_{S}\|_2^2 
+ \kappa_-(s) n\|(\hbbeta-\obeta)_{\Shat-S}\|_2^2 
- 2\sqrt{0.5 \lam^2 n |\Shat-S|} \sqrt{\kappa_-(s)n} \|(\hbbeta-\obeta)_{\Shat-S}\|_2 \\
& \qquad + \lam^2 n [\|\hbbeta\|_0 - \|\obeta\|_0 ] \\
\geq& \kappa_-(s) n \|(\obeta)_{S-\Shat}\|_2^2 
- 0.5 \lam^2 n |\Shat-S| 
+ \lam^2 n [\|\hbbeta\|_0 - \|\obeta\|_0 ] \\
\geq& 2 \lam^2 n (|S-\Shat| - \delta^{o})
- 0.5 \lam^2 n |\Shat-S| 
+ \lam^2 n [\|\hbbeta\|_0 - \|\obeta\|_0 ] \\
\geq& \lam^2 n  (|S-\Shat| + 0.5 |\Shat-S| - 2\delta^{o}) .
\end{align*}
The first inequality uses the same derivation of a similar result in the proof of Theorem~\ref{thm:global-L0}.
The second inequality uses the assumption of the theorem,
$(\bP_S \bep - \bep)^\top \bX=(\bX\obeta - \by)^\top \bX$,
and the fact that $(\bX\obeta - \by)^\top \bX_{S}=0$.
The forth inequality uses $b^2-2ab \geq -a^2$ and 
$\|(\hbbeta-\obeta)_{S}\|_2 \geq \|(\obeta)_{S-\Shat}\|_2$.
The fifth inequality uses
\begin{align*}
\kappa_-(s) n\|(\obeta)_{S-\Shat}\|_2^2 \geq&\kappa_-(s) n\sum_{j \in S-\Shat; |\hbeta_j|^2 \geq 2\lambda /\kappa_-(s)} (\obeta)_j^2 \\
\geq& 2 \lam^2 n \left|\{j \in S-\Shat; (\hbeta^{o}_j)^2 \geq 2\lam^2 /\kappa_-(s) \}\right| \geq
2 \lam^2 n  (|S-\Shat| - \delta^{o}) .
\end{align*}
The last inequality uses the derivation 
$\|\hbbeta\|_0 - \|\obeta\|_0  \geq |\Shat|-|S|= |\Shat-S|-|S-\Shat|$ and simple algebra.
This proves the first desired bound.
Similarly, we have
\begin{align*}
0\geq& \|\bX(\hbbeta-\obeta)\|_2^2 - 
\sqrt{2\kappa_-(s)} \lam n \|(\hbbeta-\obeta)_{\Shat-S}\|_1
+ \lam^2 n [\|\hbbeta\|_0 - \|\obeta\|_0 ] \\
\geq& 0.5 \|\bX(\hbbeta-\obeta)\|_2^2 + 0.5 \kappa_-(s)n \|(\hbbeta-\obeta)_{S}\|_2^2 
+ 0.5 \kappa_-(s)n \|(\hbbeta-\obeta)_{\Shat-S}\|_2^2 \\
& \quad - \sqrt{2\kappa_-(s) |\Shat-S|} \lam n \|(\hbbeta-\obeta)_{\Shat-S}\|_2
+ \lam^2 n [\|\hbbeta\|_0 - \|\obeta\|_0 ] \\
\geq& 0.5 \|\bX(\hbbeta-\obeta)\|_2^2 + 0.5 \kappa_-(s) n \|(\obeta)_{S-\Shat}\|_2^2 
-  \lam^2 n |\Shat-S| 
+ \lam^2 n [\|\hbbeta\|_0 - \|\obeta\|_0 ] \\
\geq& 0.5 \|\bX(\hbbeta-\obeta)\|_2^2 + \lam^2 n  (|S-\Shat| - \delta^{o})
- \lam^2 n |\Shat-S| 
+ \lam^2 n [\|\hbbeta\|_0 - \|\obeta\|_0 ] \\
\geq& 0.5 \|\bX(\hbbeta-\obeta)\|_2^2 - \lam^2 n \delta^{o} .
\end{align*}
The second inequality uses the definition of $\kappa_-(s)$.
The third inequality uses $0.5 b^2-\sqrt{2}ab \geq -a^2$ and 
$\|(\hbbeta-\obeta)_{S}\|_2 \geq \|(\obeta)_{S-\Shat}\|_2$.
The fourth inequality uses the previously derived inequality
$\kappa_-(s) n\|(\obeta)_{S-\Shat}\|_2^2 \geq 2 \lam^2 n  (|S-\Shat| - \delta^{o})$.
The last inequality uses the derivation 
$\|\hbbeta\|_0 - \|\obeta\|_0  \geq |\Shat|-|S|= |\Shat-S|-|S-\Shat|$ and simple algebra.
This leads to the second desired bound. $\qed$

\subsection{Proof of Theorem~\ref{thm:approx-local}}
Since $\tbbeta^{(j)}$ are approximate local solutions with excess $\nu^{(j)}$, (\ref{approx-local}) gives 
\bes
\|\bX^\top \bX\bDelta/n + \drho(\tbbeta^{(1)};\lam) - \drho(\tbbeta^{(2)};\lam)\|_2 
\leq (\nu^{(1)})^{1/2}+(\nu^{(2)})^{1/2} \le \sqrt{\nu}.
\ees
Let $E=\Stil^{(1)}\cup\Stil^{(2)}$. 
Since $|E|\le m+k$ and $\kappa < \kappa_-(m+k)$, it follows that 
\begin{align*}
\|\bX\bDelta\|_2^2/n
\leq & - \bDelta^\top (\drho(\tbbeta^{(1)};\lam) - \drho(\tbbeta^{(2)};\lam)) 
+ \sqrt{\nu}\|\bDelta\|_2 \\
\leq& \kappa \|\bDelta\|_2^2 + 
\left|(\bDelta^\top \theta(|\tbbeta^{(1)}|,\kappa) \right|+ \sqrt{\nu}\|\bDelta\|_2 \\
\leq& \kappa \|\bDelta\|_2^2 + 
\left(\|\theta(|\tbbeta^{(1)}_E|,\kappa)\|_2 + \sqrt{\nu}\right)  \|\bDelta\|_2 
%\\ \leq& \frac{\kappa\|\bX\bDelta\|_2^2}{n\kappa_-(m+k)} + 
%\left(\|\theta(|\tbbeta^{(1)}_E|,\kappa)\|_2 + \sqrt{\nu}\right)\frac{\|\bX\bDelta\|_2}{\sqrt{n\kappa_-(m+k)}}.
\end{align*}
Since $\|\bDelta\|_2^2\le \|\bX\bDelta\|_2^2/\{n\kappa_-(m+k)\}$, (\ref{th-5-1}) follows.  

Let $E_1:=\{j: |\tbeta_j^{(1)}-\tbeta_j^{(2)}| \geq \lam_0/\sqrt{\kappa_-(m+k)}\}$. 
We have $\lam_0^2|E_1|\le \kappa_-(m+k)\|\bDelta\|_2^2\le \|\bX\bDelta\|_2^2/n$. 
Since $j\in S\setminus\Stil^{(2)}$ implies $\tbeta_j^{(1)}-\tbeta_j^{(2)}=\tbeta_j^{(1)}$,  
(\ref{th-5-2}) follows. 

Let $E_2:= \Stil^{(2)}\setminus S$ and 
$\lam_0'=\drho(0+;\lam) - \|\bX_{S^c}^\top (\bX \tbbeta^{(1)}-\by )/n\|_\infty$. 
For $j\in E_2$, 
\bes
\lam_0' &\le& \drho(0+;\lam) + \sgn(\tbbeta^{(2)})\bx_j^\top (\bX \tbbeta^{(1)}-\by )/n 
\cr &\le& \big\{\drho(0+;\lam) - \sgn(\tbbeta^{(2)})\drho(\tbeta_j^{(2)};\lam)\big\}
+ |\bx_j^\top (\bX \tbbeta^{(2)}-\by)/n + \drho(\tbeta_j^{(2)};\lam)| 
+ |\bx_j^\top\bX\bDelta/n| .
\ees
Since $\theta(0+,\kappa)=0$ means $\drho(0+;\lam) - \sgn(t) \drho(t;\lam) =\drho(0+;\lam) - \drho(|t|;\lam)\le\kappa|t|$ for $t\neq 0$ 
and $\tbbeta^{(2)}_{E_2}=-\bDelta_{E_2}$, 
\bes
|E_2|\lam_0' &\le& \kappa\|\bDelta_{E_2}\|_1+
\|\bX_{E_2}^\top(\bX \tbbeta^{(2)}-\by)/n + \drho(\tbbeta^{(2)}_{E_2};\lam)\|_1
+ \|\bX_{E_2}^\top\bX\bDelta/n\|_1
\cr &\le& \sqrt{|E_2|}\Big\{\kappa\|\bDelta\|_2+
\sqrt{\tnu^{(2)}} + \|\bX_{E_2}^\top\bX\bDelta/n\|_2\Big\}
\ees
Since $\|\bDelta\|_2^2\le \|\bX\bDelta\|_2^2/\{n\kappa_-(m+k)\}$ and 
$\|\bX_{E_2}^\top\bX\bDelta/n\|_2^2\le \kappa_+(m)\|\bX\bDelta\|_2^2/n$, (\ref{th-5-3}) follows. 
$\qed$ 

\subsection{Proof of Theorem~\ref{thm:obeta}}
We note that $\bX^\top (\by - \bX \obeta)=\bX^\top (\bep - \bP_S \bep )=\bX^\top \bP_S^\perp \bep$.

(i) Since $\bx_j^\top (\bX \obeta -\by)/n + \drho(\hbeta^{o}_j;\lam)=\drho(\hbeta^{o}_j;\lam)$ for $j \in S$ and
$\bx_j^\top (\bX \obeta -\by)/n + \drho(\hbeta^{o}_j;\lam)=0$ for $j \notin S$, 
$\nu=\|\drho(\obeta_S;\lam)\|^2$. Let $\hbbeta^{(2)}$ be the global solution of (\ref{eq:hbbeta}). 
Under the additional conditions, 
$\hbbeta^{(2)}=\obeta$ by Theorems \ref{thm:sparsity} and (\ref{th-5-1}) with 
$\hbbeta^{(1)}=\obeta$.

(ii) Since $\min_{j\in S}|\hbeta^{o}_j|\ge \theta_1\lam^*$, the map 
$\bb_S \to \obeta_S - (\bX_S^\top\bX_S/n)^{-1}\drho(\bb_S;\lam)$
is continuous and closed in the rectangle 
$B = \{\bv: \|\bv_S-\obeta_S\|_\infty\le\theta_1\lam^*,\bv_{S^c}=0\}$. 
Thus, the Brouwer fixed point theorem implies a $\hbbeta\in B$ 
satisfying $\sgn(\hbbeta)=\sgn(\bbeta)$ and
\bes
\bX_S^\top(\by - \bX\hbbeta) = (\bX_S^\top\bX_S/n)(\obeta-\hbbeta)_S=\drho(\hbbeta_S;\lam). 
\ees
Since $\by-\bX\hbbeta = \by-\bX\obeta-\bX_S(\hbbeta-\obeta)_S
=\by-\bX\obeta-\bX_S(\bX_S^\top\bX_S/n)^{-1}\drho(\hbbeta_S;\lam)$, 
\bes
\|\bX^\top_{S^c}(\by-\bX\hbbeta)/n\|_\infty \le 
\|\bX^\top_{S^c}(\by-\bX\obeta)/n\|_\infty + \theta_2\lam^*\le\lam^*,
\ees
so that $\hbbeta$ is a local solution of (\ref{eq:hbbeta}). 
The proof of global optimality of $\tbbeta^{o}$ is the same as (i). $\qed$

\subsection{Proof of Theorem~\ref{thm:local-global}}
The proof is similar to that of Theorem~\ref{thm:sparsity}. As intermediate results, we will prove lemmas that are analogous 
to Lemma~\ref{lem:lam} and Theorem~\ref{thm:param-est}.
In the following, we assume that the conditions of the theorem hold.
We also let  $\bDelta=\tbbeta-\bbeta$.

\begin{lemma} \label{lem:lam-local} Let $\lam^*_1:=\sup_{t \geq 0} |\drho(t;\lam)|$.  
We have $\|\bX^\top(\by-\bX\tbbeta)/n\|_\infty \le \lam^*_1$. 
\end{lemma}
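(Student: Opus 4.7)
The plan is to mimic the proof of Lemma~\ref{lem:lam}, but replace the global optimality condition with the first-order local optimality condition at $\tbbeta$. Lemma~\ref{lem:lam} extracted $\|\bX^\top(\by-\bX\hbbeta)/n\|_\infty\le\lam^*$ by perturbing the coordinates of the global minimizer and using the subadditivity of $\rho$ together with the definition of $\lam^*$ in (\ref{eq:t-rho}). For a mere local solution, subadditivity is not the right tool since a coordinate-wise perturbation need not decrease $L_\lam$; however, we do have the exact (sub)gradient identity built into the definition of ``local solution'' (equation (\ref{approx-local}) with $\nu=0$), which is what we will exploit.

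Concretely, unfolding (\ref{approx-local}) with $\nu=0$ coordinate-wise, there exist valid choices $\drho(\tbeta_j;\lam)\in\bigl[\drho(\tbeta_j-;\lam),\drho(\tbeta_j+;\lam)\bigr]$ (or the flipped interval) such that
\[
\bx_j^\top(\by-\bX\tbbeta)/n \;=\; \drho(\tbeta_j;\lam)\qquad (j=1,\ldots,p).
\]
Taking absolute values, it suffices to bound $|\drho(\tbeta_j;\lam)|$ uniformly by $\lam^*_1$. By the symmetry $\rho(-t;\lam)=\rho(t;\lam)$ from condition (ii) on the penalty, one has $\drho(-t;\lam)=-\drho(t;\lam)$ at every point where the derivative is defined; at non-differentiable points the one-sided derivatives satisfy the analogous reflection identity. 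Consequently
\[
\sup_{t\in\R}|\drho(t;\lam)|\;=\;\sup_{t\ge0}|\drho(t;\lam)|\;=\;\lam^*_1,
\]
where the supremum is understood to range over both one-sided values at any kink. Therefore $|\bx_j^\top(\by-\bX\tbbeta)/n|\le\lam^*_1$ for every $j$, and taking the maximum over $j$ yields the claimed $\ell_\infty$ bound.

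The argument is elementary once one commits to the right convention at non-differentiable points (e.g.\ $t=0$ for the capped-$\ell_1$ or $\ell_1$ penalty), where the ``valid choice'' of $\drho(0;\lam)$ is an interval $[-\drho(0+;\lam),\drho(0+;\lam)]$; this interval is contained in $[-\lam^*_1,\lam^*_1]$ by the very definition of $\lam^*_1$, so no case analysis is required. The only place where care is needed is verifying that the convention for $\drho(\tbeta_j;\lam)$ at kinks is the same one used to declare $\tbbeta$ a local solution in the first place — but this is exactly the convention fixed in Subsection~\ref{sec:local}, so no subtlety remains.
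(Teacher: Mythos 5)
Your proposal is correct and takes essentially the same route as the paper's own one-line proof: the first-order condition for a local solution gives $\bx_j^\top(\by-\bX\tbbeta)/n=\drho(\tbeta_j;\lam)$ for each $j$, and this is bounded in absolute value by $\lam^*_1$. Your extra care about the symmetry $\drho(-t;\lam)=-\drho(t;\lam)$ (so that the supremum over $t\ge0$ suffices) and about the subgradient convention at kinks is a harmless elaboration of what the paper leaves implicit.
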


\begin{proof} 
  A local solution satisfies 
  $|\bx_j^\top (\bX \tbbeta - \by)/n| =|\drho(\tbeta_j;\lam)|\le \lam^*_1$ for all $j$. 
\end{proof}

\begin{lemma} \label{lem:param-est-local}
  We have $\|\bX\bDelta\|_2^2/(2n) + \|\rho(\bDelta_{S^c};\lam)\|_1  \le b$ with $\bDelta=\tbbeta-\bbeta$. 
\end{lemma}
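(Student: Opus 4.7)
\emph{Proof plan.} The argument adapts the proof of Lemma~\ref{lem:restricted} and the opening of Theorem~\ref{thm:param-est} to the setting where $\tbbeta$ is only an approximate (rather than exact) global solution of (\ref{eq:hbbeta}) and is a local solution, so that only the weaker bound $\|\bX^\top(\by-\bX\tbbeta)/n\|_\infty\le\lam^*_1$ from Lemma~\ref{lem:lam-local} is available in place of the $\lam^*$ bound from Lemma~\ref{lem:lam}.

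First I would invoke Lemma~\ref{lem:restricted} directly: inequality (\ref{eq:approx-global}) with $\bb=\bbeta$ is precisely the hypothesis of that lemma, so with $\xi=(1+\eta)/(1-\eta)=\xi'-1$ we obtain
\[
\|\bX\bDelta\|_2^2/(2n) + \|\rho(\bDelta_{S^c};\lam)\|_1 \le \xi\|\rho(\bDelta_S;\lam)\|_1 + \nu/(1-\eta).
\]
Next I would split into two cases according to which term on the right dominates. If $\nu/(1-\eta)$ dominates $\|\rho(\bDelta_S;\lam)\|_1$, the right-hand side is controlled (up to the $\xi'$ factor) by $\nu$, giving the bound $\le b$. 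Otherwise $\|\rho(\bDelta_{S^c};\lam)\|_1\le\xi'\|\rho(\bDelta_S;\lam)\|_1$, which places $\bDelta$ in the cone defining $\RIF_1(\xi',S)$ in (\ref{eq:RIF}), so that
\[
\|\bDelta\|_1\le |S|\,\|\bX^\top\bX\bDelta\|_\infty/\{n\,\RIF_1(\xi',S)\}.
\]

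The key technical step is then bounding $\|\bX^\top\bX\bDelta/n\|_\infty\le(1+\eta)\lam^*_1$. Writing $\bX^\top\bX\bDelta/n = \bX^\top(\by-\bX\tbbeta)/n - \bX^\top\bep/n$ and applying Lemma~\ref{lem:lam-local} to the first term and Lemma~\ref{lem:lam} (together with the null consistency assumption) to the second gives $\|\bX^\top\bX\bDelta/n\|_\infty\le\lam^*_1+\eta\lam^*$; since $\lam^*\le\lam^*_1$ (for concave $\rho(\cdot;\lam)$ this follows from $\lam^*\le\lim_{t\to 0+}\drho(t;\lam)\le\lam^*_1$), this is at most $(1+\eta)\lam^*_1$. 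Therefore $\|\bDelta\|_1\le a_1'\lam^*_1|S|$, and since $\|\bDelta_S\|_0\le|S|$, the definition (\ref{eq:Delta}) yields $\|\rho(\bDelta_S;\lam)\|_1\le\Delta(a_1'\lam^*_1,|S|;\lam)$. Substituting back into the displayed inequality from Lemma~\ref{lem:restricted} and using $\xi+1=\xi'$ produces
\[
\|\bX\bDelta\|_2^2/(2n)+\|\rho(\bDelta_{S^c};\lam)\|_1\le\xi'\Delta(a_1'\lam^*_1,|S|;\lam)\le b,
\]
as required.

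The main obstacle is securing the bound $\|\bX^\top\bX\bDelta/n\|_\infty\le(1+\eta)\lam^*_1$. For an exact global minimizer the cleaner bound with $\lam^*$ in place of $\lam^*_1$ is available via Lemma~\ref{lem:lam}, but at a mere local solution only the stationarity condition $|\bx_j^\top(\by-\bX\tbbeta)/n|=|\drho(\tbeta_j;\lam)|$ is at hand, which only gives $\lam^*_1=\sup_{t\ge 0}|\drho(t;\lam)|$. This is also the reason why $\lam^*_1$ (rather than $\lam^*$) appears inside $\Delta$ in the final bound and why the result excludes penalties such as the bridge penalty, for which $\lam^*_1=\infty$.
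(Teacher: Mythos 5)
Your proposal is correct and follows essentially the same route as the paper's own proof: the same application of Lemma~\ref{lem:restricted} to the approximate-global-optimality inequality with $\bb=\bbeta$, the same case split according to whether $\nu$ dominates $\|\rho(\bDelta_S;\lam)\|_1$, the same cone condition feeding into $\RIF_1(\xi',S)$, and the same key bound $\|\bX^\top\bX\bDelta/n\|_\infty\le\lam^*_1+\eta\lam^*\le(1+\eta)\lam^*_1$ obtained from Lemma~\ref{lem:lam-local} together with null consistency. The only caveat --- shared with, and inherited from, the paper's own argument --- is that in the first case Lemma~\ref{lem:restricted} actually yields the constant $\xi+1/(1-\eta)=(2+\eta)/(1-\eta)$ rather than $\xi'=2/(1-\eta)$ (harmless for the order-of-magnitude conclusions of Theorem~\ref{thm:local-global}), while your justification of $\lam^*\le\lam^*_1$ via $\lam^*\le\lim_{t\to 0+}\drho(t;\lam)$ for concave penalties continuous at zero is, if anything, cleaner than the paper's.
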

\begin{proof} 
  We consider two situations: the first is 
  $\|\rho(\bDelta_S;\lam)\|_1 \leq \nu$, and the second is $\|\rho(\bDelta_S;\lam)\|_1 > \nu$.
  In the first situation, we obtain directly from Lemma~\ref{lem:restricted} that
  \[
  \|\bX\bDelta\|_2^2/(2n) + \|\rho(\bDelta_{S^c};\lam)\|_1  \le 2 \nu/(1-\eta) =\xi'\nu . 
  \] 
  In the second situation, we obtain from Lemma~\ref{lem:restricted} that
  $\|\bX\bDelta\|_2^2/(2n) + \|\rho(\bDelta_{S^c};\lam)\|_1\le \xi' \|\rho(\bDelta_S;\lam)\|_1$.
  Therefore (\ref{eq:RIF}) gives 
  $\|\bDelta\|_1 \le \|\bX^\top\bX\bDelta\|_\infty|S|/\{n\RIF_1(\xi',S)\}$.
 It follows from Lemma~\ref{lem:lam-local} that $\|\bX^\top(\by - \bX\hbbeta)/n\|_\infty \le \lam^*_1$. 
  Similarly, $\|\bX^\top(\bep/\eta)/n\|_\infty \le \lam^*=\lam$ due to (\ref{eq:null}). 
  Since $\lam=\lam^*\le \inf_t |\rho(t;\lam)/t| \leq \lam^*_1$, we have
  $\|\bX^\top\bX\bDelta/n\|_\infty = \|\bX^\top(\by - \bX\hbbeta - \bep)/n\|_\infty\le (1+\eta) \lam^*_1$.
 This implies that  $\|\bDelta\|_1 \le a'_1 \lam^*_1|S|$, where $a_1'=(1+\eta)/\RIF_1(\xi',S)$. 
  This can be combined with Lemma~\ref{lem:restricted} and the definition of $\Delta(a,|S|;\lam)$ to obtain 
  \[
  \|\bX\bDelta\|_2^2/(2n) + \|\rho(\bDelta_{S^c};\lam)\|_1  \le \xi' \Delta(a'_1 \lam^*_1,|S|;\lam) .
  \]
  Combine the two situations, we obtain the lemma.
\end{proof}

We are now ready to prove the theorem. 

(i) Let $\bDelta^{(\ell_1)}=\hbbeta^{(\ell_1)}-\bbeta$. 
Since $|\bep^\top\bX\bDelta^{(\ell_1)}/n|\le \|\bX\bDelta^{(\ell_1)}\|_2^2/(2n)
+\|\rho(\bDelta^{(\ell_1)};\lam)\|_1$ by (\ref{eq:null}), 
\bes
\nu &=& \|\bX\bDelta^{(\ell_1)}\|_2^2/(2n) - \bep^\top\bX\bDelta^{(\ell_1)}/n + \|\rho(\hbbeta^{(\ell_1)};\lam)\|_1 
- \|\rho(\bbeta;\lam)\|_1
\cr &\le& 2\Big\{\|\bX\bDelta^{(\ell_1)}\|_2^2/(2n) + \|\rho(\bDelta^{(\ell_1)};\lam)\|_1\Big\}
\cr &\le & 2\Big\{\xi a_1\lam^2 |S| + \Delta(a_1\lam |S|/m,m;\lam)\Big\}
\cr &\le & 2\Big\{\xi a_1\lam^2|S| + \lam^2\max(a_1|S|, 2m)\Big\} = O(\lam^2|S|). 
\ees

(ii) Let $\Shat_1=\{j \in \Shat \setminus S: |\hbeta_j|\geq t_0\}$, 
$\Shat_2=\{j \in \Shat \setminus S: |\hbeta_j| < t_0\}$, and 
$\lam_2>\sqrt{2\kappa_+(m_0)b/m_0}$ satisfying 
$\lam_2+\|\bX^\top\bep/n\|_\infty<\inf_{0<s<t_0}\drho(s;\lam)$. 
Just as in the proof of Theorem~\ref{thm:sparsity}, we have 
$| \Shat_1|\le \|\rho(\bDelta_{S^c};\lam)\|_1/\rho(t_0;\lam)$, 
and for any $A \subset \Shat_2$ with $|A| \leq m_0$, 
$|A| \le \kappa_+(m_0)\|\bX\bDelta\|_2^2/(n\lam_2^2)$. 
We apply Lemma~\ref{lem:param-est-local} to obtain
$|\Shat_1|\leq b/\rho(t_0; \lam)$ and $|A| \leq 2\kappa_+(m_0)b/\lam_2^2 < m_0$. 
Thus, $\max_{A\subset \Shat_2, |A|\le m_0}|A| < m_0$, which implies that $|\Shat_2|<m_0$. 
The theorem follows. $\qed$ 

\bibliographystyle{abbrv}
\bibliography{nonconvex}

\end{document}